	\algnewcommand{\LeftComment}[1]{\Statex \(\triangleright\) #1}
\def \hlam {\hat{\lam}}
\newtheorem{proposition}{\hspace{0pt}\bf Proposition}
\newtheorem{example}{\hspace{0pt}\bf Example}
\newtheorem{theorem}{\hspace{0pt}\bf Theorem}
\newtheorem{corollary}{\hspace{0pt}\bf Corollary}
\newtheorem{remark}{\hspace{0pt}\bf Remark}
\newtheorem{definition}{\hspace{0pt}\bf Definition}
\setlist[description]{topsep = 8pt, itemsep = 8pt, leftmargin = 24pt, labelwidth = 24pt, labelsep= 0pt}
\begin{document}
\title{Algebraic Neural Networks: Stability to Deformations}
\author{Alejandro~Parada-Mayorga
        and Alejandro Ribeiro
\thanks{Department of Electrical and Systems Engineering, University of Pennsylvania. Email: alejopm@seas.upenn.edu, aribeiro@seas.upenn.edu.}}

\markboth{IEEE Transactions on Signal Processing (accepted)}%
{Shell \MakeLowercase{\textit{et. al.}}: Bare Demo of IEEEtran.cls for Journals}
\maketitle
%
%
%
%


\begin{abstract}
We study algebraic neural networks (AlgNNs) with commutative algebras which unify diverse architectures such as Euclidean convolutional neural networks, graph neural networks, and group neural networks under the umbrella of algebraic signal processing. An AlgNN is a stacked layered information processing structure where each layer is conformed by an algebra, a vector space and a homomorphism between the algebra and the space of endomorphisms of the vector space. Signals are modeled as elements of the vector space and are processed by convolutional filters that are defined as the images of the elements of the algebra under the action of the homomorphism. We analyze stability of algebraic filters and AlgNNs to deformations of the homomorphism and derive conditions on filters that lead to Lipschitz stable operators. We conclude that stable algebraic filters have frequency responses -- defined as eigenvalue domain representations -- whose derivative is inversely proportional to the frequency -- defined as eigenvalue magnitudes. It follows that for a given level of discriminability, AlgNNs are more stable than algebraic filters, thereby explaining their better empirical performance. This same phenomenon has been proven for Euclidean convolutional neural networks and graph neural networks. Our analysis shows that this is a deep algebraic property shared by a number of architectures.
\end{abstract}

\begin{IEEEkeywords}
Algebraic Neural Networks, algebraic signal processing, representation theory of algebras, convolutional neural networks (CNNs), graph neural networks (GNNs), stability, Fr\'echet differentiability.
\end{IEEEkeywords}
\IEEEpeerreviewmaketitle
%
%

\section{Introduction}
The overwhelming empirical evidence that shows the goodness of using convolutional neural networks (CNNs) and graph neural networks (GNNs) in machine learning raises interest in finding reasons that explain their performance. In this context, stability analyses of the operators representing the neural networks play a central role, with insights reported for both CNNs~\cite{mallat_ginvscatt, bruna_iscn,bruna_groupinvrepconvnn, bietti2017_stabilitycnn} and GNNs~\cite{zou_stability, gamabruna_diffscattongraphs, fern2019stability}. Although independent, these results are similar in form and nature. This fact raises the question of whether they descend from a common notion of stability and motivates the search for a framework where these results can be unified. 

Stability of CNNs is rooted in the notion of Lipschitz-continuity to the action of diffeomorphisms introduced in~\cite{mallat_ginvscatt} for the analysis of translation-invariant operators acting on $L^{2}(\mathbb{R}^{n})$. Although initially derived for scattering transforms \cite{mallat_ginvscatt, bruna_iscn} stability results are readily extendable to the analysis of convolutional neural networks \cite{bruna_groupinvrepconvnn, bietti2017_stabilitycnn}. For GNNs the problem of formulating stability conditions has been considered in~\cite{zou_stability, gamabruna_diffscattongraphs, fern2019stability}. In~\cite{gamabruna_diffscattongraphs} the notion of stability on graphs is considered in depth pointing out that the generalization of the conditions stated in~\cite{mallat_ginvscatt, bruna_iscn} is not straightforward for non smooth, non Euclidean domains, and as a way to quantify stability in GNNs the notion of \textit{metric stability} is considered using a diffusion operator to measure the perturbations or changes in the graphs. In~\cite{fern2019stability} a related notion of stability is used to provide concrete results about the stability on GNNs.

However different, stability results for CNNs and GNNs have uncanny similarities. For instance, they both focus on signal perturbations that are modeled as deformations of the signal \emph{domain}, they both analyze the effect of perturbations in the frequency (spectral) domain, and they both conclude that (graph or Euclidean) convolutional filters have instabilities associated with high frequency components (large eigenvalues). In our search for underlying common principles we adopt the formalism of \textit{algebraic signal processing} (ASP) \cite{algSP0}. 

In general, signals are elements of a vector space $\ccalM$ which we \emph{could} process with any linear transformation in the algebra of endomorphisms of $\ccalM$. In practice, learning is facilitated if we introduce a suitable class of convolutional filters to restrict the type of transformations that are allowable. In ASP, convolutional \emph{algebraic filters} are defined as elements of a more restrictive algebra $\ccalA$ that are mapped into the algebra of endomorphisms of $\ccalM$ through a homomorphism $\rho$ (Section \ref{sec_alg_filters}). In the case of signals supported on a graph with $n$ nodes, the vector space is made up of vectors of length $n$ and the space of endomorphisms is made up of square matrices of matching dimension. Choosing the algebra of polynomials of a single variable $t$ and choosing a homomorphism that maps $t$ to the Laplacian matrix of the graph results in graph convolutional filters expressed as polynomials of the graph's Laplacian. This is the usual definition of a graph filter~\cite{segarragf}. In the case of signals in time the vector space is that of square summable sequences and convolutions can be written as polynomials on the time shift operator. This is the standard definition of convolution for discrete time signals~\cite[Ch. 2]{oppenheim2013discrete}.

In this paper we leverage algebraic filters to introduce \textit{algebraic neural networks} (AlgNNs) and study their stability to deformations of the signal domain. In particular, the main contributions of this paper are: 

\begin{description}

\item [{\bf (C1)}] The definition of AlgNNs as layered information processing architectures in which individual layers are made up of algebraic convolutional filters (Section \ref{sec_Algebraic_NNs}).  

\item [{\bf (C2)}] The introduction of perturbation models (Section \ref{sec:perturbandstability}) and stability properties (Section \ref{sec:stabilitytheorems}) that are analogous to the notions of perturbation and stability considered in \cite{mallat_ginvscatt, bruna_iscn, bruna_groupinvrepconvnn, bietti2017_stabilitycnn, zou_stability, gamabruna_diffscattongraphs, fern2019stability}.

\item[{\bf(C3)}] The proof of stability theorems for algebraic filters (Theorems~\ref{theorem:HvsFrechet}-\ref{theorem:uppboundDHmultg} and Corollaries~\ref{corollary:DHboundsprcases} and \ref{corollary:stabmultgenfilt}) out of which stability theorems for AlgNNs follow (Theorems~\ref{theorem:stabilityAlgNN0} and~\ref{theorem:stabilityAlgNN1}).

\end{description}

Our results are meaningful for algebras with a small number of generators (see Definition \ref{def_generators}). This includes discrete time convolutions but does not include continuous time convolutions. Thus, we do not recover results in \cite{mallat_ginvscatt, bruna_iscn, bruna_groupinvrepconvnn, bietti2017_stabilitycnn} as particular cases of our theorems. Rather our conclusions for discrete time CNNs are analogous to the conclusions that \cite{mallat_ginvscatt, bruna_iscn, bruna_groupinvrepconvnn, bietti2017_stabilitycnn} reach for continuous time CNNs. This relatively minor technicality aside, our stability results for AlgNNs recover existing results for CNNs and GNNs (Section \ref{sec:stbcases}). Our results also extend to other types of convolutional architectures like multidimensional CNNs -- as used in image processing --, group neural networks and graphon neural networks. They also apply to as of yet unknown convolutional architectures. Indeed, the universality of stability properties is among the fundamental insights of this paper:

\begin{description}

\item [{\bf(I1)}] The stability properties of convolutional filters and neural networks are universal. 

\end{description}

This holds because the stability properties of convolutional architectures can be expressed in terms of the algebraic laws that govern the signal model in each layer as encoded in the algebra $\ccalA$. To explain this statement we mention that representations of algebras admit spectral decompositions (Section \ref{sec:spectopt}). These decompositions permit the definition of Fourier transforms of signals and, more germane to our discussions, frequency representations of algebraic filters. These representations are defined as isomorphisms that map generators of the algebra $\ccalA$ into scalar variables (Definitions \ref{def_freq_rep} and \ref{def_freq_rep_multiple_generators}). As such, frequency representations are functions with as many variables as generators as needed to generate the algebra. In cases of interest, this just means a function of a few variables, each of which we call a frequency. Remarkably, frequency representations depend on the choice of algebra but do not depend on the vector space where signals live. Ultimately, this is the reason why universal stability results are possible and it further leads to the following insights:

\begin{description}


\item[{\bf(I2)}] Although perturbations are considered on filter operators, stability is determined by restrictions to certain subsets of the algebra. These restrictions are expressed in terms of filters' frequency representations (Section\ref{sec:proofofTheorems}).

\item[{\bf(I3)}] Stability requires filter frequency responses that are flat for large values of the frequency variables. This limits the discriminability of algebraic filters (Section \ref{sec:discussion}).

\item[{\bf(I4)}] AlgNNs improve the stability vs discriminability tradeoff of algebraic filters because pointwise nonlinearities move signal energy towards lower frequencies where signals can be better discriminated by filters with a given level of stability (Section \ref{sec:discussion}). 

\end{description}

Insights (I3) and (I4) are the summary messages of this paper. We know from \cite{mallat_ginvscatt, bruna_iscn, bruna_groupinvrepconvnn, bietti2017_stabilitycnn} that (I3) and (I4) explain the increased performance of CNNs relative to convolutional filters. We know from \cite{zou_stability, gamabruna_diffscattongraphs, fern2019stability} that (I3) and (I4) explain the increased performance of GNNs relative to graph filters. As per (I1) we show here that the reason why these analogous properties hold is the shared algebraic structure of CNNs and GNNs. The universality of the result implies that (I3) and (I4) also explain performance improvements of CNNs with multidimensional inputs relative to multidimensional Euclidean convolutional filters, group neural networks relative to group filters, and graphon neural networks relative to graphon filters among any number of known and unknown convolutional information processing architectures. Our results are limited to commutative algebras with a small number of representers. Further work is needed to extend our results to these more general signal models (Section~\ref{sec:conclusions}).

%
%
%


%
\begin{figure}
\centering

\def \scale {1.4}
\def \unit  { \scale cm}

\tikzstyle{set} = [ rectangle,
                    rounded corners = 0.4*\unit,
                    fill=blue!15,
                    inner sep=0pt,
                    draw,
                    anchor = center ]

\tikzstyle{vector space} = [ set,
                             minimum width  = 2.0*\unit,
                             minimum height = 1.6*\unit]

\tikzstyle{endomorphisms} = [ vector space,
                              fill=black!15,
                              minimum height = 1.6*\unit]

\tikzstyle{algebra} = [ endomorphisms,
                        fill=red!15,
                        minimum width = 1.6*\unit]

\tikzstyle{dot} = [ circle,
                    minimum width  = 0.1*\unit,
                    fill=black,
                    inner sep=0pt,
                    draw,
                    anchor = center ]

{\small

\begin{tikzpicture}[-stealth, draw = black!99, scale = \scale]


   \path (0,0) node [vector space] (M) {};
   \path (M.south) ++ (0, 0.2) node [above] {$\ccalM$};
   
   \path (M) ++ (-0.5, +0.3) node [dot] (x) {};
   \path (x.south) node [below] {$\bbx$};             
   
   \path (M) ++ (+0.5, +0.3) node [dot] (ex) {};
   \path (ex.south) node [below] {$\rho(a)\bbx$};

   \path (M.north) ++ (0, 0.5) 
         node [endomorphisms, anchor=south] (End) {};
   \path (End.north) ++ (0.0, -0.2) node [below] {$\text{End}(\ccalM)$};   

   \path (End.center) ++ (0.0, -0.33) node [dot] (e) {};      
   \path (e) node [above right] {$\rho(a)$};    
   
   \path (End.west) + (-0.1,0.15) coordinate (c1);
   \path (End.east) + (+0.1,0.15) coordinate (c2);   
   \path [draw, -stealth] (x) .. controls (c1) and (c2) .. (ex);   

   \path (End.north west) ++ (-1.0, 0) 
         node [algebra, anchor = north east] (A) {};   
   \path (A.north) ++ (0,-0.2) node [below] {$\ccalA$};   
   
   \path (e-| A.center) node [dot] (a) {};      
   \path (a) node [below left] {$a$};    

   \path (End.west) + (-0.1,0.25) coordinate (c1);
   \path (End.east) + (+0.1,0.25) coordinate (c2);   
   \path [draw, -stealth, line width = 1.0, mygreen] 
         (a) edge [bend left] node [above] {$\rho~~~$} (e);

\end{tikzpicture}

}
\caption{Algebraic Signal Processing (ASP) Model. An ASP model is made up of a vector space $\ccalM$ where signals $\bbx$ live and an Algebra $\ccalA$ where filters $a$ live. The homomorphism $\rho$ ties the algebra and the vector space together by mapping the filter $a$ to the linear function $\rho(a)$ in the space of endomorphisms of $\ccalM$. The Algebra restricts the set of linear processing maps that can be applied to signals $\bbx$.}
\label{fig_1}
\end{figure}
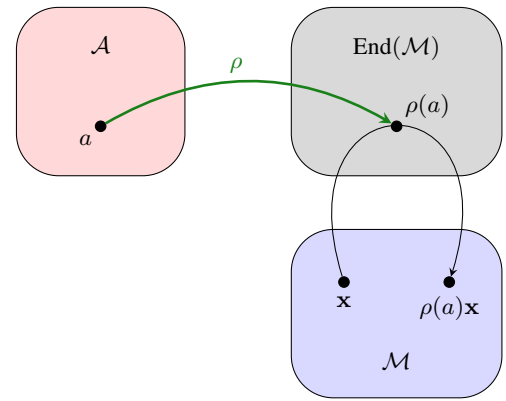


\section{Algebraic Filters} \label{sec_alg_filters}

Algebraic signal processing (ASP) provides a framework for understanding and generalizing traditional signal processing exploiting the representation theory of algebras~\cite{algSP0, algSP1, algSP2, algSP3}; see Figure \ref{fig_1}. In ASP, a signal model is defined as the triple
\begin{equation}\label{eqn_ASP_signal_model}
   (\ccalA,\ccalM,\rho),
\end{equation}   
in which $\ccalA$ is an associative algebra with unity, $\ccalM$ is a vector space with inner product, and $\rho:\ccalA\to\text{End}(\ccalM)$ is a homomorphism between the algebra $\ccalA$ and the set of endomorphisms of the vector space $\ccalM$. The elements in \eqref{eqn_ASP_signal_model} are tied together by the notion of a representation which we formally define next. 


\begin{definition}[Representation] \label{def_representation} A representation $(\ccalM,\rho)$ of the associative algebra $\ccalA$ is a vector space $\ccalM$ equipped with a homomorphism $\rho:\ccalA\to\text{End}(\ccalM)$, i.e., a linear map preserving multiplication and unit. 
\end{definition}


In an ASP model, signals  are elements of the vector space $\ccalM$, and filters are elements of the algebra $\ccalA$. Thus, the vector space $\ccalM$ determines the objects of interest and the algebra $\ccalA$ the rules of the operations that define a filter. The homomorphism $\rho$ translates the abstract operators $a\in\ccalA$ into concrete operators $\rho(a)$ that act on signals $\bbx$ to produce filter outputs 
\begin{equation}\label{eqn_ASP_filter_outputs}
   \bby = \rho(a) \bbx.
\end{equation}   
The algebraic filters in \eqref{eqn_ASP_filter_outputs} generalize the convolutional processing of time signals -- see Example \ref{ex_DTSP}. Our goal in this paper is to use them to generalize convolutional neural networks (Section \ref{sec_Algebraic_NNs}) and to study their fundamental stability properties (Section \ref{sec:perturbandstability}). Generators, which we formally define next, are important for the latter goal.


\begin{definition}[Generators] \label{def_generators} For an associative algebra with unity $\ccalA$ we say the set $\ccalG\subseteq\ccalA$ generates $\ccalA$ if all $a\in\ccalA$ can be represented as polynomial functions of the elements of $\ccalG$. We say elements $g\in\ccalG$ are generators of $\ccalA$ and we denote as $a=p_{\ccalA}(\ccalG)$ the polynomial that generates $a$.
\end{definition}


Definition \ref{def_generators} states that elements $a\in\ccalA$ can be built from the generating set as polynomials using the operations of the algebra. 

Given that representations connect the algebra $\ccalA$ to signals $\bbx$ as per Definition \ref{def_representation}, the representation $\rho(g)$ of a generator $g\in\ccalG$ will be of interest. In the context of ASP, these representations are called shift operators as we formally define next.


\begin{definition}[Shift Operators]\label{def_shift_operators} 
Let $(\ccalM,\rho)$ be a representation of the algebra $\ccalA$. Then, if $\ccalG\subseteq\ccalA$ is a generator set of $\ccalA$, the operators $\bbS = \rho(g)$ with $g\in\ccalG$ are called shift operators. The set of all admissible shift operators is denoted by $\ccalS$.
\end{definition}


Given that elements $a$ of the algebra are generated from elements $g$ of the generating set, it follows that filters $\rho(a)$ are generated from the set of shift operators $\bbS=\rho(g)$. In fact, if we have that $a=p_{\ccalA}(\ccalG)$ is the polynomial that generates $a$, the fact that $\rho$ is a homomorphism that preserves operations implies that the filter's instantiation $\rho(a)$ can be written as
\begin{equation}\label{eqn_filter_representation}
   \rho(a) = p_\ccalM \big(\rho(\ccalG)\big) 
           = p_\ccalM\big(\ccalS\big)
           = p\big(\ccalS\big),
\end{equation}
where the subindex $\ccalM$ signifies that the operations in \eqref{eqn_filter_representation} are those of the vector space $\ccalM$ -- in contrast to the polynomial $a=p_{\ccalA}(\ccalG)$ whose operations are those of the algebra $\ccalA$. In the last equality and for the rest of the paper we drop the subindices in the polynomials to simplify notation as it is generally understood from context to which set the independent variable of $p$ belongs.

We restrict attention to commutative algebras $\ccalA$. We also restrict the field $\mbF$ on which $\ccalM$ and $\ccalA$ are supported to be algebraically closed. If this doesn't hold our results apply to the corresponding algebraic extension. We point out that although not a formal requirement, our results are meaningful when the algebra $\ccalA$ has a set of generators with a small number of elements.

We present examples to clarify ideas. Readers may skip ahead since they are not needed to understand the rest of the paper.


\newcounter{dtsp}
\setcounter{dtsp}{\theexample} 
\setcounter{example}{\thedtsp} 

\begin{example}[{\bf Discrete Time Signal Processing}]\label{ex_DTSP}\normalfont Let $\ccalM = \ell^2$ be the space of square summable sequences $\bbx = \{x_n\}_{n\in\mbZ}$ and $\ccalA$ the algebra of polynomials generated by $g=t$ with elements $a = \sum_{k=0}^{K-1} h_k t^k$. Consider the time shift operator $S$ such that $S\bbx$ is the sequence with entries $(S\bbx)_n = x_{n-1}$. Define the homomorphism $\rho$ in which the generator $g=t$ is mapped to $\rho(g) = \rho(t) = S$. Then, the filter $a = \sum_{k=0}^{K-1} h_k t^k$ is mapped to the endomorphism [cf.\eqref{eqn_filter_representation}]
\begin{equation}\label{eqn_DTSP_filter_outputs0}
   \rho\left(\sum_{k=0}^{K-1} h_k t^k\right)
         \,=\, \sum_{k=0}^{K-1} h_k S^k.
\end{equation}   
Observe how the abstract polynomial $a = \sum_{k=0}^{K-1} h_k t^k$ is mapped to the polynomial $\rho(a)=\sum_{k=0}^{K-1} h_k S^k$. The latter is a concrete linear operator in the space of square summable sequences that we can use to process sequences $\bbx$ as per \eqref{eqn_ASP_filter_outputs}. This leads to the input output relationship
\begin{equation}\label{eqn_DTSP_filter_outputs}
   \bby  \,=\, \left(\sum_{k=0}^{K-1} h_k S^k\right) \bbx
         \,=\, \sum_{k=0}^{K-1} h_k S^k \bbx.
\end{equation}  
Since $(S\bbx)_n = x_{n-1}$ it follows that $(S^k\bbx)_n = x_{n-k}$ and that \eqref{eqn_DTSP_filter_outputs} representats a discrete time convolutional filter~\cite[Ch. 2]{oppenheim2013discrete}.
\end{example}


\newcounter{gsp}
\setcounter{gsp}{\theexample} 
\setcounter{example}{\thegsp} 

\begin{example}[{\bf Graph Signal Processing}]\label{ex_GSP}\normalfont We retain the algebra of polynomials as in Example \ref{ex_DTSP} but we change the space of signals to the set of complex vectors with $N$ entries, $\ccalM =\mbC^N$. We interpret components $x_n$ of $\bbx\in\ccalM = \mbC^N$ as being associated with nodes of a graph with matrix representation $\bbS\in\mbC^{N\times N}$. We consider the homomorphism $\rho$ in which the generator $g=t$ is mapped to the matrix representation $\bbS$ of the graph. Having chosen $\rho(t)=\bbS$ we
use \eqref{eqn_filter_representation} to write
\begin{equation}\label{eqn_GSP_filter_outputs0}
     \rho\left(\sum_{k=0}^{K-1} h_k t^k\right)
         \,=\, \sum_{k=0}^{K-1} h_k \bbS^k.
\end{equation}
Analogously to \eqref{eqn_DTSP_filter_outputs0}, the abstract polynomial $\sum_{k=0}^{K-1} h_k t^k$ is mapped to the concrete polynomial $\sum_{k=0}^{K-1} h_k \bbS^k$. The latter is an $N\times N$ matrix that can be applied to signals $\bbx$ to produce outputs
\begin{equation}\label{eqn_GSP_filter_outputs}
   \bby  \,=\, \left(\sum_{k=0}^{K-1} h_k \bbS^k\right) \bbx
         \,=\, \sum_{k=0}^{K-1} h_k \bbS^k \bbx.
\end{equation}
This is a representation of the graph convolutional filters used in graph signal processing (GSP)~\cite{gsp_graphfilters,ortega_gsp}. Observe that \eqref{eqn_GSP_filter_outputs} and \eqref{eqn_DTSP_filter_outputs} are similar but represent different operations. In \eqref{eqn_GSP_filter_outputs} $\bbx$ is a vector and $\bbS^k$ a matrix power. In \eqref{eqn_DTSP_filter_outputs} $\bbx$ is a sequence and $S^k$ is the composition of the time shift operator $S$. Their similarity arises from the common use of the algebra of polynomials. Their differences are because we use different vector spaces $\ccalM$ and different homomorphisms $\rho$.
\end{example}


\newcounter{dsp}
\setcounter{dsp}{\theexample} 
\setcounter{example}{\thedsp} 
\begin{example}[{\bf Discrete Signal Processing}] \label{ex_DSP}\normalfont We consider discrete time signals of length $N$ with circular convolutions. To do that we consider the vector space $\ccalM=\mbC^N$ and the algebra of polynomials modulo $t^{N}-1$. I.e., filters $a\in\ccalA$ are polynomials $a = \sum_{k=0}^{K-1} h_k t^k$ but we must have $K\leq N$ and monomial products use the rule $t^k = t^{k \mod N}$. We consider the directed cyclic matrix $\bbC$ with exactly $N$ nonzero entries $C_{n,m}=1$ for $m = (n-1) \mod N$. This matrix is such that $(\bbC \bbx)_n = x_{(n-1)\mod N}$. Using the homomorphism in which we map the generator $g=t$ to $\rho(g) = \rho(t) = \bbC$, filter instantiations take the form
\begin{equation}\label{eqn_DSP_filter_outputs0}
   \rho\left(\sum_{k=0}^{K-1} h_k t^k\right)
         \,=\, \sum_{k=0}^{K-1} h_k \bbC^k.
\end{equation}
The filter instantiation $\rho(a)=\sum_{k=0}^{K-1} h_k \bbC^k$ leads to the input output relationship
\begin{equation}\label{eqn_DSP_filter_outputs}
   \bby  \,=\, \left(\sum_{k=0}^{K-1} h_k \bbC^k\right) \bbx
         \,=\, \sum_{k=0}^{K-1} h_k \bbC^k \bbx.
\end{equation}
Since $(\bbC \bbx)_n = x_{(n-1)\mod N}$ we have that $(\bbC^k \bbx)_n = x_{(n-k)\mod N}$. Thus, \eqref{eqn_DSP_filter_outputs} is equivalent to the usual definition of circular convolutions~\cite[Ch. 8]{oppenheim2013discrete}. Observe that the homomorphism $\rho(a) = \sum_{k=0}^{K-1} h_k C^k$ is indeed a homomorphism because the cyclic matrix $\bbC$ satisfies $\bbC^k = \bbC^{k \mod N}$. This example illustrates that in some situations the choice of algebra and the choice of homomorphism are tied. \end{example}


As is clear from Examples \ref{ex_DTSP}-\ref{ex_DSP}, the effect of the operator $\rho(a)$ on a given signal $\bbx$ is determined by two factors: The filter $a\in\ccalA$ and the homomorphism $\rho$. The filter $a\in\ccalA$ indicates the \textit{laws and rules} to be used to manipulate the signal $\bbx$ and $\rho$ provides a \textit{physical realization} of the filter $a$ on the space $\ccalM$ to which $\bbx$ belongs. For instance, in these three examples the filter $a=1 + 2t$ indicates that the signal is to be added to a transformed version of the signal scaled by coefficient $2$. The homomorphism $\rho$ in Example \ref{ex_DTSP} dictates that the physical implementation of this transformation is a time shift. The homomorphism $\rho$ in Example \ref{ex_GSP} defines a transformation as a multiplication by $\bbS$ and in Example \ref{ex_DSP} the homomorphism entails a cyclic shift. We remark that in order to specify the physical effect of a filter it is always sufficient to specify the physical effect of the generators. In all three examples, the generator of the algebra is $g=t$. The respective effects of an arbitrary filter $a$ are determined once we specify that $\rho(t) = S$ in Example \ref{ex_DTSP}, $\rho(t) = \bbS$ in Example \ref{ex_GSP}, or $\rho(t) = C$ in Example \ref{ex_DSP}. 

The flexibility in the choice of algebra and homomorphism allows for a rich variety of signal processing frameworks. We highlight this richness with three more examples.


\newcounter{imgpr}
\setcounter{imgpr}{\theexample} 
\setcounter{example}{\theimgpr} 

\begin{example}[{\bf Image Processing}]\label{ex_imgpr}\normalfont 
We represent images as square summable sequences with two indexes, $\bbx=\{x_{n,m}\}_{n,m\in\mbZ}$. We define the horizontal translation operator $S_{\text{H}}$ such that $(S_{\text{H}}\bbx)_{mn} = x_{m,n-1}$ and the vertical translation operator $S_{\text{V}}$ such that $(S_{\text{V}}\bbx)_{mn} = x_{m-1,n}$. Filters to process images are elements of the algebra of polynomials of two variables $a = \sum_{k_{1}=0}^{K_{1}-1}\sum_{k_2=0}^{K_2-1} h_{k_1 k_2} t_{1}^{k_1} t_{2}^{k_2}$. This algebra has two generators $g_1=t_1$ and $g_2=t_2$ that we map to $\rho(t_1) = S_{\text{H}}$ and $\rho(t_2) = S_{\text{V}}$. This generator mapping defines the homomorphism $\rho$ in which filters are mapped to instances
\begin{align}\label{eqn_imgpr_filter_outputs}
   \rho\left(\sum_{k_{1}=0}^{K_{1}-1}\sum_{k_2=0}^{K_2-1} h_{k_1,k_2} t_{1}^{k_1} t_{2}^{k_2}\right)
      = \sum_{k_{1}=0}^{K_{1}-1}\sum_{k_2=0}^{K_2-1} h_{k_1 k_2} S_{\text{H}}^{k_1}S_{\text{V}}^{k_{2}}.
\end{align}
The composed operator $S_{\text{H}}^{k_1}S_{\text{V}}^{k_{2}}$ applied to a sequence $\bbx$ translates horizontal and vertical indexes by $k_1$ and $k_2$ indexes. Thus, applying the operator in the right hand side of \eqref{eqn_imgpr_filter_outputs} to an image $\bbx$ is equivalent to convolving the image with an 2-dimensional convolutional filter with coefficients $h_{k_1k_2}$.

\end{example}


\newcounter{groupsp}
\setcounter{groupsp}{\theexample} 
\setcounter{example}{\thegroupsp} 

\begin{example}[{\bf Signal Processing on Groups}]\normalfont \label{ex_groupsp} Let $\ccalM=\{\bbx: G\to\mathbb{C} \}=\{ \sum_{g\in G}\bbx(g)g\}$ be the set of functions defined on the group $G$ with values in $\mathbb{C}$ and $\ccalA=\ccalM$ the \textit{group algebra}. The homomorphism is given by $\rho(\bba)=L_{\bba}$, with $L_{\bba}\bbb=\bba\bbb$. Then, the action of $\rho$ on elements of $\ccalM$ is given by
\begin{equation}\label{eqn_group_filters}
\rho\left(\sum_{g\in G}\bba(g)g\right)\bbx
=
\sum_{g\in G}\bba(g)g\bbx=\sum_{g\in G}\sum_{h\in G}\bba(g)\bbx(h)gh,
\end{equation}
and making $u=gh$ we have that the filtering in~\eqref{eqn_ASP_filter_outputs} takes the form
\begin{equation}\label{eqn_group_convolution}
   \sum_{g,h\in G} \bba(g)\bbx(h)gh 
             =\sum_{u,h\in G}\bba(uh^{-1})\bbx(h)u.
\end{equation}
This is the standard representation of convolution of signals on groups~\cite{steinbergrepg,terrasFG,fulton1991representation}. We point out that \eqref{eqn_group_filters} and \eqref{eqn_group_convolution} hold for any group but that not all group algebras are commutative. Results in Section \ref{sec:stabilitytheorems} apply only when the group algebra is commutative.
\end{example}


\newcounter{wsp}
\setcounter{wsp}{\theexample} 
\setcounter{example}{\thewsp} 

\begin{example}[{\bf Graphon Signal Processing}]\normalfont \label{ex_wsp} A graphon $W(u,v):[0,1]^{2}\to [0,1]$ is a bounded symmetric measurable function and graphon signals are square summable functions $\bbx(u): [0,1] \to \mbC$. Graphons are intended to represent dense limits of graphs \cite{lovaz2012large, segarragraphon, ruiz2020graphon, alej2020graphon} and graphon signals dense limits of graph signals \cite{ruiz2020graphon, alej2020graphon}. To define graphon convolutional filters consider the algebra of polynomials of a single variable and define the graphon shift operator as
\begin{equation}\label{eqn_graphon_shift}
   \left(\ccalW\bbx\right)(u)=\int_{0}^{1}W(u,v)\bbx(v)dv,   
\end{equation}
Filters $a = \sum_{k=0}^{K-1} h_k t^k$ are mapped according to the homomorphism defined by the generator map $\rho(t) = \ccalW$ resulting on filters that define the input-output relationship
\begin{equation}\label{eqn_WSP_filter_outputs}
   \bby \,=\, \rho\left(\sum_{k=0}^{K-1} h_k t^k\right)\bbx
        \,=\, \sum_{k=0}^{K-1} h_k \ccalW^k \bbx.
\end{equation}
This is the same definition of graphon convolutional filters introduced in \cite{ruiz2020graphon} where they are shown to be limit objects of graph filters. \end{example}


The choice of $\ccalA$ and $\rho$ provides means to leverage our knowledge of the signal's domain in its processing. The convolutional filters in \eqref{eqn_DTSP_filter_outputs} leverage the shift invariance of time signals and the filters in \eqref{eqn_DSP_filter_outputs} the cyclic invariance of periodic signals. The group convolutional filters in \eqref{eqn_group_convolution} generalize shift invariance with respect to an arbitrary group action. The graph convolutional filters in \eqref{eqn_GSP_filter_outputs} engender signal processing that is independent of node labeling \cite{gamagnns} and the graphon filters in Example \ref{ex_wsp} a generalization of this notion to dense domains~\cite{ruiz2020graphon}. Leveraging this structure is instrumental in achieving scalable information processing. In the following section we explain how neural network architectures combine algebraic filters as defined in \eqref{eqn_ASP_filter_outputs} with pointwise nonlinearities to attain signal processing that inherits the invariance properties of the respective algebraic filters. 


\begin{remark}[{\bf Shift Equivariance of Algebraic Filters}]\label{rmk_shift_invariance}\normalfont In restricting the linear transformations that can be applied to signals, the Algebra $\ccalA$ reduces the complexity of the learning space. It is easier to learn coefficients of a filter than it is to learn entries of an arbitrary linear transform. In this statement, the \emph{equivariance} of algebraic filters to applications of shift operators is important. Equivariance to applications of the shift operator means that applying a shift operator at the input of an algebraic filter is equivalent to applying the same shift operator at the output. Namely, that for all filters $a=p(\ccalG)$ and shift operators $\bbS\in\ccalS$ we have
\begin{align}\label{eqn_equivariance_shift}
   \bbS p(\ccalS) \bbx = p(\ccalS) \bbS \bbx.
\end{align}
This holds true for any commutative algebra. Equivariance to application of the shift operator is important in discrete time signal processing, discrete signal processing, image processing, and group signal processing. It implies that algebraic filters are equivariant to time shifts, cyclic shifts, translations, and actions of the group, respectively. \end{remark}


\begin{remark}[{\bf Permutation Equivariance of Algebraic Filters}]\label{rmk_perm_invariance}\normalfont In learning with algebraic filters equivariance to permutations is also important. Equivariance to permutations means that a consistent permutation of the signal and the shift operator results in a consistent permutation of the output of the filter. Formally, let $\bbP\in\text{End}(\ccalM)$ be a permutation operator with adjoint $\bbP^T = \text{adj}(\bbP)$. A permutation of the signal $\bbx$ is $\tbx = \bbP^T\bbx$ and a consistent permutation of the shift operator $\bbS$ is the endomorphism $\tbS = \bbP^T \bbS \bbP$. If we let  $\tilde{\ccalS}$ denote the set of permuted shift operators we must have, 
\begin{align}\label{eqn_equivariance_permutation}
   p\big( \tilde{\ccalS} \big) \tbx = \bbP^T \big( p(\ccalS) \bbx\big).
\end{align}
I.e., the output of processing a permuted signal $\tbx$ with the filter instantiated on the set of permuted shift operators $\tilde{\ccalS}$ is equivalent to a permutation of the output signal that results from processing $\bbx$ with the filter instantiated on the shift operator $\bbS$. This is a consequence of the fact that the adjoint permutation $\bbP^T$ is the inverse of the permutation $\bbP$. Equivariance to permutations is important in graph signal processing and graphon signal processing. It implies processing that is independent of labeling. \end{remark}


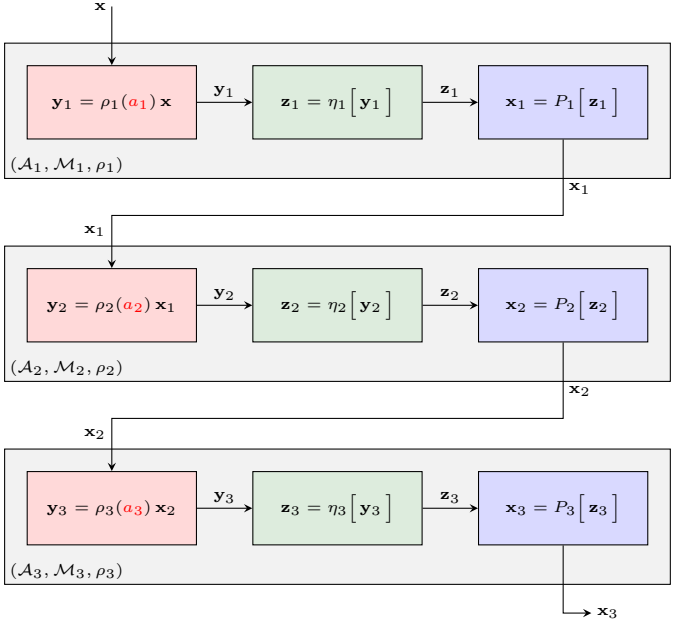
\begin{figure}
\centering

\definecolor{my_blue}{rgb}{0.0314, 0.3569, 1.0000}

\def \myfactor {0.75}
\def \unit  {\myfactor cm}

\tikzstyle{block} = [ rectangle,
                      minimum width = \unit,
                      minimum height = \unit,
                      fill = my_blue,
                      draw = black,
                      text = black]

\tikzstyle{filter} = [block,
                      minimum width  = 3.0*\unit,
                      minimum height = 1.3*\unit,
                      fill=red!15]

\tikzstyle{nonlinearity} = [ filter,
                             minimum width  = 3.0*\unit,
                             fill = mygreen!15]

\tikzstyle{pooling} = [ filter,
                             minimum width  = 3.0*\unit,
                             fill = blue!15]

\def \deltainput     {( 0.0,-1.7)}
\def \deltaoutput    {( 0.0,-1.2)}
\def \deltalayer     {3.6}
\def \deltaconnector {1.45}
\def \deltasigma     {( 4, 0.0)}

\def \one   {$\displaystyle{\mathbf{y}_{1}  = \rho_{1}(\red{a_1})\,\mathbf{x}}$}
\def \two   {$\displaystyle{\mathbf{y}_2  =  \rho_{2}(\red{a_2})\,\mathbf{x}_{1}}$}
\def \three {$\displaystyle{\mathbf{y}_3  = \rho_{3}(\red{a_3})\,\mathbf{x}_{2}}$}
\def \sigmaone   {$\displaystyle{\mathbf{z}_{1} = {\eta_{1}} \Big[\, \mathbf{y}_1 \, \Big]}$}
\def \sigmatwo   {$\displaystyle{\mathbf{z}_{2} = {\eta_{2}} \Big[\, \mathbf{y}_2 \, \Big]}$}
\def \sigmathree {$\displaystyle{\mathbf{z}_{3} = {\eta_{3}} \Big[\, \mathbf{y}_3 \, \Big]}$}
\def \proyone   {$\displaystyle{\mathbf{x}_{1} = {P_{1}} \Big[\, \mathbf{z}_1 \, \Big]}$}
\def \proytwo   {$\displaystyle{\mathbf{x}_{2} = {P_{2}} \Big[\, \mathbf{z}_2 \, \Big]}$}
\def \proythree   {$\displaystyle{\mathbf{x}_{3} = {P_{3}} \Big[\, \mathbf{z}_3 \, \Big]}$}

%
{\fontsize{6}{6}\selectfont\begin{tikzpicture}[scale = \myfactor]

  \pgfdeclarelayer{bg}     
  \pgfsetlayers{bg,main}   

  \node (input) [rectangle, minimum width = 0.1*\unit] {$\mathbf{x}$};
  \path (input.east)      ++ \deltainput node [filter]       (L1 Filter1) {\one};
  \path (L1 Filter1) ++ \deltasigma node [nonlinearity] (L1 F1)      {\sigmaone};
  \path (L1 F1) ++ \deltasigma node [pooling] (L1 F2)      {\proyone};
  \path[draw, -stealth] (L1 Filter1.east) -- node [above] {$\mathbf{y}_1$} (L1 F1.west);
  \path[draw, -stealth] (L1 F1.east) -- node [above] {$\mathbf{z}_1$} (L1 F2.west);

  \path (L1 Filter1) ++ (0,-\deltalayer) node [filter]       (L2 Filter1) {\two};
  \path (L2 Filter1) ++ \deltasigma      node [nonlinearity] (L2 F1)      {\sigmatwo};
  \path (L2 F1) ++ \deltasigma node [pooling] (L2 F2)      {\proytwo};
  \path[draw, -stealth] (L2 Filter1.east) --  node [above] {$\mathbf{y}_2$} (L2 F1.west);
  \path[draw, -stealth] (L2 F1.east) -- node [above] {$\mathbf{z}_2$} (L2 F2.west);  
  
  \path (L2 Filter1) ++ (0,-\deltalayer) node [filter]       (L3 Filter1) {\three};
  \path (L3 Filter1) ++ \deltasigma      node [nonlinearity] (L3 F1)      {\sigmathree};
  \path (L3 F1) ++ \deltasigma node [pooling] (L3 F2)      {\proythree};
  \path[draw, -stealth] (L3 Filter1.east) --  node [above] {$\mathbf{y}_3$} (L3 F1.west);
  \path[draw, -stealth] (L3 F1.east) -- node [above] {$\mathbf{z}_3$} (L3 F2.west); 

  \path[draw, -stealth] (input.east) -- (L1 Filter1.north);
  \path (L1 F2.south) ++ (0,-\deltaconnector) node [] (aux1) {};
  \path[draw, -stealth] (L1 F2.south) -- node [below right] {$\mathbf{x}_1$} (aux1.north) 
                                      --                         (aux1.north -| L2 Filter1.north) 
                                      -- node [above left]  {$\mathbf{x}_1$} (L2 Filter1.north);
  \path (L2 F2.south) ++ (0,-\deltaconnector) node [] (aux1) {};
  \path[draw, -stealth] (L2 F2.south) -- node [below right] {$\mathbf{x}_2$} (aux1.north) 
                                      --                         (aux1.north -| L2 Filter1.north) 
                                      -- node [above left]  {$\mathbf{x}_2$} (L3 Filter1.north);
  \path[draw, -stealth] (L3 F2.south) -- ++ \deltaoutput -- ++ (0.5, 0) 
                        node [right]{$\mathbf{x}_3$};

  \begin{pgfonlayer}{bg} 
      \path (L1 Filter1.west |- L1 F1.south) ++ (-0.4,-0.7)
           node [filter, anchor = south west,
                 fill = black!5, 
                 minimum width  = 11.8*\unit,
                 minimum height = 2.4*\unit,] 
        (layer)
        {}; 
       \path (layer.south west) ++ (0.0,0.0) node [above right] {$(\ccalA_1, \ccalM_1, \rho_1)$};
      \path (L1 Filter1.west |- L2 F1.south) ++ (-0.4,-0.7)
           node [filter, anchor = south west,
                 fill = black!5, 
                 minimum width  = 11.8*\unit,
                 minimum height = 2.4*\unit,] 
        (layer)
        {}; 
       \path (layer.south west) ++ (0.0,0.0) node [above right] {$(\ccalA_2, \ccalM_2, \rho_2)$};
      \path (L1 Filter1.west |- L3 F1.south)  ++ (-0.4,-0.7)
           node [filter, anchor = south west,
                 fill = black!5, 
                 minimum width  = 11.8*\unit,
                 minimum height = 2.4*\unit,] 
        (layer)
        {}; 
       \path (layer.south west) ++ (0.0,0.0) node [above right] {$(\ccalA_3, \ccalM_3, \rho_3)$};  \end{pgfonlayer}

\end{tikzpicture}} 
  \caption{Algebraic Neural Network $\Xi=\{(\ccalA_{\ell},\ccalM_{\ell},\rho_{\ell})\}_{\ell=1}^{3}$ with three layers indicating how the input signal $\bbx$ is processed by $\Xi$ and mapped into $\bbx_{3}$.}
  \label{fig_6}
\end{figure}


\section{Algebraic Neural Networks}\label{sec_Algebraic_NNs}
With the concept of algebraic filtering at hand we define an algebraic neural network (AlgNN) as a stacked layered structure (see Fig.~\ref{fig_6})
 in which each layer is composed by the triple $(\ccalA_{\ell},\ccalM_{\ell},\rho_{\ell})$, which is an algebraic signal model associated to each layer. Notice that $(\ccalM_{\ell},\rho_{\ell})$ is a representation of $\ccalA_{\ell}$. The mapping between layers is performed by the maps $\sigma_{\ell}:\ccalM_{\ell}\to\ccalM_{\ell+1}$ that perform those operations of point-wise nonlinearity and pooling. Then, the ouput from the layer $\ell$ in the AlgNN is given by
\begin{equation}\label{eq:xl}
\bbx_{\ell}=\sigma_{\ell}\left(\rho_{\ell}(a_{\ell})\bbx_{\ell-1}\right)
\end{equation}
where $a_{\ell}\in\ccalA_{\ell}$, which can be represented equivalently as
\begin{equation}\label{eq:interlayeropalgnn}
\bbx_{\ell}=\Phi (\bbx_{\ell-1},\ccalP_{\ell-1},\ccalS_{\ell-1}),
\end{equation}
where $\ccalP_{\ell}\subset\ccalA_{\ell}$ highlights the properties of the filters and $\ccalS_{\ell}$ is the set of shifts associated to $(\ccalM_{\ell},\rho_{\ell})$. Additionally, the term $\Phi\left(\bbx,\{ \ccalP_{\ell} \}_{1}^{L},\{ \ccalS_{\ell}\}_{1}^{L}\right)$ represents the total map associated to an AlgNN acting on a signal $\bbx$.


\medskip\noindent{\bf Convolutional Features.} The processing in each layer can be performed by means of several families of filters, which will lead to several \textit{features}. In particular the feature $f$ obtained in the layer $\ell$ is given by
\begin{equation}\label{eqn:alg_feat}
\bbx_{\ell}^{f}=\sigma_{\ell}\left(\sum_{g=1}^{F_{\ell}}\rho_{\ell}\left(a_{\ell}^{gf}\right)\bbx_{\ell-1}^{g}\right),
\end{equation}
where $a_{\ell}^{fg}$ is the filter in $\ccalA_{\ell}$ used to process the $g$-th feature $\bbx_{\ell-1}^{g}$ obtained from layer $\ell-1$ and $F_{\ell}$ is the number of features. 


\medskip\noindent{\bf Pooling.} As stated in~\cite{deeplearning_book} the pooling operation in CNNs helps to keep representations approximately invariant to small translations of an input signal, and also helps to improve the computational efficiency. In this work this operation is attributed  to the operator $\sigma_{\ell}$. In particular, we consider $\sigma_{\ell}=P_{\ell}\circ\eta_{\ell}$ where $P_{\ell}$ is a pooling operator and $\eta_{\ell}$ is a pointwise nonlinearity. The only property assumed from $\sigma_{\ell}$ is to be Lipschitz and to have zero as a fixed point, i.e. $\sigma_{\ell}(0)=0$. It is important to point out that $P_{\ell}$ projects elements from a given vector space into another.

We present some examples to clarify ideas.


\begin{example}[{\bf CNNs in Discrete Time}]\normalfont 
Traditional CNNs rely on the use of typical signal processing models and can be considered a particular case of an AlgNN  where the algebraic signal model is the same as in example~\ref{ex_DTSP}. Consequently, 
the $f$th feature in layer $\ell$ is given by
\begin{equation}
\bbx_{\ell}^{f}=\sigma_{\ell}\left(\sum_{g=1}^{F_{\ell}}\sum_{k=1}^{K}h_{\ell k}^{gf}S_{\ell}^{k}\bbx_{\ell-1}^{g}\right),
\label{eq:featureCNN}
\end{equation}
where $\rho_{\ell}(t) = S_{\ell}$. In this case $P_{\ell}$ is a sampling operator while typically $\eta_{\ell}(u)=\max\{0,u\}$.
\end{example}


\begin{example}[Graph Neural Networks]  \normalfont
In graph neural networks the algebraic signal model in each layer corresponds to the one discussed in example~\ref{ex_GSP}. Therefore, the $f$th feature in layer $\ell$ has the form
\begin{equation}
\bbx_{\ell}^{f}=\sigma_{\ell}\left(\sum_{g=1}^{F_{\ell}}\sum_{k=1}^{K}h_{\ell k}^{gf}\bbS_{\ell}^{k}\bbx_{\ell-1}^{g}\right),
\label{eq:featureCNN}
\end{equation}
where $\rho_{\ell}=\bbS_\ell$. Here $P_{\ell}$ can be a dimensionality reduction operator or a zeroing operator that nullify components of the signal keeping its dimensionality. A common choice of the nonlinearity function is given by $\eta_{\ell}(u)=\max\{0,u\}$.
\end{example}


\begin{example}[Group Neural Networks]\normalfont
In group neural networks the algebraic model is the same as specified in example~\ref{ex_groupsp}. Therefore, the $f$th feature in layer $\ell$ is given by
\begin{equation}
\bbx_{\ell}^{f}=\sigma_{\ell}\left(\sum_{n=1}^{F_{\ell}}\sum_{u,h\in G_{\ell}}\bba^{nf}_{(\ell)}(uh^{-1})\bbx_{\ell-1}(h)u\right).
\label{eq:featureGroupNN}
\end{equation}
Where $G_{\ell}$ is the group associated to the $\ell$th layer and $\bba^{nf}_{(\ell)}$ are the coefficients of the filter associated to the feature $f$ in layer $\ell$. In this case $P_{\ell}:L^{2}(G_{\ell})\to L^{2}(G_{\ell+1})$,  where $L^{2}(G)$ is the set of signals of finite energy defined on the group $G$. If the groups $G_{\ell}$ are finite $P_{\ell}$ can be conceived as a typical projection mapping between $\mathbb{R}^{\vert G_{\ell}\vert}\to\mathbb{R}^{\vert G_{\ell+1}\vert}$.
\end{example}

%
%

%
\section{Perturbations}\label{sec:perturbandstability}
In an ASP triple $(\ccalA,\ccalM,\rho)$, signals $\bbx\in\ccalM$ are observations of interest and the algebra $\ccalA$ defines the operations that are to be performed on signals. The homomorphism $\rho$ ties these two objects and, as such, is one we can consider as subject to model mismatch. In this paper we consider perturbations adhering to the following model.


\begin{definition}\label{def:perturbmodel}(ASP Model Perturbation)
Let $(\ccalA,\ccalM,\rho)$ be an ASP model with algebra elements generated by $g\in\ccalG$ (Definition \ref{def_generators}) and recall the definition of the shift operators $\bbS = \rho(g)$ (Definition \ref{def_shift_operators}). We say that $(\ccalA,\ccalM,\tdrho)$ is a perturbed ASP model if for all $a=p(\ccalG)$ we have that
\begin{equation}\label{eqn_def_perturbation_model_10}
   \tdrho(a) = p_\ccalM\big(\tdrho(g)\big) 
             = p_\ccalM\big(\tilde\ccalS\big)
             =p\big(\tilde\ccalS\big),
\end{equation}
where $\tilde\ccalS$ is a set of perturbed shift operators of the form
\begin{equation}\label{eqn_def_perturbation_model_20}
   \tbS = \bbS + \bbT(\bbS),
\end{equation}
for all shift operators $\bbS\in\ccalS$. \end{definition}


As per Definition \ref{def:perturbmodel}, an ASP perturbation model, is a perturbation of the homomorphism $\rho$ defined by a perturbation of the shift operators $\bbS$. Each shift operator $\bbS$ is perturbed to the shift operator $\tbS$ according to \eqref{eqn_def_perturbation_model_20} and this perturbation propagates to the filter $\rho(a)$ according to \eqref{eqn_def_perturbation_model_10}. An important technical remark is that the resulting mapping $\tdrho$ is not required to be a homomorphism -- although it can be, indeed, often is.

We point out that Definition \ref{def:perturbmodel} limits the perturbation of the homomorphism $\rho$ to perturbations of the shift operators. This is justifiable by practical considerations. In the case of graph signals a perturbation of the homomorphism models changes in the graph or errors in the measurement of edge weights. In the case of time signals, images, or groups, a perturbation of the homomorphism is an appropriate model of a diffeomorphism -- a small warping of the domain. See Section~\ref{sec:stbcases} for more details.

Of the other components of an algebraic filter, the algebra and the vector space define the choice of operations and therefore are not naturally subject to perturbation. Perturbations of the input signal $\bbx$ are possible in practice but their theoretical analysis is simple. Filters are linear functions of the input and the nonlinear operations of AlgNNs are Lipschitz. Thus, algebraic filters and AlgNNs are readily shown to be Lipschitz stable to perturbations of the input $\bbx$. 

%
%
\subsection{Perturbation Models}
In our subsequent analysis we consider perturbation models of the form
\begin{equation}\label{eqn_perturbation_model_absolute_plus_relative}
   \bbT(\bbS)=\bbT_{0} + \bbT_{1}\bbS,
\end{equation}
which is a generic model of small perturbations of a shift operator that involve an absolute perturbation $\bbT_{0}$ and a relative perturbation $\bbT_{1}\bbS$;  see \cite{fern2019stability}.  The $\bbT_{r}$ are compact normal operators with operator norm $\Vert\bbT_{r}\Vert<1$. Requiring $\Vert\bbT_{r}\Vert<1$ is a minor restriction as we are interested in small perturbations with $\Vert\bbT_{r}\Vert\ll1$. 

For the model in \eqref{eqn_perturbation_model_absolute_plus_relative} it is important to describe the commutativity of the shift operator $\bbS$ and the perturbation model operators $\bbT_r$. To that end, we write
\begin{equation}\label{eq:PrvsTr}
\mathbf{S}\mathbf{T}_{r}=\mathbf{T}_{cr}\mathbf{S}+\mathbf{S}\mathbf{P}_{r},
\end{equation}
where $\mathbf{T}_{cr}=\sum_{i}\mu_{i}\mathbf{u}_{i}\langle\mathbf{u}_{i},\cdot\rangle$, $\mu_{i}$ is the $i$th eigenvalue of $\mathbf{T}_{r}$, $\mathbf{u}_{i}$ is the $i$th eigenvector of $\mathbf{S}$, and $\langle,\rangle$ represents the inner product operation. As a consequence, we have that $\mathbf{S}\mathbf{T}_{rc}=\mathbf{T}_{rc}\mathbf{S}$ and $\Vert\mathbf{T}_{cr}\Vert=\Vert\mathbf{T}_{r}\Vert$. We define the commutation factor $\delta$ according to
\begin{equation}\label{eqn_commutation_factor}
   \| \bbP_{r}\|_{\text{F}}  \leq \delta \| \bbT_r\|,
\end{equation}
which is a measure of how far the operators $\bbS$ and $\bbT_r$ are from commuting with each other. Notice that $\delta=0$ implies $\mathbf{P}_{r}=\mathbf{0}$ and $\mathbf{T}_{r}=\mathbf{T}_{cr}$. The commutation factor $\delta$ in \eqref{eqn_commutation_factor} can be bounded as we show in Proposition \ref{prop_commutation_factor}. The specifics of this bound are not central to the results of Section \ref{sec:stabilitytheorems}.
Notice that when representations of an algebra $\ccalA$ with multiple generators $\{g_{i}\}_{i=1}^{m}$ are considered, we have that for $a\in\ccalA$ the operator 
$p(\rho(a))\in\text{End}(\mathcal{M})$ is a function of  $\rho(g_{i})=\mathbf{S}_{i}\in\text{End}(\mathcal{M})$ and therefore can be seen as the function $p: \text{End}(\mathcal{M})^{m}\to\text{End}(\mathcal{M})$, where $\text{End}(\mathcal{M})^{m}$ is the $m$-times cartesian product of $\text{End}(\mathcal{M})$.  In this scenario we use the notation $p(\mathbf{S})=p(\mathbf{S}_{1},\ldots,\mathbf{S}_{m})$  and  when considering the perturbation model in eqn.~(\ref{eqn_perturbation_model_absolute_plus_relative}) acting on $\mathbf{S}=(\mathbf{S}_{1},\ldots,\mathbf{S}_{m})$ we use the following notation $\mathbf{T}(\mathbf{S})=\left(\mathbf{T}(\mathbf{S}_{1}),\ldots,\mathbf{T}(\mathbf{S}_{m})\right)$ where $\mathbf{T}(\mathbf{S}_{i})=\mathbf{T}_{0,i}+\mathbf{T}_{1,i}\mathbf{S}_{i}$.




\section{Stability Theorems}\label{sec:stabilitytheorems}
The filters in Section \ref{sec_alg_filters} and the algebraic neural networks in Section \ref{sec_Algebraic_NNs} are operators acting on the space $\ccalM$. These operators are of the form $p(\mathbf{S})$, and their outputs depend on a filter set $\ccalP\subset\ccalA$ which is denoted as $p\in\mathcal{P}\subset\ccalA$, and the set of shift operators $\ccalS$, where $\mathbf{S}\in\mathcal{S}$. When we perturb the processing model according to Definition \ref{def:perturbmodel}, these operators are perturbed as well. The goal of this paper is to analyze these perturbations. In particular, our goal is to identify conditions for filters and algebraic neural networks to be stable in the sense of the following definition. 


\begin{definition}[Operator Stability]\label{def:stabilityoperators1} Given operators $p(\mathbf{S})$ and $p(\tilde{\mathbf{S}})$ defined on the processing models $(\ccalA,\ccalM,\rho)$ and $(\ccalA,\ccalM,\tdrho)$ (cf. Definition \ref{def:perturbmodel}) we say the operator $p(\mathbf{S})$ is Lipschitz stable if there exist constants $C_{0}, C_{1}>0$ such that 
\begin{align}\label{eq:stabilityoperators1}
   & \left \Vert p(\mathbf{S})\mathbf{x}  - p(\tilde{\mathbf{S}})\mathbf{x} \right\Vert
      \leq  \nonumber \\ & \quad~~
         \left[ C_{0} \sup_{\bbS\in\ccalS}\Vert\mathbf{T}(\mathbf{S})\Vert +   
                   C_{1}\sup_{\bbS\in\ccalS}\big\|D_{\bbT}(\bbS)\big\|
                      +\mathcal{O}\left(\Vert\mathbf{T}(\mathbf{S})\Vert^{2}\right)
         \right] 
            \big\| \bbx \big\|,
\end{align}
for all $\bbx\in\ccalM$. In \eqref{eq:stabilityoperators1} $D_{\bbT}(\bbS)$ is the Fr\'echet derivative of the perturbation operator $\bbT$. 
\end{definition}


When the perturbation value $\bbT(\bbS)$ and its derivative $D_{\bbT}(\bbS)$ are small, the inequality in \eqref{eq:stabilityoperators1} states that the operators $p(\mathbf{S})$ and $ p(\tilde{\mathbf{S}})$ are close uniformly across all inputs $\bbx$. Our stability theorems are presented in the next section, but at this point it is important to remark that algebraic filters are not always stable in the sense of \eqref{eq:stabilityoperators1}. We know that this is true because unstable counterexamples are known in the case of graph signal processing \cite{gamagnns} and the processing of time signals \cite{mallat_ginvscatt}. The best known example of an unstable filter is a high-pass filter in time when consider a dilation of the time line \cite{gamabruna_diffscattongraphs}. The same phenomenon is observed for graph signals when considering the dilation of graph shift operator \cite{gamagnns}.


\subsection{Stability of Algebraic Filters}

Taking into account that the notion of stability is meant to be satisfied by subsets of filters of the algebra and not necessarily the whole algebra, it is important to have a characterization of these subsets in simple terms. To do so, we introduce the notion of frequency representation of the elements of an algebra as follows.


\begin{definition}[{\bf Frequency Representation of a Filter}]\label{def_freq_rep}
Consider an algebra $\ccalA$ with a single generator $g$ so that for all $a\in\ccalA$ we can write $a=p(g)$. Let $\lam\in\mbF$ be a variable taking values on the field $\mbF$. We say that $p(\lam)$ is the frequency representation of the filter $a=p(g)$.
\end{definition}


Notice that the frequency representation of the elements of the algebra $\ccalA$ induces an isomorphism of algebras $\iota: \ccalA \mapsto \ccalA_{\mbF}$, where $\ccalA_{\mbF}$ is obtained when the variables of elements in $\ccalA$ are evaluated in $\mbF$. Then, we can characterize elements in $\ccalA$ by means of the properties of their frequency representations. In what follows we introduce a definition used to characterize subsets of filters in algebras with a single generator that are relevant in our analysis.


\begin{definition}
Let $p:\mbF\to\mbF$ be the frequency representation of an element in an algebra with a single generator. Then, it is said that $p$ is Lipschitz if there exists $L_{0}>0$ such that
\begin{equation}
\vert p(\lambda)-p(\mu)\vert\leq L_{0}\vert\lambda-\mu\vert
\end{equation}
for all $\lambda, \mu\in\mbF$. Additionally, it is said that $p(\lambda)$ is Lipschitz integral if there exists $L_{1}>0$ such that
\begin{equation}
\left\vert 
\lambda\frac{dp(\lambda)}{d\lambda}
\right\vert
\leq L_{1}
~\forall~\lambda.
\end{equation}
\end{definition}


In what follows, when considering subsets of a commutative algebra $\ccalA$, we denote by $\ccalA_{L_{0}}$ the subset of elements in $\ccalA$ that are Lipschitz with constant $L_{0}$ and by $\ccalA_{L_{1}}$ the subset of element of $\ccalA$ that are Lipschitz integral with constant $L_{1}$. 

We start our discussion on stability with a result for operators in algebraic models with a single generator. The result highlights the role of the Fr\'echet derivative of the map that relates the operator and its perturbed version.


\begin{theorem}\label{theorem:HvsFrechet}
Let $\ccalA$ be an algebra generated by $g$ and let $(\mathcal{M},\rho)$ be a representation of $\ccalA$ with $\rho(g)=\bbS\in\text{End}(\mathcal{M})$. Let $\tilde{\rho}(g)=\tilde{\bbS}\in\text{End}(\mathcal{M})$ where the pair
$(\mathcal{M},\tilde{\rho})$ is a perturbed version of $(\mathcal{M},\rho)$ and $\tilde{\bbS}$ is related to $\bbS$ by the perturbation model in eqn.~(\ref{eqn_def_perturbation_model_20}). Then, for any $p\in\ccalA$ we have
\begin{equation}
\left\Vert p(\bbS)\mathbf{x}-p(\tilde{\bbS})\mathbf{x}\right\Vert
\leq 
\Vert\mathbf{x}\Vert
\left(
\left\Vert D_{p}(\mathbf{S})\left\lbrace\mathbf{T}(\mathbf{S})\right\rbrace\right\Vert + \mathcal{O}\left(\Vert\mathbf{T}(\mathbf{S})\Vert^{2}
\right)\right)
\label{eq:HSoptbound}
\end{equation}
where $D_{p}(\mathbf{S})$ is the Fr\'echet derivative of $p$ on $\mathbf{S}$.
\end{theorem}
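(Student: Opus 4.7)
The claim is essentially the first order Taylor expansion of the map $\mathbf{S}\mapsto p(\mathbf{S})$ around the shift operator $\mathbf{S}$, evaluated at the perturbation $\mathbf{T}(\mathbf{S})$. So my plan is to lift the Fr\'echet expansion of $p:\text{End}(\mathcal{M})\rightarrow\text{End}(\mathcal{M})$ to the action on a signal $\mathbf{x}\in\mathcal{M}$ via the submultiplicativity of the operator norm.

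First I would write $\tilde{\mathbf{S}}=\mathbf{S}+\mathbf{T}(\mathbf{S})$ using the perturbation model of Definition~\ref{def:perturbmodel}, and observe that since $p_{\mathcal{A}}\in\mathcal{A}$ is generated from $g$, the induced map $p:\text{End}(\mathcal{M})\rightarrow\text{End}(\mathcal{M})$ defined by $p(\mathbf{S})=p_{\mathcal{M}}(\rho(g))$ is a polynomial (or more generally a convergent power series) in its operator argument and is therefore Fr\'echet differentiable at every $\mathbf{S}\in\text{End}(\mathcal{M})$. By the very definition of the Fr\'echet derivative, this yields
\begin{equation}
p(\mathbf{S}+\mathbf{T}(\mathbf{S}))=p(\mathbf{S})+D_{p}(\mathbf{S})\{\mathbf{T}(\mathbf{S})\}+\mathbf{R}(\mathbf{S},\mathbf{T}(\mathbf{S})),
\end{equation}
where the remainder satisfies $\left\Vert\mathbf{R}(\mathbf{S},\mathbf{T}(\mathbf{S}))\right\Vert=\mathcal{O}\!\left(\Vert\mathbf{T}(\mathbf{S})\Vert^{2}\right)$ as $\Vert\mathbf{T}(\mathbf{S})\Vert\rightarrow 0$. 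Rearranging, I get an identity at the operator level for $p(\mathbf{S})-p(\tilde{\mathbf{S}})$.

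Then I would apply both sides to $\mathbf{x}\in\mathcal{M}$, take norms, and use the triangle inequality together with the submultiplicative bound $\Vert A\mathbf{x}\Vert\leq\Vert A\Vert\,\Vert\mathbf{x}\Vert$ to obtain
\begin{equation}
\left\Vert p(\mathbf{S})\mathbf{x}-p(\tilde{\mathbf{S}})\mathbf{x}\right\Vert\leq\left(\left\Vert D_{p}(\mathbf{S})\{\mathbf{T}(\mathbf{S})\}\right\Vert+\left\Vert\mathbf{R}(\mathbf{S},\mathbf{T}(\mathbf{S}))\right\Vert\right)\Vert\mathbf{x}\Vert,
\end{equation}
which, absorbing the remainder into the big-$\mathcal{O}$ term, gives exactly~\eqref{eq:HSoptbound}.

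The only nontrivial ingredient, and what I expect to be the main technical point, is verifying Fr\'echet differentiability of $p$ and identifying the explicit form of $D_{p}(\mathbf{S})$; for a polynomial $p(t)=\sum_{k}h_{k}t^{k}$ one has
\begin{equation}
D_{p}(\mathbf{S})\{\mathbf{H}\}=\sum_{k}h_{k}\sum_{i=0}^{k-1}\mathbf{S}^{i}\mathbf{H}\mathbf{S}^{k-1-i},
\end{equation}
and convergence of the analogous series for power series elements of $\mathcal{A}$ has to be argued from the assumed regularity of the filters. Once differentiability is established, the rest of the argument is purely the definition of the Fr\'echet derivative plus the elementary norm inequalities above; no structural property of the algebra $\mathcal{A}$ beyond the existence of the generator $g$ and the homomorphism $\rho$ is required, which is why the statement holds uniformly for every $p_{\mathcal{A}}\in\mathcal{A}$.
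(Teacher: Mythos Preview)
Your proposal is correct and follows essentially the same route as the paper: write $\tilde{\bbS}=\bbS+\bbT(\bbS)$, invoke the Fr\'echet expansion $p(\bbS+\boldsymbol{\xi})=p(\bbS)+D_{p}(\bbS)\{\boldsymbol{\xi}\}+o(\Vert\boldsymbol{\xi}\Vert)$, bound the operator difference by the triangle inequality, and then pass to the signal via $\Vert A\bbx\Vert\leq\Vert A\Vert\,\Vert\bbx\Vert$ with $\boldsymbol{\xi}=\bbT(\bbS)$. Your explicit identification of $D_{p}(\bbS)\{\bbH\}=\sum_{k}h_{k}\sum_{i=0}^{k-1}\bbS^{i}\bbH\bbS^{k-1-i}$ is also the one the paper derives (in an appendix) and uses downstream.
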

\begin{proof}
See Section~\ref{prooftheoremHvsFrechet}
\end{proof}
Theorem~\ref{theorem:HvsFrechet} highlights an important point, the difference between two operators obtained from the same elements in the algebra is bounded by the Fr\'echet derivative of $p(\bbS)$ which depends of the properties of the elements in $\ccalA$. In particular, we can see that an upper bound in the term $\left\Vert D_{p}(\mathbf{S})\bbT(\bbS)\right\Vert$ depends on how the the operator $D_{p}(\mathbf{S})$ acts on the perturbation $\bbT(\bbS)$. Then, $D_{p}(\mathbf{S})$ will determine whether $p(\bbS)$ is stable under the effect of $\bbT(\bbS)$, or in other words the properties of $p$ act on the perturbation via the operator $D_{p}(\mathbf{S})$. Additionally, notice that eqn.~(\ref{eq:HSoptbound}) is satisfied for any $\bbT(\bbS)$ if $D_{p}(\mathbf{S})$ exists.

In the following theorems we show how these terms are related to $\bbT(\bbS)$ and its Fr\'echet derivative $D_{\bbT}$.


\begin{theorem}\label{theorem:uppboundDH}
Let $\ccalA$ be an algebra with one generator element $g$ and let $(\mathcal{M},\rho)$ be a finite or countable infinite dimensional representation of $\ccalA$. Let  $(\mathcal{M},\tilde{\rho})$ be a perturbed version of $(\mathcal{M},\rho)$ associated to the perturbation model in eqn.~(\ref{eqn_perturbation_model_absolute_plus_relative}).  If $p\in\ccalA_{L_{0}}\cap\ccalA_{L_{1}}$, then
\begin{equation}
\left\Vert D_{p}\bbT(\bbS)\right\Vert
\leq(1+\delta)
\left(L_{0}\sup_{\bbS}\Vert\bbT(\bbS)\Vert +L_{1}\sup_{\bbS}\Vert D_{\bbT}(\bbS)\Vert\right)
\label{eq:DHTS}
\end{equation}
 \end{theorem}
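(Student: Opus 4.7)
The plan is to exploit the spectral representation of $\mathbf{S}$ and the divided-difference form of the Fr\'echet derivative, then split the perturbation into its $\mathbf{S}$-commuting and non-commuting parts via eqn.~(\ref{eq:PrvsTr}). Since the representation is finite- or countably infinite-dimensional and each $p\in\mathcal{A}$ is a polynomial (or convergent power series) in the single generator $g$, I would work in an eigenbasis $\{\mathbf{u}_{i}\}$ of $\mathbf{S}$ where $\mathbf{S}=\sum_{i}\lambda_{i}\mathbf{u}_{i}\langle\mathbf{u}_{i},\cdot\rangle$. In this basis the Fr\'echet derivative admits the divided-difference representation
\begin{equation*}
\langle\mathbf{u}_{i},D_{p}(\mathbf{S})\{\mathbf{V}\}\mathbf{u}_{j}\rangle=\pi_{p}(\lambda_{i},\lambda_{j})\,\langle\mathbf{u}_{i},\mathbf{V}\mathbf{u}_{j}\rangle,
\end{equation*}
with $\pi_{p}(\lambda,\mu)=(p(\lambda)-p(\mu))/(\lambda-\mu)$ off the diagonal and $\pi_{p}(\lambda,\lambda)=p'(\lambda)$. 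The Lipschitz hypothesis $p\in\mathcal{A}_{L_{0}}$ gives $|\pi_{p}|\leq L_{0}$ pointwise, which yields a Daleckii--Krein style bound $\|D_{p}(\mathbf{S})\{\mathbf{V}\}\|\leq L_{0}\|\mathbf{V}\|$; in particular $\|p'(\mathbf{S})\|\leq L_{0}$. The Lipschitz-integral hypothesis $p\in\mathcal{A}_{L_{1}}$ delivers $\|\mathbf{S}p'(\mathbf{S})\|\leq L_{1}$.

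Using eqn.~(\ref{eq:PrvsTr}) I would then write $\mathbf{T}_{r}=\mathbf{T}_{cr}+\mathbf{P}_{r}$, where $\mathbf{T}_{cr}$ shares the eigenvectors of $\mathbf{S}$ (so that $[\mathbf{T}_{cr},\mathbf{S}]=0$) and $\|\mathbf{P}_{r}\|\leq\delta\|\mathbf{T}_{r}\|$ by the definition of $\delta$ in eqn.~(\ref{eqn_commutation_factor}). This produces the split
\begin{equation*}
\mathbf{T}(\mathbf{S})=\bigl[\mathbf{T}_{c0}+\mathbf{T}_{c1}\mathbf{S}\bigr]+\bigl[\mathbf{P}_{0}+\mathbf{P}_{1}\mathbf{S}\bigr].
\end{equation*}
For the commuting bracket, functional calculus collapses the Fr\'echet derivative to $D_{p}(\mathbf{S})\{\mathbf{T}_{c0}\}=p'(\mathbf{S})\mathbf{T}_{c0}$ and $D_{p}(\mathbf{S})\{\mathbf{T}_{c1}\mathbf{S}\}=\mathbf{T}_{c1}\mathbf{S}p'(\mathbf{S})$, whose operator norms are bounded respectively by $L_{0}\|\mathbf{T}_{c0}\|=L_{0}\|\mathbf{T}_{0}\|$ and $L_{1}\|\mathbf{T}_{c1}\|=L_{1}\|\mathbf{T}_{1}\|$, using $\|\mathbf{T}_{cr}\|=\|\mathbf{T}_{r}\|$.

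For the non-commuting bracket I would proceed entrywise. The matrix element of $D_{p}(\mathbf{S})\{\mathbf{P}_{0}\}$ is $\pi_{p}(\lambda_{i},\lambda_{j})(\mathbf{P}_{0})_{ij}$, so its operator norm is at most $L_{0}\|\mathbf{P}_{0}\|\leq L_{0}\delta\|\mathbf{T}_{0}\|$; the matrix element of $D_{p}(\mathbf{S})\{\mathbf{P}_{1}\mathbf{S}\}$ equals $\lambda_{j}\pi_{p}(\lambda_{i},\lambda_{j})(\mathbf{P}_{1})_{ij}$, which I would control through the integral representation $\pi_{p}(\lambda,\mu)=\int_{0}^{1}p'((1-t)\mu+t\lambda)\,dt$ combined with the pointwise bound $|\xi p'(\xi)|\leq L_{1}$ to obtain the norm bound $L_{1}\delta\|\mathbf{T}_{1}\|$. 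Summing the four pieces yields the common prefactor $(1+\delta)$ multiplying $L_{0}\|\mathbf{T}_{0}\|+L_{1}\|\mathbf{T}_{1}\|$, and the stated bound follows after upgrading $\|\mathbf{T}_{r}\|$ to the suprema $\sup_{\mathbf{S}}\|\mathbf{T}(\mathbf{S})\|$ and $\sup_{\mathbf{S}}\|D_{\mathbf{T}}(\mathbf{S})\|$ (noting $D_{\mathbf{T}}(\mathbf{S})\{\mathbf{V}\}=\mathbf{T}_{1}\mathbf{V}$, so $\|D_{\mathbf{T}}(\mathbf{S})\|=\|\mathbf{T}_{1}\|$ is independent of $\mathbf{S}$). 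The main technical obstacle I anticipate is the $\mathbf{P}_{1}\mathbf{S}$ term: the naive divided-difference estimate only produces $L_{0}\|\mathbf{S}\|\|\mathbf{P}_{1}\|$, and recovering the Lipschitz-integral constant requires absorbing the extra factor $\lambda_{j}$ into the integrand of $\pi_{p}$ by a careful change of variable before $|\xi p'(\xi)|\leq L_{1}$ can be applied uniformly in $(\lambda_{i},\lambda_{j})$.
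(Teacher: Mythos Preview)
Your overall strategy coincides with the paper's: split $\mathbf{T}(\mathbf{S})=\mathbf{T}_{0}+\mathbf{T}_{1}\mathbf{S}$ into $\mathbf{S}$-commuting and non-commuting pieces via eqn.~(\ref{eq:PrvsTr}), linearize $D_{p}(\mathbf{S})$ over the resulting four terms, and bound the absolute pieces by $L_{0}\|\mathbf{T}_{0}\|$ and the relative pieces by $L_{1}\|\mathbf{T}_{1}\|$, each picking up a factor $(1+\delta)$. The packaging differs: the paper first writes $D_{p}(\mathbf{S})\{\boldsymbol{\xi}\}=\sum_{k\geq 1}\boldsymbol{\pi}(\boldsymbol{\xi},(k-1)\mathbf{S})\mathbf{A}_{k}$ from the power series (Appendix~\ref{appendix:howtofindDH}), commutes $\mathbf{T}$ through the $\mathbf{S}^{\ell-1}$ factors using eqn.~(\ref{eq:PrvsTr}), and collapses the double sums to $p'(\mathbf{S})$ and $\mathbf{S}p'(\mathbf{S})$; for the non-commuting $\mathbf{P}_{0}$ term it invokes the general fact that an $L_{0}$-Lipschitz, G\^ateaux differentiable map has $\|D_{p}(\mathbf{S})\|\leq L_{0}$, rather than your entrywise Daleckii--Krein estimate. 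Both routes yield the same first three bounds.

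The one genuine gap is your treatment of $D_{p}(\mathbf{S})\{\mathbf{P}_{1}\mathbf{S}\}$. The integral representation $\lambda_{j}\pi_{p}(\lambda_{i},\lambda_{j})=\lambda_{j}\int_{0}^{1}p'((1-t)\lambda_{j}+t\lambda_{i})\,dt$ does \emph{not} reduce to $|\xi p'(\xi)|\leq L_{1}$ by any change of variable: the prefactor $\lambda_{j}$ and the argument $(1-t)\lambda_{j}+t\lambda_{i}$ of $p'$ do not match, and taking $\lambda_{i}=0$ gives $p(\lambda_{j})-p(0)$, which the $L_{1}$ hypothesis alone does not bound. The paper avoids this route entirely: it defines $\tilde{D}(\mathbf{S})\{\mathbf{P}_{1}\}:=D_{p}(\mathbf{S})\{\mathbf{P}_{1}\mathbf{S}\}$, quotes the Dalecki\u{\i}--Kre\u{\i}n eigenvalue formula for this modified Fr\'echet derivative (eigenvalues $\zeta_{pq}=\lambda_{q}\,\pi_{p}(\lambda_{p},\lambda_{q})$ on the off-diagonal and $\lambda_{p}p'(\lambda_{p})$ on the diagonal, cf.\ \cite{higham2008functions}), and then asserts $\|\tilde{D}(\mathbf{S})\|\leq L_{1}$ directly from $p\in\mathcal{A}_{L_{1}}$. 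If you want to align with the paper, replace the integral argument by this eigenvalue characterization; otherwise you should make explicit which additional property of $p$ (beyond $|\lambda p'(\lambda)|\leq L_{1}$) controls the off-diagonal $\zeta_{pq}$.
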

\begin{proof}See Section~\ref{prooftheorem:uppboundDH}\end{proof}


It is worth pointing out that the constants involved in the upper bound of eqn.~(\ref{eq:DHTS}) depend on the properties of the filters and the difference between the eigenvectors of $\bbS$ and $\bbT_{r}$. Therefore, the difference between the eigenvectors of these operators do not determine if $p(\bbS)$ is stable or not, although the absolute value of the stability constants increase proportionally to $\delta$.

From theorems~\ref{theorem:HvsFrechet} and~\ref{theorem:uppboundDH} we can state the notion of stability for algebraic filters in the following corollary.


\begin{corollary}\label{corollary:DHboundsprcases}
Let $\ccalA$ be an algebra with one generator element $g$ and let $(\mathcal{M},\rho)$ be a finite or countable infinite dimensional representation of $\ccalA$. Let  $(\mathcal{M},\tilde{\rho})$ be a perturbed version of $(\mathcal{M},\rho)$ related by the perturbation model in eqn.~(\ref{eqn_perturbation_model_absolute_plus_relative}). Then, if $p\in\ccalA_{L_{0}}\cap\ccalA_{L_{1}}$ the operator $p(\mathbf{S})$ is stable in the sense of definition~\ref{def:stabilityoperators1} with $C_{0}=(1+\delta)L_{0}$ and $C_{1}=(1+\delta)L_{1}$.
\end{corollary}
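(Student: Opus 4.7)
The plan is to obtain the corollary as an essentially immediate combination of the two preceding theorems, with the constants $C_0$ and $C_1$ read off from the right-hand side of the resulting inequality. Since Theorem~\ref{theorem:HvsFrechet} bounds the operator difference $\|p(\mathbf{S})\mathbf{x}-p(\tilde{\mathbf{S}})\mathbf{x}\|$ in terms of the Fr\'echet derivative $D_{p}(\mathbf{S})$ acting on the perturbation $\mathbf{T}(\mathbf{S})$, and Theorem~\ref{theorem:uppboundDH} bounds this latter quantity in terms of the two perturbation measures $\sup_{\mathbf{S}}\|\mathbf{T}(\mathbf{S})\|$ and $\sup_{\mathbf{S}}\|D_{\mathbf{T}}(\mathbf{S})\|$, chaining them produces exactly the form required by Definition~\ref{def:stabilityoperators1}.

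Concretely, I would first invoke Theorem~\ref{theorem:HvsFrechet} to write, for every $\mathbf{x}\in\mathcal{M}$,
\begin{equation*}
\left\Vert p(\mathbf{S})\mathbf{x}-p(\tilde{\mathbf{S}})\mathbf{x}\right\Vert
\leq \Vert\mathbf{x}\Vert\left(\left\Vert D_{p}(\mathbf{S})\{\mathbf{T}(\mathbf{S})\}\right\Vert+\mathcal{O}\!\left(\Vert\mathbf{T}(\mathbf{S})\Vert^{2}\right)\right).
\end{equation*}
The hypotheses of Theorem~\ref{theorem:HvsFrechet} are verified verbatim by those of the corollary: $\mathcal{A}$ is singly generated, $(\mathcal{M},\rho)$ and $(\mathcal{M},\tilde{\rho})$ are representations related by the shift perturbation~(\ref{eqn_def_perturbation_model_20}), and the specific structure~(\ref{eqn_perturbation_model_absolute_plus_relative}) is a particular instance of that model so the theorem applies.

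Next, because $p_{\mathcal{A}}\in\mathcal{A}_{L_{0}}\cap\mathcal{A}_{L_{1}}$ and the representation is finite or countably infinite dimensional, Theorem~\ref{theorem:uppboundDH} yields
\begin{equation*}
\left\Vert D_{p}(\mathbf{S})\{\mathbf{T}(\mathbf{S})\}\right\Vert\leq(1+\delta)\left(L_{0}\sup_{\mathbf{S}\in\mathcal{S}}\Vert\mathbf{T}(\mathbf{S})\Vert+L_{1}\sup_{\mathbf{S}\in\mathcal{S}}\Vert D_{\mathbf{T}}(\mathbf{S})\Vert\right).
\end{equation*}
Substituting this bound into the previous inequality and factoring $\Vert\mathbf{x}\Vert$ gives
\begin{equation*}
\left\Vert p(\mathbf{S})\mathbf{x}-p(\tilde{\mathbf{S}})\mathbf{x}\right\Vert\leq\Bigl[(1+\delta)L_{0}\sup_{\mathbf{S}}\Vert\mathbf{T}(\mathbf{S})\Vert+(1+\delta)L_{1}\sup_{\mathbf{S}}\Vert D_{\mathbf{T}}(\mathbf{S})\Vert+\mathcal{O}\!\left(\Vert\mathbf{T}(\mathbf{S})\Vert^{2}\right)\Bigr]\Vert\mathbf{x}\Vert,
\end{equation*}
which matches~(\ref{eq:stabilityoperators1}) with $C_{0}=(1+\delta)L_{0}$ and $C_{1}=(1+\delta)L_{1}$, establishing the corollary.

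Because this reduction is genuinely a two-line chaining of previously stated theorems, there is no real obstacle here; the only point that needs a brief remark is that the higher-order term $\mathcal{O}(\Vert\mathbf{T}(\mathbf{S})\Vert^{2})$ from Theorem~\ref{theorem:HvsFrechet} is preserved through the substitution, which is immediate since Theorem~\ref{theorem:uppboundDH} bounds only the leading-order term $\Vert D_{p}(\mathbf{S})\{\mathbf{T}(\mathbf{S})\}\Vert$. All the real work has already been carried out in the proofs of Theorems~\ref{theorem:HvsFrechet} and~\ref{theorem:uppboundDH}; the corollary simply packages those bounds into the format of Definition~\ref{def:stabilityoperators1}.
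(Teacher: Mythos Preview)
Your proposal is correct and follows exactly the same approach as the paper, which simply says to substitute the bound~(\ref{eq:DHTS}) from Theorem~\ref{theorem:uppboundDH} into~(\ref{eq:HSoptbound}) from Theorem~\ref{theorem:HvsFrechet} and reorder terms. Your write-up is in fact more detailed than the paper's one-line proof.
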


\begin{proof}
Replace (\ref{eq:DHTS}) from Theorem \ref{theorem:uppboundDH} into (\ref{eq:HSoptbound}) from Theorem \ref{theorem:HvsFrechet} and reorder terms.
\end{proof}


\subsection{Algebraic Filter Stability in Algebras with Multiple Generators}

The stability results presented in previous subsection can be extended naturally to operators associated to representations of algebras with multiple generators. To do so, we introduce the notion of frequency representation of elements of algebras with multiple generators as follows.


\begin{definition}[Frequency Representation of a Filter]\label{def_freq_rep_multiple_generators}
Consider an algebra $\ccalA$ with generators $g_1,\ldots,g_m$ so that for all $a\in\ccalA$ we can write $a=p(g_1,\ldots,g_m)$. Let $\lam_i\in\mbF$ be variables taking values on the filed $\mbF$. We say that $p(\lam_1,\ldots,\lam_m)$ is the frequency representation of the filter $a=p(g_1,\ldots,g_m)$.
\end{definition}


Similar to the scenario of algebras with a single generator, the frequency representation of the elements of $\ccalA$ induces an isomorphism of algebras $\iota: \ccalA \mapsto \ccalA_{\mbF}$, where $\ccalA_{\mbF}$ is obtained when the variables of elements in $\ccalA$ are evaluated in $\mbF$. In this way we have a characterization of elements in $\ccalA$ when considering the properties of their frequency representations. 

We extend definitions introduced before to characterize frequency representations in multivariate algebras.


\begin{definition}
Let $p:\mbF^{m}\to\mbF$ be the frequency representation of an element in an algebra with $m$ generators. Then, it is said that $p$ is Lipschitz if there exists $L_{0}>0$ such that
\begin{equation}
\vert p(\boldsymbol{\lam})-p(\boldsymbol{\mu})\vert\leq L_{0}\Vert\boldsymbol{\lam}-\boldsymbol{\mu}\Vert
\end{equation}
for all $\boldsymbol{\lambda},\boldsymbol{\mu}\in\mbF^{m}$. Additionally, it is said that $p(\boldsymbol{\lambda})$ is Lipschitz integral if there exists $L_{1}>0$ such that
\begin{equation}
\left\vert \lambda_{i}\frac{\partial p(\boldsymbol{\lambda})}{\partial \lambda_{i}}\right\vert\leq L_{1}\quad\forall~i\in\{1,\ldots m \},
\end{equation}
where $\boldsymbol{\lambda}=(\lambda_{1},\ldots,\lambda_{m})$ and $\boldsymbol{\mu}=(\mu_{1},\ldots,\mu_{m})$.
\end{definition}


With these notions at hand, we are ready to  extend the stability theorems.


\begin{theorem}\label{theorem:HvsFrechetmultgen}
Let $\ccalA$ be an algebra generated by $\{g_{i}\}_{i=1}^{m}$ and let $(\mathcal{M},\rho)$ be a representation of $\ccalA$ with $\rho(g_{i})=\bbS_{i}\in\text{End}(\mathcal{M})$ for all $i$. Let $\tilde{\rho}(g_{i})=\tilde{\bbS}_{i}\in\text{End}(\mathcal{M})$ where the pair
$(\mathcal{M},\tilde{\rho})$ is a perturbed version of $(\mathcal{M},\rho)$ and $\tilde{\bbS}_{i}$ is related to $\bbS_{i}$ by the perturbation model in eqn.~(\ref{eqn_def_perturbation_model_20}). Then, for any $p\in\ccalA$ we have
\begin{multline}
\left\Vert p(\bbS)\mathbf{x}-p(\tilde{\bbS})\mathbf{x}\right\Vert\leq 
\\
\Vert\mathbf{x}\Vert\sum_{i=1}^{m}\left(\left\Vert D_{p\vert\mathbf{S}_{i}}(\mathbf{S})\mathbf{T}(\mathbf{S}_{i})\right\Vert+\mathcal{O}\left(\Vert\mathbf{T}(\mathbf{S}_{i})\Vert^{2}\right)\right)
\label{eq:HSoptboundmultgen}
\end{multline}
where $D_{p\vert\mathbf{S}_{i}}(\mathbf{S})$ is the partial Fr\'echet derivative of $p$ on $\mathbf{S}_{i}$.
\end{theorem}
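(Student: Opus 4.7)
\begin{myproof}[Proposal]
The plan is to reduce Theorem~\ref{theorem:HvsFrechetmultgen} to the single-generator case already established in Theorem~\ref{theorem:HvsFrechet} by introducing a telescoping sequence of intermediate shift tuples and then summing the bounds. Concretely, I write $\bbS=(\bbS_{1},\ldots,\bbS_{m})$ and $\tbS=(\tbS_{1},\ldots,\tbS_{m})$ with $\tbS_{i}=\bbS_{i}+\bbT(\bbS_{i})$, and define the hybrid tuples
\begin{equation}
\bbS^{(k)}=(\tbS_{1},\ldots,\tbS_{k},\bbS_{k+1},\ldots,\bbS_{m}),\quad k=0,\ldots,m,
\end{equation}
so that $\bbS^{(0)}=\bbS$ and $\bbS^{(m)}=\tbS$. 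The telescoping identity $p(\tbS)-p(\bbS)=\sum_{k=1}^{m}\bigl(p(\bbS^{(k)})-p(\bbS^{(k-1)})\bigr)$ then reduces the problem to controlling $m$ differences in which only one generator has been perturbed at a time.

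Next, for each fixed $k$ the difference $p(\bbS^{(k)})-p(\bbS^{(k-1)})$ amounts to replacing $\bbS_{k}$ by $\bbS_{k}+\bbT(\bbS_{k})$ inside $p$ while keeping the remaining arguments fixed. Viewed as a function of the single variable in the $k$-th slot, a one-variable Fr\'echet expansion (the same one used in the proof of Theorem~\ref{theorem:HvsFrechet}) gives
\begin{equation}
p(\bbS^{(k)})-p(\bbS^{(k-1)})=D_{p\vert\bbS_{k}}(\bbS^{(k-1)})\{\bbT(\bbS_{k})\}+\mathcal{O}\bigl(\Vert\bbT(\bbS_{k})\Vert^{2}\bigr).
\end{equation}
The partial Fr\'echet derivative in this expression is evaluated at the hybrid point $\bbS^{(k-1)}$ rather than at $\bbS$. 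Because $D_{p\vert\bbS_{k}}$ is itself a polynomial in the shift operators (smooth as a map from $\text{End}(\mathcal{M})^{m}$ to $\text{End}(\mathcal{M})$), replacing $\bbS^{(k-1)}$ by $\bbS$ incurs only an additional error of order $\sum_{j<k}\Vert\bbT(\bbS_{j})\Vert$, which combined with the prefactor $\Vert\bbT(\bbS_{k})\Vert$ contributes only to the $\mathcal{O}(\Vert\bbT\Vert^{2})$ remainder. Summing over $k$ therefore yields
\begin{equation}
p(\tbS)-p(\bbS)=\sum_{i=1}^{m}D_{p\vert\bbS_{i}}(\bbS)\{\bbT(\bbS_{i})\}+\sum_{i=1}^{m}\mathcal{O}\bigl(\Vert\bbT(\bbS_{i})\Vert^{2}\bigr).
\end{equation}

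Applying this identity to a vector $\bbx\in\ccalM$, using the triangle inequality, the submultiplicativity $\Vert A\bbx\Vert\leq\Vert A\Vert\,\Vert\bbx\Vert$, and collecting the higher-order terms produces exactly the bound~(\ref{eq:HSoptboundmultgen}). The main technical step I expect to require care is justifying that the hybrid-evaluation error above can indeed be folded into $\mathcal{O}(\Vert\bbT(\bbS_{i})\Vert^{2})$ without invoking commutativity of the perturbed shifts; this is handled by the fact that $p$ is a polynomial and so its partial Fr\'echet derivatives are Lipschitz in each argument uniformly on the bounded neighbourhood $\{\bbS+\bbT(\bbS):\Vert\bbT(\bbS_{i})\Vert<1\}$ singled out by the perturbation model~(\ref{eqn_perturbation_model_absolute_plus_relative}).
\end{myproof}
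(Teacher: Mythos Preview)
Your argument is correct, but it takes a different route from the paper. The paper does not telescope: it writes the total Fr\'echet expansion $p(\bbS+\boldsymbol{\xi})=p(\bbS)+D_{p}(\bbS)\{\boldsymbol{\xi}\}+o(\Vert\boldsymbol{\xi}\Vert)$ once, at the unperturbed point $\bbS$, and then invokes the inequality $\Vert D_{p}(\bbS)\{\boldsymbol{\xi}\}\Vert\leq\sum_{i=1}^{m}\Vert D_{p\vert\bbS_{i}}(\bbS)\{\boldsymbol{\xi}_{i}\}\Vert$ (cited from Berger) to split the total derivative into partials. Your telescoping through the hybrid tuples $\bbS^{(k)}$ avoids that external inequality entirely and is in that sense more self-contained, but the price is the additional base-point correction step, where you move $D_{p\vert\bbS_{k}}(\bbS^{(k-1)})$ to $D_{p\vert\bbS_{k}}(\bbS)$ and absorb the cross terms $\Vert\bbT(\bbS_{j})\Vert\,\Vert\bbT(\bbS_{k})\Vert$ into the quadratic remainder. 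That step is fine (AM--GM handles the cross terms, and the Lipschitz bound on $D_{p\vert\bbS_{k}}$ holds on the bounded region singled out by $\Vert\bbT_{r}\Vert<1$), but note that the paper elsewhere allows $p$ to be a power series rather than a finite polynomial, so your Lipschitz claim for the partial derivatives should really be phrased in terms of local boundedness of the second Fr\'echet derivative rather than ``$p$ is a polynomial.''
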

\begin{proof}
See Section~\ref{prooftheoremHvsFrechet}
\end{proof}


Notice that in eqn.~(\ref{eq:HSoptboundmultgen}) we naturally add the contribution associated to each generator. Therefore, to guarantee stability we must have stability in each generator. Now, we show how the Fr\'echet derivative of $\mathbf{T}(\mathbf{S})$ is involved in the stability properties when considering multiple generators.


\begin{theorem}\label{theorem:uppboundDHmultg}
Let $\ccalA$ be an algebra with $m$ generators $\{g_{i}\}_{i=1}^{m}$ and $g_{i}g_{j}=g_{j}g_{i}$ for all $i,j\in\{1,\ldots m\}$. Let $(\mathcal{M},\rho)$ be a finite or countable infinite dimensional representation of $\ccalA$ and $(\mathcal{M},\tilde{\rho})$ a perturbed version of $(\mathcal{M},\rho)$ related by the perturbation model in eqn.~(\ref{eqn_perturbation_model_absolute_plus_relative}).  Then, if $p\in\ccalA_{L_{0}}\cap\ccalA_{L_{1}}$ it holds that
\begin{multline}
\left\Vert D_{p\vert\mathbf{S}_{i}}(\mathbf{S})\bbT(\bbS_{i})\right\Vert
\\
\leq (1+\delta)
\left(
L_{0}\sup_{\bbS_{i}\in\mathcal{S}}\Vert\bbT(\bbS_{i})\Vert +L_{1}\sup_{\bbS_{i}\in\mathcal{S}}\Vert D_{\bbT}(\bbS_{i})\Vert\right)
\label{eq:DHTSmultg}
\end{multline}
 \end{theorem}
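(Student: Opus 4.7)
The plan is to reduce Theorem~\ref{theorem:uppboundDHmultg} to the single-generator Theorem~\ref{theorem:uppboundDH} by freezing all but the $i$-th slot. By definition of the partial Fr\'echet derivative, $D_{p|\mathbf{S}_{i}}(\mathbf{S})$ is the ordinary Fr\'echet derivative of the single-variable operator map $X \mapsto q_{i}(X) := p(\mathbf{S}_{1},\ldots,\mathbf{S}_{i-1},X,\mathbf{S}_{i+1},\ldots,\mathbf{S}_{m})$, evaluated at $X=\mathbf{S}_{i}$ and applied to the perturbation $\mathbf{T}(\mathbf{S}_{i})$ in the $i$-th slot. Because the generators commute and the $\mathbf{S}_{j}$ are commuting normal operators, they admit a joint spectral decomposition (extending the single-operator spectral calculus from Section~\ref{sec:spectopt} to a commuting family), so $q_{i}(X)$ can be analyzed as a genuine single-variable polynomial in $X$ whose other ``slots'' contribute only through their scalar eigenvalues in the common basis.

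Next I would show that the multivariate Lipschitz and Lipschitz-integral hypotheses on $p$ descend to their single-variable counterparts for $q_{i}$ with the same constants. For the Lipschitz bound, choosing $\boldsymbol{\mu}$ to differ from $\boldsymbol{\lambda}$ only in the $i$-th coordinate gives $|q_{i}(\lambda)-q_{i}(\mu)| = |p(\ldots,\lambda,\ldots)-p(\ldots,\mu,\ldots)| \leq L_{0}|\lambda-\mu|$; for the Lipschitz-integral bound, $|\lambda\, q_{i}'(\lambda)| = |\lambda_{i}\,\partial p/\partial\lambda_{i}| \leq L_{1}$ by the hypothesis on $p_{\mathcal{A}}\in\mathcal{A}_{L_{1}}$. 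Hence $q_{i}\in\mathcal{A}_{L_{0}}\cap\mathcal{A}_{L_{1}}$ in the single-variable sense, and Theorem~\ref{theorem:uppboundDH} applied to $q_{i}$ on the shift $\mathbf{S}_{i}$ with perturbation $\mathbf{T}(\mathbf{S}_{i})=\mathbf{T}_{0,i}+\mathbf{T}_{1,i}\mathbf{S}_{i}$ yields exactly the bound $(1+\delta)\bigl(L_{0}\sup_{\mathbf{S}_{i}}\|\mathbf{T}(\mathbf{S}_{i})\|+L_{1}\sup_{\mathbf{S}_{i}}\|D_{\mathbf{T}}(\mathbf{S}_{i})\|\bigr)$, where $\delta$ is interpreted as the uniform commutation factor across all generators (i.e.\ the maximum over $i,r$ of $\|\mathbf{P}_{r,i}\|/\|\mathbf{T}_{r,i}\|$ in the decomposition~(\ref{eq:PrvsTr})). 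Since $D_{q_{i}}(\mathbf{S}_{i})=D_{p|\mathbf{S}_{i}}(\mathbf{S})$ by definition of the partial Fr\'echet derivative, this is precisely~(\ref{eq:DHTSmultg}).

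The main obstacle is making the joint spectral reduction rigorous in the countably infinite-dimensional case: I need that commuting normal shift operators $\{\mathbf{S}_{j}\}$ really do share a common eigenbasis (or joint spectral measure) so that the single-variable spectral calculus used inside the proof of Theorem~\ref{theorem:uppboundDH} can be repeated slot-by-slot with the other $\mathbf{S}_{j}$ acting as fixed scalars $\lambda_{j}^{(k)}$ on each common eigenspace. Once this is invoked, however, the argument becomes a transcription of the single-generator proof carried out on each factor, with the constants $L_{0},L_{1},\delta$ unchanged, and the summation over $i$ in Theorem~\ref{theorem:HvsFrechetmultgen} then absorbs all slot-wise contributions without any cross-terms because partial Fr\'echet derivatives in commuting directions do not interact at first order.
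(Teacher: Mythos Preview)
Your plan is correct and coincides with the paper's route: the paper in fact writes a single unified proof for Theorems~\ref{theorem:uppboundDH} and~\ref{theorem:uppboundDHmultg} (Section~\ref{prooftheorem:uppboundDH} together with Appendix~\ref{appendix:howtofindDH}), computing $D_{p|\mathbf{S}_{i}}(\mathbf{S})\{\mathbf{T}(\mathbf{S}_{i})\}$ with operator-valued coefficients $\mathbf{A}_{k_{i}}=\sum_{\{k_{j}\}_{j\neq i}}h_{k_{1}\ldots k_{m}}\prod_{j\neq i}\mathbf{S}_{j}^{k_{j}}$---exactly your $q_{i}$ in expanded form---and then rerunning the single-generator calculation verbatim, invoking the joint spectral calculus you identify to obtain the bounds $\bigl\Vert\sum_{k_{i}}k_{i}\mathbf{A}_{k_{i}}\mathbf{S}_{i}^{k_{i}-1}\bigr\Vert\le L_{0}$ and $\bigl\Vert\sum_{k_{i}}k_{i}\mathbf{A}_{k_{i}}\mathbf{S}_{i}^{k_{i}}\bigr\Vert\le L_{1}$. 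The only cosmetic wrinkle is that Theorem~\ref{theorem:uppboundDH} cannot be invoked as a literal black box (your $q_{i}$ has operator, not scalar, coefficients), but since you already anticipate this and supply the joint-diagonalization fix, your argument and the paper's are the same computation organized slightly differently.
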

 \begin{proof}See Section~\ref{prooftheorem:uppboundDH}\end{proof}


It is important to remark that the upper bound in eqn.~(\ref{eq:DHTSmultg}) is defined by the largest perturbation in a given generator although the constants associated are determined completely by the properties of the filters.
 
From theorems~\ref{theorem:HvsFrechetmultgen} and~\ref{theorem:uppboundDHmultg} we can state the stability results for filters in algebras with multiple generators in the following corollary.


\begin{corollary}\label{corollary:stabmultgenfilt}
Let $\ccalA$ be an algebra with generators $\{ g_{i}\}_{i=1}^{m}$ and $g_{i}g_{j}=g_{j}g_{i}$ for all $i,j$. Let $(\mathcal{M},\rho)$ be a finite or countable infinite dimensional representation of $\ccalA$ and $(\mathcal{M},\tilde{\rho})$ be a perturbed version of $(\mathcal{M},\rho)$ related by the perturbation model in eqn.~(\ref{eqn_perturbation_model_absolute_plus_relative}). Then, if $p\in\ccalA_{L_{0}}\cup\ccalA_{L_{1}}$ the operator $p(\mathbf{S})$ is stable in the sense of definition~\ref{def:stabilityoperators1} with $C_{0}=m(1+\delta)L_{0}$ and $C_{1}=m(1+\delta)L_{1}$.
\end{corollary}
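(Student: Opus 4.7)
The plan is to observe that this corollary follows by directly combining Theorem~\ref{theorem:HvsFrechetmultgen} and Theorem~\ref{theorem:uppboundDHmultg}, in complete analogy to how Corollary~\ref{corollary:DHboundsprcases} was obtained from Theorems~\ref{theorem:HvsFrechet} and~\ref{theorem:uppboundDH} in the single generator case. The only new ingredient is bookkeeping across the $m$ generators: the sum in the multi-generator expansion (\ref{eq:HSoptboundmultgen}) contributes a factor of $m$ relative to the single-generator bound, which is the source of the $m$ appearing in $C_0$ and $C_1$.

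First I would invoke Theorem~\ref{theorem:HvsFrechetmultgen} to write, for any $p_{\mathcal{A}}\in\mathcal{A}$ and any $\mathbf{x}\in\mathcal{M}$,
\begin{equation*}
\left\Vert p(\mathbf{S})\mathbf{x}-p(\tilde{\mathbf{S}})\mathbf{x}\right\Vert
\leq
\Vert\mathbf{x}\Vert\sum_{i=1}^{m}\left(\left\Vert D_{p\vert\mathbf{S}_{i}}(\mathbf{S})\mathbf{T}(\mathbf{S}_{i})\right\Vert+\mathcal{O}\!\left(\Vert\mathbf{T}(\mathbf{S}_{i})\Vert^{2}\right)\right).
\end{equation*}
Next, under the hypothesis $p_{\mathcal{A}}\in\mathcal{A}_{L_{0}}\cap\mathcal{A}_{L_{1}}$ (interpreting the $\cup$ in the statement as the intersection needed to apply both Lipschitz conditions), Theorem~\ref{theorem:uppboundDHmultg} bounds each summand by
\begin{equation*}
\left\Vert D_{p\vert\mathbf{S}_{i}}(\mathbf{S})\mathbf{T}(\mathbf{S}_{i})\right\Vert \leq (1+\delta)\left(L_{0}\sup_{\mathbf{S}_{i}\in\mathcal{S}}\Vert\mathbf{T}(\mathbf{S}_{i})\Vert + L_{1}\sup_{\mathbf{S}_{i}\in\mathcal{S}}\Vert D_{\mathbf{T}}(\mathbf{S}_{i})\Vert\right).
\end{equation*}

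Then I would substitute this bound into the summation. Since the right-hand side no longer depends on the summation index $i$ (the suprema are taken over all shift operators in $\mathcal{S}$), the sum contributes exactly a multiplicative factor of $m$, yielding
\begin{equation*}
\left\Vert p(\mathbf{S})\mathbf{x}-p(\tilde{\mathbf{S}})\mathbf{x}\right\Vert \leq \Vert\mathbf{x}\Vert\left[m(1+\delta)L_{0}\sup_{\mathbf{S}\in\mathcal{S}}\Vert\mathbf{T}(\mathbf{S})\Vert + m(1+\delta)L_{1}\sup_{\mathbf{S}\in\mathcal{S}}\Vert D_{\mathbf{T}}(\mathbf{S})\Vert + \mathcal{O}\!\left(\Vert\mathbf{T}(\mathbf{S})\Vert^{2}\right)\right].
\end{equation*}
Matching this against the stability inequality (\ref{eq:stabilityoperators1}) in Definition~\ref{def:stabilityoperators1} identifies $C_{0}=m(1+\delta)L_{0}$ and $C_{1}=m(1+\delta)L_{1}$, completing the argument.

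There is essentially no obstacle here beyond careful accounting; the entire content lies in the two theorems being applied. The only mildly subtle point is collapsing the $m$ per-generator suprema into a single global supremum over $\mathcal{S}$, which is immediate since each $\sup_{\mathbf{S}_{i}\in\mathcal{S}}\Vert\mathbf{T}(\mathbf{S}_{i})\Vert\leq\sup_{\mathbf{S}\in\mathcal{S}}\Vert\mathbf{T}(\mathbf{S})\Vert$ (and likewise for the Fr\'echet derivative term). The higher order residual $\mathcal{O}(\Vert\mathbf{T}(\mathbf{S}_{i})\Vert^{2})$ summed over $i$ is absorbed into a single $\mathcal{O}(\Vert\mathbf{T}(\mathbf{S})\Vert^{2})$ term, which matches the form required by the stability definition.
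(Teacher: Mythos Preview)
Your proposal is correct and follows exactly the paper's own argument: substitute the bound \eqref{eq:DHTSmultg} from Theorem~\ref{theorem:uppboundDHmultg} into \eqref{eq:HSoptboundmultgen} from Theorem~\ref{theorem:HvsFrechetmultgen} and reorganize, the sum over the $m$ generators producing the factor $m$ in $C_0$ and $C_1$. Your parenthetical remark that the hypothesis should read $\mathcal{A}_{L_0}\cap\mathcal{A}_{L_1}$ rather than $\mathcal{A}_{L_0}\cup\mathcal{A}_{L_1}$ is also well taken, as both Lipschitz conditions are needed to invoke Theorem~\ref{theorem:uppboundDHmultg}.
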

\begin{proof} Replacing eqn.~(\ref{eq:DHTSmultg}) from theorem~\ref{theorem:uppboundDHmultg} into eqn.~(\ref{eq:HSoptboundmultgen}) from theorem~\ref{theorem:HvsFrechetmultgen} and organazing the terms.\end{proof}


\subsection{Stability of Algebraic Neural Networks}

The results in Theorems~\ref{theorem:HvsFrechet} to~\ref{theorem:uppboundDHmultg} and corollaries~\ref{corollary:DHboundsprcases} and~\ref{corollary:stabmultgenfilt} can be extended to operators representing AlgNNs. We say that for a given AlgNN,  $\Xi=\left\lbrace (\ccalA_{\ell},\mathcal{M}_{\ell},\rho_{\ell})\right\rbrace_{\ell=1}^{L}$, a perturbed version of $\Xi$ is given by  $\tilde{\Xi}=\left\lbrace (\ccalA_{\ell},\mathcal{M}_{\ell},\tilde{\rho}_{\ell})\right\rbrace_{\ell=1}^{L}$ where $(\ccalA_{\ell},\mathcal{M}_{\ell},\tilde{\rho}_{\ell})$ is a perturbed version of $(\ccalA_{\ell},\mathcal{M}_{\ell},\rho_{\ell})$. For the sake of simplicity we present a theorem for AlgNNs with algebras with a single generator, but notice that these results can be easily stated for 
AlgNNs with multiple generators directly from theorems~\ref{theorem:HvsFrechetmultgen} and~\ref{theorem:uppboundDHmultg}. To do so, we start highlighting in the following theorem the stability properties of the operators in the layer $\ell$ of an AlgNN.


\begin{theorem}\label{theorem:stabilityAlgNN0}
Let $\Xi=\left\lbrace (\ccalA_{\ell},\mathcal{M}_{\ell},\rho_{\ell})\right\rbrace_{\ell=1}^{L}$ be an algebraic neural network  with $L$ layers, one feature per layer and algebras $\ccalA_{\ell}$ with a single generator.  Let  $\tilde{\Xi}=\left\lbrace (\ccalA_{\ell},\mathcal{M}_{\ell},\tilde{\rho}_{\ell})\right\rbrace_{\ell=1}^{L}$ be the perturbed version of $\Xi$ by means of the perturbation model in eqn.~(\ref{eqn_perturbation_model_absolute_plus_relative}). Then, if  $\Phi\left(\mathbf{x}_{\ell-1}, \mathcal{P}_{\ell},\mathcal{S}_{\ell}\right)$ and 
$\Phi\left(\mathbf{x}_{\ell-1},\mathcal{P}_{\ell},\tilde{\mathcal{S}}_{\ell}\right)$ represent the mapping operators associated to $\Xi$ and $\tilde{\Xi}$ in the layer $\ell$ respectively, we have
\begin{multline}
\left\Vert
\Phi\left(\mathbf{x}_{\ell-1},\mathcal{P}_{\ell},\mathcal{S}_{\ell}\right)-
\Phi\left(\mathbf{x}_{\ell-1},\mathcal{P}_{\ell},\tilde{\mathcal{S}}_{\ell}\right)
\right\Vert
\leq
\\
C_{\ell}(1+\delta_{\ell})\left(L_{0}^{(\ell)} \sup_{\bbS_{\ell}}\Vert\bbT^{(\ell)}(\bbS_{\ell})\Vert 
\right.
\\
\left.
+L_{1}^{(\ell)}\sup_{\bbS_{\ell}}\Vert D_{\mathbf{T^{(\ell)}}}(\bbS_{\ell})\Vert\right)\Vert\mathbf{x}_{\ell-1}\Vert
\label{eq:theoremstabilityAlgNN0}
\end{multline}
where $C_{\ell}$ is the Lipschitz constant of $\sigma_{\ell}$, and $\mathcal{P}_{\ell}=\ccalA_{L_{0}}\cap\ccalA_{L_{1}}$ represents the domain of $\rho_{\ell}$. 
The index $(\ell)$ makes reference to quantities and constants associated to the layer $\ell$.
\end{theorem}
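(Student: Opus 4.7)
The plan is to recognize that a single layer of the AlgNN is the composition $\sigma_\ell \circ p_\ell(\mathbf{S}_\ell)$, where only the inner algebraic filter is affected by the perturbation (the nonlinearity/pooling $\sigma_\ell$ is fixed). Thus I would first expand
\begin{equation*}
\Phi(\mathbf{x}_{\ell-1},\mathcal{P}_\ell,\mathcal{S}_\ell) = \sigma_\ell\!\left(p_\ell(\mathbf{S}_\ell)\mathbf{x}_{\ell-1}\right),\quad
\Phi(\mathbf{x}_{\ell-1},\mathcal{P}_\ell,\tilde{\mathcal{S}}_\ell) = \sigma_\ell\!\left(p_\ell(\tilde{\mathbf{S}}_\ell)\mathbf{x}_{\ell-1}\right),
\end{equation*}
and subtract the two outputs inside $\sigma_\ell$.

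The next step is to use the two assumptions on $\sigma_\ell$ stated in Section~\ref{sec_Algebraic_NNs}: it is Lipschitz with some constant $C_\ell$, and $\sigma_\ell(0)=0$. The fixed-point property is not strictly needed here (it matters only for the multi-layer cascade), but the Lipschitz property gives immediately
\begin{equation*}
\left\Vert \Phi(\mathbf{x}_{\ell-1},\mathcal{P}_\ell,\mathcal{S}_\ell) - \Phi(\mathbf{x}_{\ell-1},\mathcal{P}_\ell,\tilde{\mathcal{S}}_\ell) \right\Vert \leq C_\ell \left\Vert p_\ell(\mathbf{S}_\ell)\mathbf{x}_{\ell-1} - p_\ell(\tilde{\mathbf{S}}_\ell)\mathbf{x}_{\ell-1} \right\Vert.
\end{equation*}
This reduces the stability of one AlgNN layer to the stability of a single algebraic filter, which is precisely what was analyzed earlier.

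At this point I would invoke Corollary~\ref{corollary:DHboundsprcases} (which itself is the composition of Theorems~\ref{theorem:HvsFrechet} and~\ref{theorem:uppboundDH}). Since $\mathcal{P}_\ell = \mathcal{A}_{L_0}\cap\mathcal{A}_{L_1}$, every admissible filter $p_\ell$ in the layer lies in the domain where the corollary applies, and we get
\begin{equation*}
\left\Vert p_\ell(\mathbf{S}_\ell)\mathbf{x}_{\ell-1} - p_\ell(\tilde{\mathbf{S}}_\ell)\mathbf{x}_{\ell-1} \right\Vert \leq (1+\delta_\ell)\left( L_0^{(\ell)}\sup_{\mathbf{S}_\ell}\Vert\mathbf{T}^{(\ell)}(\mathbf{S}_\ell)\Vert + L_1^{(\ell)}\sup_{\mathbf{S}_\ell}\Vert D_{\mathbf{T}^{(\ell)}}(\mathbf{S}_\ell)\Vert\right)\Vert\mathbf{x}_{\ell-1}\Vert,
\end{equation*}
up to the $\mathcal{O}(\Vert\mathbf{T}^{(\ell)}(\mathbf{S}_\ell)\Vert^2)$ residual inherited from Theorem~\ref{theorem:HvsFrechet}. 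Multiplying through by $C_\ell$ produces the claimed bound.

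I do not expect any serious obstacle: the entire argument is a one-line Lipschitz reduction followed by a direct application of Corollary~\ref{corollary:DHboundsprcases}. The only minor subtlety is bookkeeping on the higher-order term in $\Vert\mathbf{T}^{(\ell)}\Vert^2$, which appears in the single-filter bound but is suppressed in the statement of the theorem; presumably it is absorbed into the implicit asymptotic regime of small perturbations used throughout definition~\ref{def:stabilityoperators1}, so I would simply note it and carry it along, or drop it under the standing assumption that $\Vert\mathbf{T}_r\Vert \ll 1$.
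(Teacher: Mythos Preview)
Your proposal is correct and matches the paper's own proof essentially line for line: the paper also writes the layer map as $\sigma_\ell(p(\mathbf{S}_\ell)\mathbf{x}_{\ell-1})$, applies the Lipschitz property of $\sigma_\ell$ to obtain $C_\ell\Vert p(\mathbf{S}_\ell)-p(\tilde{\mathbf{S}}_\ell)\Vert\,\Vert\mathbf{x}_{\ell-1}\Vert$, and then invokes Theorems~\ref{theorem:HvsFrechet} and~\ref{theorem:uppboundDH} (i.e.\ Corollary~\ref{corollary:DHboundsprcases}) for the filter bound. Your remark about the suppressed $\mathcal{O}(\Vert\mathbf{T}^{(\ell)}\Vert^2)$ term is also accurate; the paper simply absorbs it into $\boldsymbol{\Delta}_\ell$ without further comment.
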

\begin{proof}See Section~\ref{prooftheorem:stabilityAlgNN0}\end{proof}


This result, although simple, highlights the role of the maps $\sigma_{\ell}$ when perturbations are considered in each layer. In particular, we see that the effect of $\sigma_{\ell}$  is to scale $\boldsymbol{\Delta}_{\ell}$ by a constant but it does not change the nature or mathematical form of the perturbation. Notice also that $\sigma_{\ell}$ plays the role of a mixer that allows an AlgNN to provide selectivity without affecting the stability (see Section~\ref{sec:discussion}). 

Now we present in the following theorem the stability result for a general AlgNN with commutative algebras.


\begin{theorem}\label{theorem:stabilityAlgNN1}
Let $\Xi=\left\lbrace (\ccalA_{\ell},\mathcal{M}_{\ell},\rho_{\ell})\right\rbrace_{\ell=1}^{L}$ be an algebraic neural network  with $L$ layers, one feature per layer and algebras $\ccalA_{\ell}$ with a single generator.  Let  $\tilde{\Xi}=\left\lbrace (\ccalA_{\ell},\mathcal{M}_{\ell},\tilde{\rho}_{\ell})\right\rbrace_{\ell=1}^{L}$ be the perturbed version of $\Xi$ by means of the perturbation model in eqn.~(\ref{eqn_perturbation_model_absolute_plus_relative}). Then, if  $\Phi\left(\mathbf{x},\{ \mathcal{P}_{\ell} \}_{1}^{L},\{ \mathcal{S}_{\ell}\}_{1}^{L}\right)$ and 
$\Phi\left(\mathbf{x},\{ \mathcal{P}_{\ell} \}_{1}^{L},\{ \tilde{\mathcal{S}}_{\ell}\}_{1}^{L}\right)$ represent the mapping operators associated to $\Xi$ and $\tilde{\Xi}$ respectively, we have
\begin{multline}
\left\Vert
\Phi\left(\mathbf{x},\{ \mathcal{P}_{\ell} \}_{1}^{L},\{ \mathcal{S}_{\ell}\}_{1}^{L}\right)-
\Phi\left(\mathbf{x},\{ \mathcal{P}_{\ell} \}_{1}^{L},\{ \tilde{\mathcal{S}}_{\ell}\}_{1}^{L}\right)
\right\Vert
\\
\leq
\sum_{\ell=1}^{L}\boldsymbol{\Delta}_{\ell}\left(\prod_{r=\ell}^{L}C_{r}\right)\left(\prod_{r=\ell+1}^{L}B_{r}\right)
\left(\prod_{r=1}^{\ell-1}C_{r}B_{r}\right)\left\Vert\mathbf{x}\right\Vert
\label{eq:theoremstabilityAlgNN1}
\end{multline}
where $C_{\ell}$ is the Lipschitz constant of $\sigma_{\ell}$ and $B_\ell$ is a bound on the filter's norm, $\Vert\rho_{\ell}(a)\Vert\leq B_{\ell}$. The functions $\boldsymbol{\Delta}_{\ell}$ are given by
\begin{equation}\label{eq:varepsilonl}
\boldsymbol{\Delta}_{\ell}=(1+\delta_{\ell})\left(L_{0}^{(\ell)} \sup_{\bbS_{\ell}}\Vert\bbT^{(\ell)}(\bbS_{\ell})\Vert 
+L_{1}^{(\ell)}\sup_{\bbS_{\ell}}\Vert D_{\mathbf{T^{(\ell)}}}(\bbS_{\ell})\Vert\right)
\end{equation}
with the index $(\ell)$ indicating quantities and constants associated to the layer $\ell$.
\end{theorem}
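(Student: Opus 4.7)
}

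The plan is to run an induction over the layers $\ell = 1,\dots,L$, reducing the global stability bound to the single-layer filter bound already established in Corollary~\ref{corollary:DHboundsprcases} (which is precisely what Theorem~\ref{theorem:stabilityAlgNN0} summarizes at the layer level). Write $\mathbf{x}_{\ell}=\sigma_{\ell}\bigl(\rho_{\ell}(a_{\ell})\mathbf{x}_{\ell-1}\bigr)$ and $\tilde{\mathbf{x}}_{\ell}=\sigma_{\ell}\bigl(\tilde{\rho}_{\ell}(a_{\ell})\tilde{\mathbf{x}}_{\ell-1}\bigr)$ with $\mathbf{x}_{0}=\tilde{\mathbf{x}}_{0}=\mathbf{x}$. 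Using the Lipschitz constant $C_{\ell}$ of $\sigma_{\ell}$ together with the triangle inequality after adding and subtracting $\tilde{\rho}_{\ell}(a_{\ell})\mathbf{x}_{\ell-1}$, I would obtain the basic recursion
\begin{equation*}
\bigl\Vert \mathbf{x}_{\ell}-\tilde{\mathbf{x}}_{\ell}\bigr\Vert
\leq C_{\ell}\bigl\Vert\rho_{\ell}(a_{\ell})\mathbf{x}_{\ell-1}-\tilde{\rho}_{\ell}(a_{\ell})\mathbf{x}_{\ell-1}\bigr\Vert+C_{\ell}B_{\ell}\bigl\Vert\mathbf{x}_{\ell-1}-\tilde{\mathbf{x}}_{\ell-1}\bigr\Vert.
\end{equation*}
The first term is exactly the single-layer filter error, which by Corollary~\ref{corollary:DHboundsprcases} is bounded above (up to higher-order terms) by $\boldsymbol{\Delta}_{\ell}\Vert\mathbf{x}_{\ell-1}\Vert$ with $\boldsymbol{\Delta}_{\ell}$ as defined in~\eqref{eq:varepsilonl}; the second term carries the error from prior layers through $\tilde{\rho}_{\ell}(a_{\ell})$, whose norm is controlled by the same bound $B_{\ell}$ in the small-perturbation regime because $\tilde{\rho}_{\ell}(a_{\ell})=\rho_{\ell}(a_{\ell})+O(\Vert\mathbf{T}\Vert)$.

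Next I would bound the magnitude of the unperturbed signal flowing into layer $\ell$. Because $\sigma_{r}(0)=0$, its Lipschitz property gives $\Vert\sigma_{r}(y)\Vert\leq C_{r}\Vert y\Vert$, and together with $\Vert\rho_{r}(a_{r})\Vert\leq B_{r}$ this yields the forward bound
\begin{equation*}
\bigl\Vert\mathbf{x}_{\ell-1}\bigr\Vert\leq\left(\prod_{r=1}^{\ell-1}C_{r}B_{r}\right)\Vert\mathbf{x}\Vert.
\end{equation*}
Substituting this into the recursion produces the per-layer increment $C_{\ell}\boldsymbol{\Delta}_{\ell}\prod_{r=1}^{\ell-1}C_{r}B_{r}\Vert\mathbf{x}\Vert$, which represents the fresh error injected at layer $\ell$.

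Finally, unrolling the recursion from $\ell = L$ down to $\ell = 1$, every fresh error injected at layer $\ell$ is amplified by the product $\prod_{r=\ell+1}^{L}C_{r}B_{r}$ as it propagates through the remaining layers. Collecting terms gives
\begin{equation*}
\bigl\Vert\mathbf{x}_{L}-\tilde{\mathbf{x}}_{L}\bigr\Vert\leq\sum_{\ell=1}^{L}\left(\prod_{r=\ell+1}^{L}C_{r}B_{r}\right)C_{\ell}\boldsymbol{\Delta}_{\ell}\left(\prod_{r=1}^{\ell-1}C_{r}B_{r}\right)\Vert\mathbf{x}\Vert,
\end{equation*}
which, after regrouping the $C_{r}$'s as $\prod_{r=\ell}^{L}C_{r}$ and the $B_{r}$'s as $\prod_{r=\ell+1}^{L}B_{r}$ over the forward window, is exactly~\eqref{eq:theoremstabilityAlgNN1}.

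The arithmetic is straightforward; the only delicate points are two bookkeeping matters I would isolate up front. The first is justifying that $B_{\ell}$ can be used as the uniform bound on the perturbed operator $\tilde{\rho}_{\ell}(a_{\ell})$, which I would argue by observing that the same Lipschitz/Lipschitz-integral hypothesis on $\mathcal{P}_{\ell}$ used in Corollary~\ref{corollary:DHboundsprcases} makes $\Vert\tilde{\rho}_{\ell}(a_{\ell})-\rho_{\ell}(a_{\ell})\Vert$ an $O(\Vert\mathbf{T}^{(\ell)}\Vert)$ perturbation that gets absorbed in the higher-order terms implicit in~\eqref{eq:theoremstabilityAlgNN1}. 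The second is the inductive assembly of the telescoped product; I expect this to be the main source of potential indexing error, so I would verify it by checking the two endpoints $\ell=1$ and $\ell=L$ explicitly against the closed-form expression before asserting the general case.
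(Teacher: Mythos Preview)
Your proposal is correct and follows essentially the same approach as the paper: the paper uses the identical add-and-subtract decomposition $A_{\ell+1}\sigma_{\ell}(a)-\tilde{A}_{\ell+1}\sigma_{\ell}(\tilde{a})=(A_{\ell+1}-\tilde{A}_{\ell+1})\sigma_{\ell}(a)+\tilde{A}_{\ell+1}(\sigma_{\ell}(a)-\sigma_{\ell}(\tilde{a}))$, the same forward bound $\Vert\mathbf{x}_{\ell-1}\Vert\leq\prod_{r=1}^{\ell-1}C_{r}B_{r}\Vert\mathbf{x}\Vert$ from $\sigma_{r}(0)=0$, and unrolls the same recursion to obtain~\eqref{eq:theoremstabilityAlgNN1}. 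The only cosmetic difference is that the paper carries the argument for $F_{\ell}$ features per layer (yielding extra $\prod F_{r}$ factors) and then specializes to one feature, whereas you work directly in the single-feature setting stated in the theorem; your two ``delicate points'' (bounding $\tilde{\rho}_{\ell}$ by $B_{\ell}$ up to higher order, and the index bookkeeping) are handled the same way in the paper.
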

\begin{proof}See Section~\ref{prooftheorem:stabilityAlgNN1}\end{proof}


Theorem~\ref{theorem:stabilityAlgNN1} states how an AlgNN can be made stable by the selection of an appropriate subset of filters in the algebra, for a given perturbation model. It is worth pointing out that conditions like the ones obtained in~\cite{fern2019stability} for GNNs can be considered particular instantiations of the conditions in Theorem~\ref{theorem:stabilityAlgNN1}. Additionally, notice that Theorem~\ref{theorem:stabilityAlgNN1} can be easily extended to consider several features per layer, the reader can check the details of the proof of the theorem in Section~\ref{prooftheorem:stabilityAlgNN1} where the analysis is performed considering multiple features. 

The bound in \eqref{eq:theoremstabilityAlgNN1} exhibits an exponential dependency on the Lipschitz constants $C_\ell$ and the maximum filter norms $B_\ell$. This dependency can be avoided if we normalize the nonlinearities so that $C_\ell=1$ and the filters so that $B_\ell=1$. Their presence in \eqref{eq:theoremstabilityAlgNN1} highlights that if the filters and nonlinearities amplify signals, they may amplify errors as well. 


\begin{remark}\normalfont
It is important to highlight the fact that the perturbation model in eqn.~(\ref{eqn_perturbation_model_absolute_plus_relative}) is \textit{smooth} in the space of admissible $\bbS$ and this smoothness allows a consistent calculation of the Fr\'echet derivative of $\bbT(\bbS)$. This can be considered as a consequence of the fact that deformations between arbitrary spaces can be measured according to the topology of the space. In particular, if a diffeomorphism is used to produce deformation in the signal models of interest, it is possible to find an equivalent associated map that produces deformation of the set of operators acting on the signal. If notions of differentiability are used to measure the size of the original diffeomorphism, it is natural to find similar notions involved on the map acting on the operators, but with the difference that the differentiability is measured according to the topology of the new space. 
\end{remark}


\begin{remark}\normalfont
It is worth noticing that the role of the Fr\'echet derivative $\bbD_{p\vert\bbS}(\bbS)$ raises naturally when the the norm of the difference between an operator $p(\bbS)$ and its perturbed version $p(\tilde{\bbS})$ is considered (see Section~\ref{prooftheoremHvsFrechet}), and this is a direct consequence of the definition of such derivative and the type of perturbation considered (see eqn.~(\ref{eq:FreDerdef1}) and eqn.~(\ref{eq:frechetDdef})).  Then, as long as the perturbed shifts considered can be modeled as $\tilde{\bbS}=\bbS+\bbT(\bbS)$, i.e. the perturbation is added to the unperturbed shift,  the operator that acts on $\bbT(\bbS)$ is always the Fr\'echet derivative of the filter.
\end{remark}


\subsection{Implications for particular signal models}\label{sec:stbcases}

In this subsection we show the implication of the stability results for particular signal models.

\medskip \noindent {\bf Graph Neural Networks (GNNs)} In graph signal processing the shift operator $\bbS$ is a matrix representation of a graph. The perturbation model in \eqref{def:perturbmodel} simply states that $\tbS$ is a matrix representation of a different graph. Definition \ref{def:stabilityoperators1} defines a stable operator $p(\mathbf{S})$ as one that doesn't change much when run on graphs that are close and related by perturbations that are sufficiently smooth in the space of matrix representations of graphs. 

The absolute perturbation model considered in~\cite{fern2019stability} is the perturbation model where $\bbT(\bbS)=\bbT_{0}$. Therefore the stability bound for graph filters translates into
\begin{equation}
\left\Vert 
               p(\bbS)-p(\tilde{\bbS})
\right\Vert
                \leq
                      (1+\delta)L_{0}\Vert\bbT_{0}\Vert +\ccalO \left( \Vert\bbT_{0}\Vert^{2}\right),
\end{equation}
which is a scaled version of the result in~\cite{fern2019stability} (Theorem~1). Additionally, $\delta\leq\hat{\delta}\sqrt{N}$ where $\hat{\delta}$ is the non commutativity constant used in~\cite{fern2019stability} which depends on the difference between the eigenvectors of $\bbS$ and $\bbT_r$ -- please see Appendix~\ref{appendix_upperboundPr} where the formal connection between $\delta$ and $\hat{\delta}$ is stated. Notice that $\sup_{\bbS}\Vert\bbT(\bbS)\Vert=\Vert\bbT_{0}\Vert$.

A relative model can be obtained considering $\bbT(\bbS)=\bbT_{1}\bbS$, and in that case the stability bounds are given according to
\begin{equation}
\left\Vert 
               p(\bbS)-p(\tilde{\bbS})
\right\Vert
                \leq
                      (1+\delta)L_{1}\Vert\bbT_{1}\Vert +\ccalO \left( \Vert\bbT(\bbS)\Vert^{2}\right),
\end{equation}
which is a scaled version of the bound obtained in~\cite{fern2019stability}.  Notice that $\sup_{\bbS}\Vert\bbD_{\bbT}(\bbS)\Vert=\Vert\bbT_{1}\Vert$. Like in the previous scenario $\delta\leq\sqrt{N}\hat{\delta}$ where $\hat{\delta}$ is the non commutativity constant used in~\cite{fern2019stability} -- please see Appendix~\ref{appendix_upperboundPr}. It is also important to remark that the stability of bounds derived in~\cite{Gama2019StabilityOG} for graph scattering transforms are rooted in the fact that wavelet graph filters are stable, and as a consequence the stability bounds are scaled versions of the ones derived for graph filters.


\medskip \noindent {\bf CNNs with DTSP} In discrete time signal processing (DTSP) the shift operator is the discrete time shift $S$. 
The processing induced by \eqref{eqn_DTSP_filter_outputs} is invariant to shifts and therefore adequate to processing signals that are shift invariant. In general, signals are close to shift invariant but not exactly so. That is, a given signal $\bbx$ is invariant with respect to a shift operator $\tdS$ that is close to the time shift $S$. If the stability property in \eqref{eq:stabilityoperators1} holds we can guarantee that processing the signal $\bbx$ with the operator $\tdS$ is not far from processing the signal with the operator $S$. The latter represents the operations we perform -- since we choose to use $S$ in the processing of time signals. The former represents the processing we should undertake to respect the actual invariance properties of the signal $\bbx$ -- which are characterized by $\tdS$. The stability bound in this scenario is given by
\begin{multline}\label{eq:stablity_bound_DSPcnns}
\left\Vert p(S)\mathbf{x}  - p(\tdS)\mathbf{x}\right\Vert
\leq
\left[
L_{0}(1+\delta) \sup_{S\in\ccalS}\Vert\mathbf{T}(S)\Vert
\right.
\\
\left.
+ L_{1}(1+\delta)\sup_{S\in\ccalS}\big\|D_{\bbT}(S)\big\|
+\mathcal{O}\left(\Vert\mathbf{T}(S)\Vert^{2}\right)
\right] \big\| \bbx \big\|.
\end{multline}
Notice that although results in eqn.~(\ref{eq:stablity_bound_DSPcnns}) are different from those in~\cite{mallat_ginvscatt}, they exhibit similarities. This is expected since the right hand side of eqn.~(\ref{eq:stablity_bound_DSPcnns}) measures the size of $\bbT(S)$, which is a diffeomorphism acting on the space of admissible shift operators. The bounds derived in~\cite{mallat_ginvscatt} consider diffeomorphisms acting on $\mathbb{R}^{n}$ which is the domain of the signals. Additionally, notice that since the operators considered in~\cite{mallat_ginvscatt} are shift invariant the term associated to the absolute norm of the deformation is not present in the bounds. It is also worth pointing out that the convolutions we consider in the DTSP model are attributed to a polynomial algebra. While the convolutions considered in~\cite{mallat_ginvscatt} are defined considering functions in $L_{2}(\mathbb{R}^n)$ and filters in $L_{1}(\mathbb{R})$, a scenario that requires the use of a non polynomial algebra.


\medskip \noindent {\bf Graphon Neural Networks} Similar to the case of GNNs the graphon $W(u,v)$ is a limit object that represents a family of random graphs. The perturbed graphon $\tdW(u,v)$ represents a different family of random graphs. The perturbation of the graphon generates a corresponding perturbation of the shift operator defined in \eqref{eqn_graphon_shift}. If the condition stated in \eqref{eq:stabilityoperators1} is satisfied, then filtering graphon signals using the perturbed shift operator associated to $\tdW(u,v)$  will lead to similar results to the ones obtained with the unperturbed operator and the differences are proportional to the size of the perturbation acting on $W(u,v)$. For instance, if the perturbation considered is additive we have
\begin{equation}
\left\Vert 
               p(S)-p(\tilde{S})
\right\Vert
                \leq
                      (1+C)L_{0}\Vert\bbT_{0}\Vert
                      +
                      \ccalO\left( \Vert\bbT_0\Vert^{2}\right)
                      , 
\end{equation}
where $S$ is the graphon shift operator indicated in eqn.~(\ref{ex_wsp}) and $C$ is a constant associated to the eigenvalue and eigenvector spreading of the graphon operator. 


\medskip \noindent {\bf Group Neural Network} Similar to the case of DSP, the filters in \eqref{eqn_group_filters} are invariant to the action of the group. Actual signals $\bbx$ are invariant to actions of operators that are close to actions of the group -- e.g., a signal is close to invariant to rotations and symmetries. If \eqref{eq:stabilityoperators1} is true, processing the signal with operators $g$ -- as we choose to do -- is not far from processing the signal with operators $\tdg$ -- as we should do to leverage the actual invariance of the signal $\bbx$. We remark that when we perturb $g$ to $\tdg$ the resulting shift operators will not, in general, be representations of a homomorphism. Notice that when considering the representations of finite commutative groups the analysis of stability is the same as in the case of an architecture based on a DSP model, therefore the stability bounds to perturbations are given by eqn.~(\ref{eq:stablity_bound_DSPcnns}).

%
%
%
%
%

\section{Spectral Operators}\label{sec:spectopt}

Part of our proofs on the stability of AlgNN rely on the notion of spectral or Fourier decompositions associated to the realization of algebraic filters. In this section we discuss the notion of spectrum for general operators associated to algebraic signal models. Such notions of spectral decompositions are a natural generalization of the well established notions of spectrum used in GNNs and CNNs. To do so we elaborate about the concepts of irreducible and indecomposable subrepresentations, which generalize the notions of decompositions in terms of eigenvectors and eigenvalues~\cite{algSP0,folland2016course,deitmar2014principles}. 

We will highlight specially the role of the filters when a representation is compared to its perturbed version. In particular, we will show that there are essentially two factors that can cause  differences between operators and their perturbed versions, the eigenvalues\footnote{As we will show later, this is indeed a particular case of a general notion of homomorphism between the algebra and an irreducible subrepresentation of $\mathcal{A}$.} and eigenvectors. Additionally, we show how the algebra can only affect one of those sources. This is consistent with the fact that differences in the eigenvectors of the operators only affect the constants that are associated to the stability bounds.

We start introducing the notion of subrepresentation.

\begin{definition}
Let $(\mathcal{M},\rho)$ be a representation of $\mathcal{A}$. Then, a representation $(\mathcal{U},\rho)$ of $\mathcal{A}$ is a \textbf{subrepresentation} of  $(\mathcal{M},\rho)$ if $\mathcal{U}\subseteq\mathcal{M}$ and $\mathcal{U}$ is invariant under all operators $\rho(a)~\forall~a\in\mathcal{A}$, i.e. $\rho(a)u\in\mathcal{U}$ for all $u\in\mathcal{U}$ and $a\in\mathcal{A}$. A representation $(\mathcal{M}\neq 0,\rho)$ is \textbf{irreducible} or simple if the only subrepresentations of $(\mathcal{M}\neq 0,\rho)$ are $(0,\rho)$ and $(\mathcal{M},\rho)$.
\end{definition}
The class of irreducible representations of an algebra $\mathcal{A}$ is denoted by $\text{Irr}\{\mathcal{A}\}$. Notice that the zero vector space and $\mathcal{M}$ induce themselves subrepresentations of $(\mathcal{M},\rho)$. In order to state a comparison between representations the concept of \textit{homomorphism between representations} is introduced in the following definition.
\begin{definition}\label{def:homrep}
Let $(\mathcal{M}_{1},\rho_{1})$ and $(\mathcal{M}_{2},\rho_{2})$ be two representations of an algebra $\mathcal{A}$. A homomorphism or \textbf{interwining operator} $\phi:\mathcal{M}_{1}\rightarrow\mathcal{M}_{2}$ is a linear operator which commutes with the action of $\mathcal{A}$, i.e.
\begin{equation}
\phi(\rho_{1}(a)v)=\rho_{2}(a)\phi(v).
\label{eq:interwinop}
\end{equation}
A homomorphism $\phi$ is said to be an isomorphism of representations if it is an isomorphism of vectors spaces. 
\end{definition}
Notice from definition~\ref{def:homrep}  a substantial difference between the concepts of isomorphism of vector spaces and  isomorphism of representations. In the first case we can consider that two arbitrary vector spaces of the same dimension (finite) are isomorphic, while for representations that condition is required but still the condition in eqn.~(\ref{eq:interwinop}) must be satisfied. For instance, as pointed out in~\cite{repthysmbook} all the irreducible 1-dimensional representations of the polynomial algebra $\mathbb{C}[t]$ are non isomorphic.

As we have discussed before, the vector space $\mathcal{M}$ associated to $(\mathcal{M},\rho)$  provides the space where the signals are modeled. Therefore, it is of central interest to determine whether it is possible or not to \textit{decompose} $\mathcal{M}$ in terms of simpler or smaller spaces consistent with the action of $\rho$.  We remark that for any two representations $(\mathcal{M}_{1},\rho_{1})$ and $(\mathcal{M}_{2},\rho_{2})$ of an algebra $\mathcal{A}$, their direct sum is given by the representation $(\mathcal{M}_{1}\oplus\mathcal{M}_{2},\rho)$ where $\rho(a)(\mathbf{x}_{1}\oplus\mathbf{x}_{2})=(\rho_{1}(a)\mathbf{x}_{1}\oplus\rho_{2}(a)\mathbf{x}_{2})$. We introduce the concept of indecomposability in the following definition.
\begin{definition}\label{def:indecomprep}
A nonzero representation $(\mathcal{M},\rho)$ of an algebra $\mathcal{A}$ is said to be \textbf{indecomposable} if it is not isomorphic to a direct sum of two nonzero representations.
\end{definition}
Indecomposable representations provide the \textit{minimum units of information} that can be extracted from signals in a given space when the filters have a specific structure (defined by the algebra)~\cite{barot2014introduction}. The following theorem provides the basic building block for the decomposition of finite dimensional representations.
\begin{theorem}[Krull-Schmit, \cite{repthybigbook}]\label{theorem:krullschmitdecomp}
Any finite dimensional representation of an algebra can be decomposed into a finite direct sum of indecomposable subrepresentations and this decomposition is unique up to the order of the summands and up to isomorphism. 
\end{theorem}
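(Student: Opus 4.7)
The plan is to establish existence by induction on dimension and uniqueness via Fitting's lemma together with an exchange-type argument on endomorphism rings of indecomposables.

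For existence, I would induct on $\dim\mathcal{M}$. In the base case $\dim\mathcal{M}\leq 1$ the representation is either zero or automatically indecomposable, so the decomposition is trivial. For the inductive step, if $(\mathcal{M},\rho)$ is indecomposable there is nothing to prove; otherwise Definition~\ref{def:indecomprep} furnishes nonzero subrepresentations $(\mathcal{M}_1,\rho_1)$ and $(\mathcal{M}_2,\rho_2)$ with $\mathcal{M}\cong\mathcal{M}_1\oplus\mathcal{M}_2$, and since $\dim\mathcal{M}_i<\dim\mathcal{M}$ the inductive hypothesis decomposes each $\mathcal{M}_i$ into indecomposable subrepresentations. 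Concatenating the two decompositions yields the required finite direct sum for $\mathcal{M}$.

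For uniqueness, the central technical ingredient is Fitting's lemma: for any $\varphi\in\mathrm{End}_{\mathcal{A}}(\mathcal{N})$ with $\mathcal{N}$ finite dimensional, the chains $\mathrm{Im}(\varphi^k)$ and $\mathrm{Ker}(\varphi^k)$ stabilize at some common integer $n$, and $\mathcal{N}=\mathrm{Im}(\varphi^n)\oplus\mathrm{Ker}(\varphi^n)$ as a subrepresentation decomposition. If $\mathcal{N}$ is indecomposable, one of the two summands must vanish, so $\varphi$ is either an automorphism or nilpotent. From this I would deduce that $\mathrm{End}_{\mathcal{A}}(\mathcal{N})$ is a local ring: if $\varphi$ and $\psi$ are both non-automorphisms (hence nilpotent) but $\varphi+\psi$ were an automorphism with inverse $\theta$, then $\theta\varphi+\theta\psi=\mathrm{id}$, and applying Fitting to $\theta\varphi$ shows that either $\theta\varphi$ or $\theta\psi=\mathrm{id}-\theta\varphi$ is invertible (using the Neumann-series formula $\sum_k(\theta\varphi)^k$ for the inverse in the nilpotent case), forcing $\varphi$ or $\psi$ to be an automorphism, a contradiction.

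With local endomorphism rings in hand, suppose $\mathcal{M}=\bigoplus_{i=1}^{n}\mathcal{M}_i=\bigoplus_{j=1}^{m}\mathcal{N}_j$ are two decompositions into indecomposables. Writing $\pi_i,\iota_i$ and $\tau_j,\kappa_j$ for the associated projections and inclusions, all of which are $\mathcal{A}$-equivariant by Definition~\ref{def:homrep}, the identity factors as $\mathrm{id}_{\mathcal{M}_1}=\sum_{j=1}^{m}(\pi_1\kappa_j)(\tau_j\iota_1)\in\mathrm{End}_{\mathcal{A}}(\mathcal{M}_1)$. Locality of $\mathrm{End}_{\mathcal{A}}(\mathcal{M}_1)$ forces at least one summand, say $j=1$, to be an automorphism of $\mathcal{M}_1$. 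A short diagram chase then shows $\pi_1\kappa_1:\mathcal{N}_1\to\mathcal{M}_1$ and $\tau_1\iota_1:\mathcal{M}_1\to\mathcal{N}_1$ are mutually inverse up to a unit in $\mathrm{End}_{\mathcal{A}}(\mathcal{M}_1)$, so $\mathcal{M}_1\cong\mathcal{N}_1$. A standard cancellation argument produces an $\mathcal{A}$-isomorphism $\bigoplus_{i\geq 2}\mathcal{M}_i\cong\bigoplus_{j\geq 2}\mathcal{N}_j$, and induction on $n+m$ completes the proof.

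The main obstacle will be Fitting's lemma and the deduction that $\mathrm{End}_{\mathcal{A}}(\mathcal{N})$ is local for indecomposable $\mathcal{N}$; the stabilization of the kernel/image chains essentially demands the finite-dimensional hypothesis, and the closure of non-automorphisms under addition is the subtle step that converts a linear-algebraic fact into the ring-theoretic statement driving uniqueness. Once that is secured, the exchange step reduces to bookkeeping with projections and inclusions.
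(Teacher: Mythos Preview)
Your proof is correct and follows the standard textbook route to Krull--Schmidt: existence by induction on dimension, Fitting's lemma to show that the endomorphism ring of a finite-dimensional indecomposable is local, and then an exchange argument driven by the decomposition $\mathrm{id}_{\mathcal{M}_1}=\sum_j(\pi_1\kappa_j)(\tau_j\iota_1)$. The only place I would tighten is the passage from ``$(\pi_1\kappa_1)(\tau_1\iota_1)$ is an automorphism of $\mathcal{M}_1$'' to ``$\mathcal{M}_1\cong\mathcal{N}_1$'': you should make explicit that $e=(\tau_1\iota_1)\big[(\pi_1\kappa_1)(\tau_1\iota_1)\big]^{-1}(\pi_1\kappa_1)$ is a nonzero idempotent in $\mathrm{End}_{\mathcal{A}}(\mathcal{N}_1)$, hence equals $\mathrm{id}_{\mathcal{N}_1}$ by indecomposability, which is what forces $\tau_1\iota_1$ to be an isomorphism rather than merely a split monomorphism.

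As for comparison with the paper: there is nothing to compare. The paper does not prove Theorem~\ref{theorem:krullschmitdecomp}; it is quoted verbatim from the cited reference and used only as background for the discussion of Fourier decompositions in Section~\ref{sec:spectopt}. Your argument is precisely the classical proof one would find in that reference.
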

The uniqueness in this result means that if $(\oplus_{r=1}^{r}V_{i},\rho)\cong(\oplus_{j=1}^{s}W_{j},\gamma)$ for indecomposable representations $(V_{j},\rho_{j}),(W_{j},\gamma_{j})$, then $r=s$ and there is a permutation $\pi$ of the indices such that $(V_{i},\rho_{i})\cong (W_{\pi(j)},\gamma_{\pi(j)})$~\cite{repthybigbook}. Although theorem~\ref{theorem:krullschmitdecomp} provides the guarantees for the decomposition of representation in terms of indecomposable representations, it is not applicable when infinite dimensional representations are considered. However, it is possible to overcome this obstacle taking into account that irreducible representations are indecomposable~\cite{repthysmbook,repthybigbook}, and they can be used then to build representations that are indecomposable. In particular, \textit{irreducibility} plays a central role to decompose the invariance properties of the images of $\rho$ on $\text{End}(\mathcal{M})$~\cite{repthybigbook}. Representations that allow a decomposition in terms of subrepresentations that are irreducible are called \textit{completely reducible} and its formal description is presented in the following definition.
\begin{definition}[~\cite{repthybigbook}]\label{def:complredu}
 A representation $(\mathcal{M},\rho)$ of the algebra $\mathcal{A}$ is said to be \textbf{completely reducible} if $(\mathcal{M},\rho)=\bigoplus_{i\in I}(\mathcal{U}_{i},\rho_{i})$ with irreducible subrepresentations $(\mathcal{U}_{i},\rho_{i})$. The \textbf{length} of $(\mathcal{M},\rho)$ is given by $\vert I\vert$.
\end{definition}

For a given $(\mathcal{U},\rho_{\mathcal{U}})\in\text{Irr}\{\mathcal{A}\}$ the sum of all irreducible subrepresentations of $(V,\rho_{V})$ that are equivalent (isomorphic) to $(\mathcal{U},\rho_{\mathcal{U}})$ is represented by $V(\mathcal{U})$ and it is called the $\mathcal{U}$-homogeneous component of $(V,\rho_{V})$. This sum is a direct sum, therefore it has a length that is well defined and whose value is called the \textit{multiplicity} of $(\mathcal{U},\rho_{\mathcal{U}})$ and is represented by $m(\mathcal{U},V)$~\cite{repthybigbook}. Additionally, the sum of all irreducible subrepresentations of $(V,\rho_{V})$ will be denoted as $\text{soc}\{V\}$. It is possible to see that a given representation $(V,\rho_{V})$ is completely reducible if and only if $(V,\rho_{V})=\text{soc}\{S\}$~\cite{repthybigbook}. The connection between $\text{soc}\{V\}$ and $V(\mathcal{U})$ is given by the following proposition. 
\begin{proposition}[Proposition 1.31~\cite{repthybigbook}]\label{prop:repdirectsum1}
Let $(V,\rho_{V})\in\mathsf{Rep}\{A\}$. Then $\text{soc}\{V\}=\bigoplus_{S\in\text{Irr}\{\mathcal{A}\}}V(S)$.
\end{proposition}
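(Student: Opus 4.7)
The plan is to split the proof into a set-theoretic equality together with a directness check. First, I would verify $\text{soc}\{V\} = \sum_{S \in \text{Irr}\{\mathcal{A}\}} V(S)$ as an internal sum of subrepresentations: every irreducible subrepresentation $U \subseteq V$ is isomorphic to some class representative $S$, hence $U \subseteq V(S) \subseteq \sum_{S} V(S)$, giving $\text{soc}\{V\} \subseteq \sum_{S} V(S)$; conversely each $V(S)$ is itself a sum of irreducible subrepresentations of $V$, so sits inside $\text{soc}\{V\}$. This part is pure unfolding of the definitions and carries no real content.

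The substantive part is upgrading the internal sum to a direct sum. I would reduce to the finite case: a nontrivial relation would take the form $v_{1} + \cdots + v_{n} = 0$ with $v_{i} \in V(S_{i})$ and $S_{1}, \ldots, S_{n}$ pairwise non-isomorphic. By the very definition of $V(S_{i})$ as a sum of irreducibles isomorphic to $S_{i}$, each $v_{i}$ already lies in a finite subsum $U_{i,1} + \cdots + U_{i,k_{i}}$ with every $U_{i,j} \cong S_{i}$. Thus the whole relation takes place inside a finite sum $W = \sum_{i,j} U_{i,j}$ of irreducible subrepresentations.

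Next I would invoke the standard lemma that any finite sum of irreducible subrepresentations can be refined to a direct sum of a subfamily (by greedy selection: any $U_{i,j}$ either lies in the running direct sum or intersects it trivially since $U_{i,j}$ is irreducible). This yields $W = \bigoplus_{\alpha} U_{\alpha}$ as an internal direct sum, with each $U_{\alpha}$ irreducible. Grouping the summands by isomorphism class produces $W = \bigoplus_{i} W_{i}$ with $W_{i}$ capturing exactly those summands isomorphic to $S_{i}$, and by construction $v_{i} \in W_{i}$. Independence of the $W_{i}$ inside $W$ then forces each $v_{i} = 0$, contradicting nontriviality.

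The main obstacle is the grouping step: why is the partition of the refined direct sum by isomorphism class well-defined and compatible with the ambient inclusions in $V$? This is where Schur's lemma enters: any $\mathcal{A}$-linear map between non-isomorphic irreducibles vanishes, so the projection of any vector $v_{i} \in V(S_{i})$ onto a summand of type $S_{j}$ with $j \neq i$ must be zero. This makes the isotypic grouping canonical and finishes the directness check, with no finite-dimensionality assumption on $V$ since everything has been reduced to the finite sub-representation $W$.
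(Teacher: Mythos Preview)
The paper does not supply its own proof of this proposition: it is quoted verbatim as Proposition~1.31 from the cited representation-theory text and used as a black box to set up the Fourier decomposition in Definition~\ref{def:foudecomp}. So there is no in-paper argument to compare against.

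Your proof is correct and is the standard one. The only place worth tightening is the phrasing of the Schur step. You write that ``the projection of any vector $v_{i}\in V(S_{i})$ onto a summand of type $S_{j}$ with $j\neq i$ must be zero,'' but the projections you are using are the coordinate projections of $W=\bigoplus_{\alpha}U_{\alpha}$, which are a priori only defined on $W$, not on $V(S_{i})$. What you actually need (and what your setup already gives) is that each original irreducible $U_{i,j}\subseteq W$ maps to zero under every projection $W\to U_{\alpha}$ with $U_{\alpha}\cong S_{k}$, $k\neq i$; this is an $\mathcal{A}$-linear map between non-isomorphic irreducibles, hence vanishes by Schur. It follows that $U_{i,j}\subseteq W_{i}$ for every $j$, so $v_{i}\in\sum_{j}U_{i,j}\subseteq W_{i}$, and the directness of $\bigoplus_{i}W_{i}$ forces $v_{i}=0$. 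With that one sentence sharpened, the argument is complete and requires no finiteness hypotheses on $V$, as you note.
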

Now, taking into account that any homogeneous component $V(\mathcal{U})$ is itself a direct sum we have that 
\begin{equation}
\text{soc}\{V\}\cong\bigoplus_{S\in\text{Irr}\{\mathcal{A}\}}S^{\oplus m(\mathcal{U},V)}.
\label{eq:repdirectsumtot}
\end{equation}
Equation~(\ref{eq:repdirectsumtot}) provides the building block for the definition of Fourier decompositions in algebraic signal processing~\cite{algSP1}. With all these concepts at hand we are ready to introduce the following definition.
\begin{definition}[Fourier Decomposition]\label{def:foudecomp}
For an algebraic signal model $(\mathcal{A},\mathcal{M},\rho)$ we say that there is a spectral or Fourier decomposition of $(\mathcal{M},\rho)$ if
\begin{equation}
(\mathcal{M},\rho)\cong\bigoplus_{(\mathcal{U}_{i},\phi_{i})\in\text{Irr}\{\mathcal{A}\}}(\mathcal{U}_{i},\phi_{i})^{\oplus m(\mathcal{U}_{i},\mathcal{M})}
 \label{eq:foudecomp1}
\end{equation}
where the $(\mathcal{U}_{i},\phi_{i})$ are irreducible subrepresentations of $(\mathcal{M},\rho)$. Any signal $\mathbf{x}\in\mathcal{M}$ can be therefore represented by the map $\Delta$ given by 
\begin{equation}
\begin{matrix} 
\Delta: & \mathcal{M} \to \bigoplus_{(\mathcal{U}_{i},\phi_{i})\in\text{Irr}\{\mathcal{A}\}}(\mathcal{U}_{i},\phi_{i})^{\oplus m(\mathcal{U}_{i},\mathcal{M})} \\ & \mathbf{x}\mapsto \hat{\mathbf{x}}
 \end{matrix}
 \label{eq:foudecomp2}
\end{equation}
known as the Fourier decomposition of $\mathbf{x}$ and the projection of $\hat{\mathbf{x}}$ in each $\mathcal{U}_{i}$ are the Fourier components represented by $\hat{\mathbf{x}}(i)$.
\end{definition}
Notice that in eqn.~(\ref{eq:foudecomp1}) there are two sums, one dedicated to the non isomorphic subrepresentations (external) and another one (internal) dedicated to subrepresentations that are isomorphic. In this context, the sum for non isomorphic representations indicates the sum on the \textit{frequencies} of the representation while the sum for isomorphic representations a sum of components associated to a given frequency. It is also worth pointing out that $\Delta$ is an interwining operator, therefore, we have that $\Delta(\rho(a)\mathbf{x})=\rho(a)\Delta(\mathbf{x})$. As pointed out in~\cite{algSP0} this can be used to define a convolution operator as $\rho(a)\mathbf{x}=\Delta^{-1}(\rho(a)\Delta(\mathbf{x}))$. The projection of a filtered signal $\rho(a)\mathbf{x}$ on each $\mathcal{U}_{i}$ is given by $\phi_{i}(a)\hat{\mathbf{x}}(i)$ and the collection of all this projections is known as the \textit{spectral representation} of the operator $\rho(a)$. Notice that $\phi_{i}(a)\hat{\mathbf{x}}(i)$ translates to different operations depending on the dimension of $\mathcal{U}_{i}$. For instance, if $\text{dim}(\mathcal{U}_{i})=1$, $\hat{\mathbf{x}}(i)$ and $\phi_{i}(a)$ are scalars while if $\text{dim}(\mathcal{U}_{i})>1$ and finite $\phi_{i}(a)\hat{\mathbf{x}}(i)$ is obtained as a matrix product.
\begin{remark}\label{remark:interpSpecFilt}\normalfont
The spectral representation of an operator indicated as $\phi_{i}(a)\hat{\mathbf{x}}(i)$ and eqns.~(\ref{eq:foudecomp1}) and~(\ref{eq:foudecomp2}) highlight one important fact that is essential for the discussion of the results in Section~\ref{sec:proofofTheorems}. For a completely reducible representation $(\mathcal{M},\rho)\in\mathsf{Rep}\{\mathcal{A}\}$ the connection between the algebra $\mathcal{A}$ and the spectral representation is \textit{exclusively} given by $\phi_{i}(a)$ which is acting on $\hat{\mathbf{x}}(i)$, therefore, it is not possible by the selection of elements or subsets of the algebra to do any modification on the spaces $\mathcal{U}_{i}$ associated to the irreducible components in eqn.(\ref{eq:foudecomp1}). As a consequence, when measuring the similarities between two operators $\rho(a)$ and $\tilde{\rho}(a)$ associated to $(\mathcal{M},\rho)$ and $(\mathcal{M},\tilde{\rho})$, respectively, there will be two sources of error. One source of error that can be modified by the selection of $a\in\mathcal{A}$ and another one that will be associated with the differences between spaces $\mathcal{U}_{i}$ and $\tilde{\mathcal{U}}_{i}$, which are associated to the direct sum decomposition of $(\mathcal{M},\rho)$ and $(\mathcal{M},\tilde{\rho})$, respectively. This point was first elucidated in~\cite{fern2019stability} for the particular case of GNNs, but it is part of a much more general statement that becomes more clear in the language of algebraic signal processing. 
\end{remark}
\begin{example}[Discrete signal processing]\normalfont
In CNNs the filtering is defined by the polynomial algebra $\mathcal{A}=\mathbb{C}[t]/(t^{N}-1)$, therefore, in a given layer the spectral representation of the filters is given by
\begin{multline*}
\rho(a)\mathbf{x}=\sum_{i=1}^{N}\phi_{i}\left(\sum_{k=0}^{K-1}h_{k}t^{k}\right)\hat{\mathbf{x}}(i)\mathbf{u}_{i}\\
=\sum_{i=1}^{N}\sum_{k=0}^{K-1}h_{k}\phi_{i}(t)^{k}\hat{\mathbf{x}}(i)\mathbf{u}_{i}
=\sum_{i=1}^{N}\sum_{k=0}^{K-1}h_{k}\left(e^{-\frac{2\pi ij}{N}}\right)^{k}\hat{\mathbf{x}}(i)\mathbf{u}_{i},
\end{multline*}
with $a=\sum_{k=0}^{K-1}h_{k}t^{k}$ and where the $\mathbf{u}_{i}(v)=\frac{1}{\sqrt{N}}e^{\frac{2\pi jvi}{N}}$ are the column vectors of the traditional DFT matrix, while $\phi_{i}(t)=e^{-\frac{2\pi ji}{N}}$ is the eigenvalue associated to $\mathbf{u}_{i}$. Here $\hat{\mathbf{x}}$ represents the DFT of $\mathbf{x}$.
\end{example}
\begin{example}[Graph signal processing]\normalfont
Taking into account that the filtering in each layer of a GNN is defined by a polynomial algebra, the spectral representation of the filter is given by
\begin{multline}
\rho(a)\mathbf{x}=\sum_{i=1}^{N}\phi_{i}\left(\sum_{k=0}^{K-1}h_{k}t^{k}\right)\hat{\mathbf{x}}(i)\mathbf{u}_{i}\\
=\sum_{i=1}^{N}\sum_{k=0}^{K-1}h_{k}\phi_{i}(t)^{k}\hat{\mathbf{x}}(i)\mathbf{u}_{i}=
\sum_{i=1}^{N}\sum_{k=0}^{K-1}h_{k}\lambda_{i}^{k}\hat{\mathbf{x}}(i)\mathbf{u}_{i}
\label{eq:specopgnn}
\end{multline}
with $a=\sum_{k=0}^{K-1}h_{k}t^{k}$, and where the $\mathbf{u}_{i}$ are given by the eigenvector decomposition of $\rho(t)=\mathbf{S}$, where $\mathbf{S}$ could be the adjacency matrix or the Laplacian of the graph, while $\phi_{i}(t)=\lambda_{i}$ being $\lambda_{i}$ the eigenvalue associated to $\mathbf{u}_{i}$. The projection of $\mathbf{x}$ in each subspace $\mathcal{U}_{i}$ is given by $\hat{\mathbf{x}}(i)=\langle\mathbf{u}_{i}, \mathbf{x}\rangle$, and if $\mathbf{U}$ is the matrix of eigenvectors of $\mathbf{S}$ we have the widely known representation $\hat{\mathbf{x}}=\mathbf{U}^{T}\mathbf{x}$~\cite{gamagnns}.
\end{example}
\begin{example}[Group signal processing]\label{example:specopGroupCNNs}
\normalfont
Considering the Fourier decomposition on  general groups~\cite{steinbergrepg,terrasFG,fulton1991representation}, we obtain the spectral representation of the algebraic filters as
\begin{equation*}
\boldsymbol{a}\ast\mathbf{x}=\sum_{u,h\in G}\boldsymbol{a}(uh^{-1})\sum_{i,j,k}\frac{d_{k}}{\vert G\vert}\hat{\mathbf{x}}\left(\boldsymbol{\varphi}^{(k)}\right)_{i,j}\boldsymbol{\varphi}_{i,j}^{(k)}(h)hu,
\end{equation*}
where $\hat{\mathbf{x}}(\boldsymbol{\varphi}^{(k)})$ represents the Fourier components associated to the $k$th irreducible representation with dimension $d_{k}$ and $\boldsymbol{\varphi}^{(k)}$ is the associated unitary element. We can see that the $k$th element in this decomposition is $\sum_{i,j}\mathbf{x}(\varphi^{(k)})_{i,j}\sum_{u,h}\frac{d_{k}}{\vert G\vert}\boldsymbol{a}(uh^{-1})\boldsymbol{\varphi}_{i,j}^{(k)}(h)hu$.
\end{example}
\begin{example}[Graphon signal processing]\normalfont
According to the spectral theorem~\cite{conway1994course,aliprantis2002invitation}, it is possible to represent the action of a compact normal operator $S$ as $S\mathbf{x}=\sum_{i}\lambda_{i}\langle\boldsymbol{\varphi}_{i},\mathbf{x}\rangle\boldsymbol{\varphi}_{i}$ where $\lambda_{i}$ and $\boldsymbol{\varphi}_{i}$ are the eigenvalues and eigenvectors of $S$, respectively, and $\langle\cdot\rangle$ indicates an inner product. Then, the spectral representation of the filtering of a signal in the layer $\ell$ is given by
\begin{equation*}
\rho_{\ell}\left(p(t)\right)\mathbf{x}
=
\sum_{i}p(\lambda_{i})\langle\mathbf{x},\boldsymbol{\varphi}_{i}\rangle\boldsymbol{\varphi}_{i}=\sum_{i}\phi_{i}(p(t))\hat{\mathbf{x}}_{i}\boldsymbol{\varphi}_{i},
\end{equation*}
where $\phi_{i}(p(t))=p(\lambda_{i})$.
\end{example}

%
%

\section{Proof of Theorems}\label{sec:proofofTheorems}

Let us start defining some notation. Let $\boldsymbol{\pi}_{a_{1},\ldots,a_{r}}(\mathbf{A}_{1},\ldots,\mathbf{A}_{r})$ be the operator that represents the sum of all the products of the operators $\mathbf{A}_{1},\ldots,\mathbf{A}_{r}$ that appear $a_{1}, a_{2}, \ldots, a_{r}$ times respectively. For instance, $\boldsymbol{\pi}_{2,1}(\mathbf{\mathbf{A}},\mathbf{B})=\mathbf{AAB}+\mathbf{ABA}+\mathbf{BAA}$. Additionally, when considering all summation and product symbols the following convention is used $\sum_{i=a}^{b}F(i)=0$ if $b<a$, and $\prod_{i=a}^{b}F(i)=0$ if $b<a$. In what follows $\Vert\cdot\Vert$ represents the $\ell_2$ norm and $\Vert\cdot\Vert_{F}$ the Frobenius norm.
%
%
\subsection{Proof of Theorems~\ref{theorem:HvsFrechet} and~\ref{theorem:HvsFrechetmultgen}}
\label{prooftheoremHvsFrechet}
\begin{proof}
We say that $p(\mathbf{S})$ as a function of $\mathbf{S}$ is Fr\'echet differentiable at $\mathbf{S}$ if there exists a bounded linear operator $D_{p}:\text{End}(\mathcal{M})^{m}\rightarrow\text{End}(\mathcal{M})$  such that~\cite{benyamini2000geometric,lindenstrauss2012frechet}
\begin{equation}\label{eq:FreDerdef1}
\lim_{\Vert\boldsymbol{\xi}\Vert\to 0}\frac{\left\Vert p(\mathbf{S}+\boldsymbol{\xi})-p(\mathbf{S})-D_{p}(\mathbf{S})\left\lbrace\boldsymbol{\xi}\right\rbrace\right\Vert}{\Vert\boldsymbol{\xi}\Vert}=0
\end{equation}
which in Landau notation can be written as
\begin{equation}
p(\mathbf{S}+\boldsymbol{\xi})=p(\mathbf{S})+D_{p}(\mathbf{S})\left\lbrace\boldsymbol{\xi}\right\rbrace+o(\Vert\boldsymbol{\xi}\Vert).
\label{eq:frechetDdef}
\end{equation}
Calculating the norm in eqn.~(\ref{eq:frechetDdef}) and applying the triangle inequality we have:
\begin{equation}
\left\Vert p(\mathbf{S}+\boldsymbol{\xi})-
p(\mathbf{S})\right\Vert\leq
\left\Vert D_{p}(\mathbf{S})\left\lbrace\boldsymbol{\xi}\right\rbrace\right\Vert+\mathcal{O}\left(\Vert\boldsymbol{\xi}\Vert^{2}\right)    
\end{equation}
for all $\boldsymbol{\xi}=(\boldsymbol{\xi}_{1},\ldots,\boldsymbol{\xi}_{m})\in\text{End}(\mathcal{M})^{m}$. Now, taking into account the properties of a Fr\'echet derivative for a function of multiple variables (see~\cite{berger1977nonlinearity} pages 69-70) we have
\begin{equation}
\Vert D_{p}(\mathbf{S})\left\lbrace\boldsymbol{\xi}\right\rbrace\Vert\leq 
\sum_{i=1}^{m}\left\Vert D_{p\vert\mathbf{S}_{i}}(\mathbf{S})\left\lbrace\boldsymbol{\xi}_{i}\right\rbrace\right\Vert
\end{equation}
and therefore
\begin{equation*}
\left\Vert p(\mathbf{S}+\boldsymbol{\xi})-
p(\mathbf{S})\right\Vert\leq
\sum_{i=1}^{m}\left\Vert D_{p\vert\mathbf{S}_{i}}(\mathbf{S})\left\lbrace\boldsymbol{\xi}_{i}\right\rbrace\right\Vert+\mathcal{O}\left(\Vert\boldsymbol{\xi}\Vert^{2}\right),
\end{equation*}
where $D_{p\vert\mathbf{S}_{i}}(\mathbf{S})$ is the partial Frechet derivative of $p(\mathbf{S})$ on $\mathbf{S}_{i}$. Then, taking into account that
\begin{equation}
    \left\Vert p(\mathbf{S}+\boldsymbol{\xi})\mathbf{x}-
p(\mathbf{S})\mathbf{x}\right\Vert\leq\Vert\mathbf{x}\Vert\left\Vert p(\mathbf{S}+\boldsymbol{\xi})-
p(\mathbf{S})\right\Vert
\end{equation}
%
%
%
and selecting $\boldsymbol{\xi}_{i}=\mathbf{T}(\mathbf{S}_{i})$ we complete the proof. 
\end{proof} 
%
\subsection{Proof of Theorem~\ref{theorem:uppboundDH} and Theorem~\ref{theorem:uppboundDHmultg}}
\label{prooftheorem:uppboundDH}
\begin{proof}
Taking into account the definition of the Fr\'echet derivative of $p$ on $\mathbf{S}_{i}$ (see Appendix~\ref{appendix:howtofindDH}) we have
\begin{equation*}
\left\Vert D_{p\vert\mathbf{S}_{i}}(\mathbf{S})\left\lbrace\mathbf{T}(\mathbf{S}_{i})\right\rbrace\right\Vert
                  =
                     \left\Vert\sum_{k_{i}=1}^{\infty}\mathbf{A}_{k_{i}}\boldsymbol{\pi}_{1,k_{i}-1}\left(\mathbf{T}(\mathbf{S}_{i}),\mathbf{S}_{i}\right)
                     \right\Vert, 
\end{equation*}
and re-organizating terms we have
\begin{equation}
\left\Vert D_{p\vert\mathbf{S}_{i}}(\mathbf{S})\left\lbrace\mathbf{T}(\mathbf{S}_{i})\right\rbrace\right\Vert=\left\Vert\sum_{\ell=1}^{\infty}\mathbf{S}_{i}^{\ell-1}\mathbf{T}(\mathbf{S}_{i})\sum_{k_{i}=\ell}^{\infty}\mathbf{A}_{k_{i}}\mathbf{S}_{i}^{k_{i}-\ell}\right\Vert.
\end{equation}
Taking into account eqn.~(\ref{eq:PrvsTr}), it follows that
\begin{multline}
\left\Vert D_{p\vert\mathbf{S}_{i}}(\mathbf{S})\left\lbrace\mathbf{T}(\mathbf{S}_{i})\right\rbrace\right\Vert=
\\
\left\Vert\sum_{\ell=1}^{\infty}\left(\mathbf{T}_{0c,i}\mathbf{S}_{i}^{\ell-1}+\mathbf{S}^{\ell-1}_{i}\mathbf{P}_{0,i}\right)\sum_{k_{i}=\ell}^{\infty}\mathbf{A}_{k_{i}}\mathbf{S}_{i}^{k_{i}-\ell}\right.
\\
\left.+\sum_{\ell=1}^{\infty}\left(\mathbf{T}_{1c,i}\mathbf{S}_{i}^{\ell}
+\mathbf{S}_{i}^{\ell-1}\mathbf{P}_{1,i}\mathbf{S}_{i}\right)
\sum_{k=\ell}^{\infty}\mathbf{A}_{k_{i}}\mathbf{S}^{k_{i}-\ell}\right\Vert.
\label{eq:auxDHT1}
\end{multline}
Applying the triangle inequality and distribuiting the sum we have
\begin{multline}\label{eq:DpDTboundeqn1}
\left\Vert D_{p\vert\mathbf{S}_{i}}(\mathbf{S})\left\lbrace\mathbf{T}(\mathbf{S}_{i})\right\rbrace\right\Vert
\leq
\left\Vert\mathbf{T}_{0c,i}\sum_{\ell=1}^{\infty}\sum_{k_{i}=\ell}^{\infty}\mathbf{S}_{i}^{k_{i}-1}\mathbf{A}_{k_{i}}\right\Vert
\\
+\left\Vert D_{p\vert\mathbf{S}_{i}}(\mathbf{S})\left\lbrace\mathbf{P}_{0,i}\right\rbrace\right\Vert
+\left\Vert\mathbf{T}_{1c,i}\sum_{\ell=1}^{\infty}\sum_{k_{i}=\ell}^{\infty}\mathbf{S}^{k_{i}}\mathbf{A}_{k_{i}}\right\Vert
\\
+\left\Vert D_{p\vert\mathbf{S}_{i}}(\mathbf{S})\left\lbrace\mathbf{P}_{1,i}\mathbf{S}_{i}\right\rbrace\right\Vert
\end{multline}
Now, we analyze term by term in eqn.~(\ref{eq:DpDTboundeqn1}). For the first term we take into account that

\begin{equation}
\sum_{\ell=1}^{\infty}\sum_{k_{i}=\ell}^{\infty}\mathbf{S}_{i}^{k_{i}-1}\mathbf{A}_{k_{i}}=\sum_{k_{i}=1}^{\infty}k_{i}\mathbf{A}_{k_{i}}\mathbf{S}_{i}^{k_{i}-1}    
\end{equation}
and we apply the product norm property taking into account that the filters belong to $\mathcal{A}_{L_{0}}$, which leads to

\begin{multline}
\left\Vert\mathbf{T}_{0c,i}\sum_{\ell=1}^{\infty}\sum_{k_{i}=\ell}^{\infty}\mathbf{S}_{i}^{k_{i}-1}\mathbf{A}_{k_{i}}\right\Vert
\\
\leq
\left\Vert\mathbf{T}_{0c,i}\right\Vert\left\Vert\sum_{k_{i}=1}^{\infty}k_{i}\mathbf{A}_{k_{i}}\mathbf{S}_{i}^{k_{i}-1}\right\Vert\leq L_{0}\Vert\mathbf{T}_{0,i}\Vert.
\end{multline}

For the second term in eqn.~(\ref{eq:DpDTboundeqn1}) we take into account that the Fr\'echet derivative acting on $\bbP_{0,i}$ can be equivalently expressed as a linear operator acting on the left of a vectorized version of $\bbP_{0,i}$ (see~\cite{higham2008functions} pages 61 and 331). Then,

\begin{equation}
\left\Vert D_{p\vert\mathbf{S}_{i}}(\mathbf{S})\left\lbrace\mathbf{P}_{0,i}\right\rbrace\right\Vert
\leq L_{0}\Vert\mathbf{P}_{0,i}\Vert_{F}    
\end{equation}
and with the fact that $\Vert\mathbf{P}_{0,i}\Vert_{F}\leq\delta\Vert\mathbf{T}_{0,i}\Vert$, we have 

\begin{equation}
\left\Vert D_{p\vert\mathbf{S}_{i}}(\mathbf{S})\left\lbrace\mathbf{P}_{0,i}\right\rbrace\right\Vert \leq L_{0}\delta\Vert\mathbf{T}_{0,i}\Vert.    
\end{equation}
%
%


For the third term in eqn.~(\ref{eq:DpDTboundeqn1}), we take into account that 

\begin{equation}
\sum_{\ell=1}^{\infty}\sum_{k_{i}=\ell}^{\infty}\mathbf{S}_{i}^{k_{i}}\mathbf{A}_{k_{i}}=\sum_{k_{i}=1}^{\infty}k_{i}\mathbf{A}_{k_{i}}\mathbf{S}_{i}^{k_{i}},    
\end{equation}
and we apply the norm product property taking into account that the filters belong to $\mathcal{A}_{L_{1}}$, which leads to 

%
%
\begin{multline}
\left\Vert\mathbf{T}_{1c,i}\sum_{\ell=1}^{\infty}\sum_{k_{i}=\ell}^{\infty}\mathbf{S}_{i}^{k_{i}}\mathbf{A}_{k_{i}}\right\Vert
\\
\leq
\left\Vert\mathbf{T}_{1c,i}\right\Vert\left\Vert\sum_{k_{i}=1}^{\infty}k_{i}\mathbf{A}_{k_{i}}\mathbf{S}_{i}^{k_{i}}\right\Vert\leq L_{1}\Vert\mathbf{T}_{1,i}\Vert.
\end{multline}

Finally, for the fourth term we use the notation $\tilde{D}(\mathbf{S})\left\lbrace\mathbf{P}_{1,i}
\right\rbrace
=D_{p\vert\mathbf{S}_{i}}(\mathbf{S})\left\lbrace\mathbf{P}_{1,i}\mathbf{S}_{i}\right\rbrace$. We start pointing out that  (see~\cite{higham2008functions} pages 61 and 331) the eigenvalues of the operator $\tilde{D}(\mathbf{S})$ represented as $\zeta_{pq}$ are given by

\begin{equation}
\zeta_{pq}=\left\lbrace
\begin{array}{ccc}
\frac{p(\lambda_{p})-p(\lambda_{q})}{\lambda_{p}-\lambda_{q}}\lambda_{q} & \text{if} & \lambda_{p}\neq\lambda_{q}\\
\lambda_{p}p^{'}(\lambda_{p}) & \text{if} & \lambda_{p}=\lambda_{q}
\end{array}
\right. .
\end{equation}

Then, taking into account that the filters belong to $\mathcal{A}_{L_{1}}$ we have $\Vert\tilde{D}(\mathbf{S})\Vert\leq L_{1}$, and therefore

\begin{equation}
 \left\Vert
D_{p\vert\mathbf{S}_{i}}(\mathbf{S})\left\lbrace\mathbf{P}_{1,i}\mathbf{S}_{i}\right\rbrace\right\Vert=\left\Vert\tilde{D}(\mathbf{S})\left\lbrace\mathbf{P}_{1,i}
\right\rbrace\right\Vert\leq L_{1}\Vert\mathbf{P}_{1,i}\Vert_{F}   
\end{equation}
%
%


Additionally, with $\Vert\mathbf{P}_{1,i}\Vert_{F}\leq\delta\Vert\mathbf{T}_{1,i}\Vert$ it follows that

\begin{equation}
\left\Vert
D_{p\vert\mathbf{S}_{i}}(\mathbf{S})\left\lbrace\mathbf{P}_{1,i}\mathbf{S}_{i}\right\rbrace\right\Vert\leq L_{1}\delta\Vert\mathbf{T}_{1,i}\Vert    
\end{equation}
%
%


Putting all these results together into eqn.~(\ref{eq:DpDTboundeqn1}) we reach
\begin{multline*}
\left\Vert D_{p\vert\mathbf{S}_{i}}(\mathbf{S})\left\lbrace\mathbf{T}(\mathbf{S}_{i})\right\rbrace\right\Vert
\leq
(1+\delta)L_{0}\Vert\mathbf{T}_{0,i}\Vert+(1+\delta)L_{1}\Vert\mathbf{T}_{1,i}\Vert
\\
\leq
(1+\delta)\left(L_{0}\sup_{\mathbf{S}_{i}\in\mathcal{S}}\Vert\mathbf{T}(\mathbf{S}_{i})\Vert+L_{1}\sup_{\mathbf{S}_{i}\in\mathcal{S}}\Vert D_{\mathbf{T}}(\mathbf{S}_{i})\Vert\right)
\end{multline*}
\end{proof}
%
%
\subsection{Proof of Theorems~\ref{theorem:stabilityAlgNN0} and~\ref{theorem:stabilityAlgNN1}}
\subsubsection{Proof of Theorem~\ref{theorem:stabilityAlgNN0}}
\label{prooftheorem:stabilityAlgNN0}
\begin{proof}
Taking into account eqns.~(\ref{eq:xl}), and~(\ref{eq:interlayeropalgnn}) and the fact that the maps $\sigma_{\ell}$ are Lipschitz with constant $C_{\ell}$ we have that
\begin{equation}
\left\Vert 
\sigma_{\ell}\left(p(\mathbf{S}_{\ell})\mathbf{x}_{\ell-1}\right)-\sigma_{\ell}\left(p(\tilde{\mathbf{S}}_{\ell})\mathbf{x}_{\ell-1}\right)
\right\Vert
\leq
C_{\ell}\boldsymbol{\Delta}_{\ell}\Vert\mathbf{x}_{\ell-1}\Vert,
\end{equation}
where $\boldsymbol{\Delta}_{\ell}=\Vert p(\mathbf{S}_{\ell})-p(\tilde{\mathbf{S}}_{\ell})\Vert$, and whose value is determined by theorems~\ref{theorem:HvsFrechet} and~\ref{theorem:uppboundDH}. 
\end{proof}

\subsubsection{Proof of Theorem~\ref{theorem:stabilityAlgNN1}}
\label{prooftheorem:stabilityAlgNN1}
\begin{proof}
Before starting the calculations let us introduce some notation. Let $\varphi_{f}(\ell,g)=\rho_{\ell}\left(\xi^{fg}\right)$ denote the image of the filter $\xi^{fg}\in\mathcal{A}_{\ell}$ that process the $g$th feature coming from the layer $\ell-1$  and that is associated to $f$th feature in layer $\ell$. As indicated before, $\sigma_{\ell}$ indicates the Lipschitz mapping from layer $\ell$ to layer $\ell+1$. The term $\mathbf{x}_{\ell-1}^{g}$ indicates the $g$th feature in the layer $\ell-1$. Then, we have that: 
\begin{multline}
\left\Vert\mathbf{x}_{\ell}^{f}-\tilde{\mathbf{x}}_{\ell}^{f}\right\Vert\leq
\left\Vert\sigma_{\ell-1}\sum_{g_{\ell-1}}\varphi_{g_{\ell}}(\ell-1,g_{\ell-1})\sigma_{\ell-2}\right.\\
\left.\sum_{g_{\ell-2}}\varphi_{g_{\ell-1}}(\ell-2,g_{\ell-2})
\cdots\sigma_{1}\sum_{g_{1}}\varphi_{g_{2}}\mathbf{x}\right.-\\
\left. \sigma_{\ell-1}\sum_{g_{\ell-1}}\tilde{\varphi}_{g_{\ell}}(\ell-1,g_{\ell-1})\sigma_{\ell-2}\sum_{g_{\ell-2}}\tilde{\varphi}_{g_{\ell-1}}(\ell-2,g_{\ell-2})\right.\\
\left.\cdots\sigma_{1}\sum_{g_{1}}\tilde{\varphi}_{g_{2}}(1,g_{1})\mathbf{x}
\right\Vert.
\label{eq:longalgNNexp1}
\end{multline}
In order to exapand eqn.~(\ref{eq:longalgNNexp1}) we start pointing out that:
\begin{multline}
A_{\ell+1}\sigma_{\ell}(a)-\tilde{A}_{\ell+1}\sigma_{\ell}(\tilde{a})=
\\
(A_{\ell+1}-\tilde{A}_{\ell+1})\sigma_{\ell}(a)+\tilde{A}_{\ell+1}(\sigma_{\ell}(a)-\sigma_{\ell}(\tilde{a}))
\end{multline}
where $A_{\ell+1}$ and $\tilde{A}_{\ell+1}$ indicate filter operators and their perturbed versions, respectively. Now, noticing that $\Vert\sigma_{\ell}(a)-\sigma_{\ell}(b)\Vert\leq C_{\ell}\Vert a-b\Vert$, $\Vert A_{\ell+1}-\tilde{A}_{\ell+1}\Vert\leq\boldsymbol{\Delta}_{\ell}$ and $\Vert A_{\ell+1}\Vert\leq B_{\ell+1}$ we have the following relations
\begin{equation}
\sum_{g_{k}}\Vert\alpha-\tilde{\alpha}\Vert\leq\sum_{g_{k}}\left( 
\boldsymbol{\Delta}_{k}\Vert\sigma_{k-1}(\alpha)\Vert+B_{k}C_{k-1}\Vert\beta-\tilde{\beta}\Vert
\right)
\label{eq:aux_longeq1}
\end{equation}
\begin{equation}
\sum_{g_{k}}\Vert\beta-\tilde{\beta}\Vert\leq\sum_{g_{k}}\sum_{g_{k-1}}\Vert\alpha-\tilde{\alpha}\Vert
\label{eq:aux_longeq2}
\end{equation}
\begin{equation}
\sum_{g_{k}}\Vert\sigma_{k-1}(\alpha)\Vert\leq\left(\prod_{r=1}^{k}F_{r}\right)\left(\prod_{r=1}^{k-1}C_{r}B_{r}\right)\Vert\mathbf{x}\Vert
\label{eq:aux_longeq3}
\end{equation}
where $\alpha$ and $\tilde{\alpha}$ represent sequences of symbols in eqn.~(\ref{eq:longalgNNexp1}) that start with a symbol of the type $\varphi$, while $\beta$ and $\tilde{\beta}$ indicate a sequence of symbols that start with a summation  symbol, and the tilde makes reference to symbols that are associated to the perturbed representations. The term $\boldsymbol{\Delta}_{\ell}$ is associated to the difference between the operators and their perturbed versions (see definition~\ref{def:stabilityoperators1}) in the layer $\ell$ and whose values are given in Theorems~\ref{theorem:HvsFrechet} and~\ref{theorem:uppboundDH}. Combining eqns.~(\ref{eq:aux_longeq1}),~(\ref{eq:aux_longeq2}) and~(\ref{eq:aux_longeq3}) we have:
\begin{multline}
\left\Vert\mathbf{x}_{L}^{f}-\tilde{\mathbf{x}}_{L}^{f}
\right\Vert\leq
\sum_{\ell=1}^{L}\boldsymbol{\Delta}_{\ell}\left(\prod_{r=\ell}^{L}C_{r}\right)\left(\prod_{r=\ell+1}^{L}B_{r}\right)\\
\left(\prod_{r=\ell}^{L-1}F_{r}\right)\left(\prod_{r=1}^{\ell-1}C_{r}F_{r}B_{r}\right)\left\Vert\mathbf{x}\right\Vert,
\end{multline}
where the products $\prod_{r=a}^{b}F(r)=0$ if $b<a$. Now taking into account that 
\begin{multline*}
\left\Vert
\Phi\left(\mathbf{x},\{ \mathcal{P}_{\ell} \}_{1}^{L},\{ \mathcal{S}_{\ell}\}_{1}^{L}\right)-
\Phi\left(\mathbf{x},\{ \mathcal{P}_{\ell} \}_{1}^{L},\{ \tilde{\mathcal{S}}_{\ell}\}_{1}^{L}\right)
\right\Vert^{2} 
\\
=
\sum_{f=1}^{F_{L}}\left\Vert\mathbf{x}_{L}^{f}-\tilde{\mathbf{x}}_{L}^{f}
\right\Vert^{2}
\end{multline*}
we have
\begin{multline}
\left\Vert
\Phi\left(\mathbf{x},\{ \mathcal{P}_{\ell} \}_{1}^{L},\{ \mathcal{S}_{\ell}\}_{1}^{L}\right)-
\Phi\left(\mathbf{x},\{ \mathcal{P}_{\ell} \}_{1}^{L},\{ \tilde{\mathcal{S}}_{\ell}\}_{1}^{L}\right)
\right\Vert
\\
\leq
\sqrt{F_{L}}
\sum_{\ell=1}^{L}\boldsymbol{\Delta}_{\ell}\left(\prod_{r=\ell}^{L}C_{r}\right)\left(\prod_{r=\ell+1}^{L}B_{r}\right)
\left(\prod_{r=\ell}^{L-1}F_{r}\right)
\\
\left(\prod_{r=1}^{\ell-1}C_{r}F_{r}B_{r}\right)\left\Vert\mathbf{x}\right\Vert
\end{multline}
\end{proof}

%
%

%
\section{Discussion}\label{sec:discussion}
%
%
\begin{figure}[t]
    \centering

\colorlet{my_blue}{blue!50!cyan}
\definecolor{my_red}{rgb}{1, 0, 0.4980}

\def \thisplotscale {3.6}
\def \unit {\thisplotscale cm}

\def \frequencyresponse 
     {   0.8*exp(-(1*(x-1.2))^2) 
      -0.2*exp(-(0.5*(x-7))^2)+0.5*exp(-(1*(x))^2)
       + 0.7*exp(-(0.7*(x-4))^2) 
       + 0.8*exp(-(1.4*(x-6))^2) 
       + 0.1}

\hspace{-2.9mm}
\begin{tikzpicture}[x = 1*\unit, y=1*\unit]

\def \factorx {2.4/8}
\def \deltax  {0.5*\factorx}
\def \shadeshift  {0.05}

\path [fill=black!20, opacity = 0.5] 
              (\deltax - 0.001*\factorx - \shadeshift, 0.00) rectangle 
              (\deltax + 0.030*\factorx + \shadeshift, 1.00);
\path [fill=black!20, opacity = 0.5] 
              (\deltax + 3.393*\factorx - \shadeshift, 0.00) rectangle 
              (\deltax + 3.770*\factorx + \shadeshift, 1.00);
\path [fill=black!20, opacity = 0.5] 
              (\deltax + 6.048*\factorx - \shadeshift, 0.00) rectangle 
              (\deltax + 6.720*\factorx + \shadeshift, 1.00);

\begin{axis}[scale only axis,
             width  = 2.4*\unit,
             height = 1*\unit,
             xmin = -0.5, xmax=7.5,
             xtick = {0.03, -0.01, 3.77, 3.393, 6.72, 6.048},
             xticklabels = {\red{$\quad\quad\vert\tilde{\lam}_1\vert\phantom{\lam}$},
                            \blue{$\vert\lam_1\vert\quad\quad$}, 
                            \red{$\quad\tilde{\lam}_i\vert\phantom{\lam}$}, 
                            \blue{$\vert\lam_i\vert$},
                            \red{$\quad\vert\tilde{\lam}_{N}\vert\phantom{\lam}$},
                            \blue{$\vert\lam_N\vert$}},
             ymin = -0, ymax = 1.15,
             ytick = {-1},
             typeset ticklabels with strut,
             enlarge x limits=false]

\addplot+[samples at = {0.03, 0.91, 1.57, 
                        2.63, 3.77, 4.51, 
                        5.60, 6.72}, 
          color = my_red, 
          ycomb, 
          mark=otimes*, 
          mark options={my_red}]
         {\frequencyresponse};

\addplot+[samples at = {-0.01, 0.819, 1.413, 
                        2.367, 3.393, 4.059, 
                        5.04, 6.048}, 
          color = my_blue, 
          ycomb, 
          mark=oplus*, 
          mark options={my_blue}]
         {\frequencyresponse};

\addplot[ domain=-0.5:7.5, 
          samples = 80, 
          color = black,
          line width = 1.2]
         {\frequencyresponse};

\end{axis}
\end{tikzpicture}


\colorlet{my_blue}{blue!50!cyan}
\definecolor{my_red}{rgb}{1, 0, 0.4980}

\def \thisplotscale {3.6}
\def \unit {\thisplotscale cm}

\def \frequencyresponse 
     { 0.9 - 0.7*exp(-(0.7*(x-1.6))^2)+0.2*exp(-(2*(x-1.8))^2) }

\hspace{-2.9mm}
\begin{tikzpicture}[x = 1*\unit, y=1*\unit]

\def \factorx {2.4/8}
\def \deltax  {0.5*\factorx}
\def \shadeshift  {0.05}

\path [fill=black!20, opacity = 0.5] 
              (\deltax - 0.001*\factorx - \shadeshift, 0.00) rectangle 
              (\deltax + 0.030*\factorx + \shadeshift, 1.00);
\path [fill=black!20, opacity = 0.5] 
              (\deltax + 3.393*\factorx - \shadeshift, 0.00) rectangle 
              (\deltax + 3.770*\factorx + \shadeshift, 1.00);
\path [fill=black!20, opacity = 0.5] 
              (\deltax + 6.048*\factorx - \shadeshift, 0.00) rectangle 
              (\deltax + 6.720*\factorx + \shadeshift, 1.00);

\begin{axis}[scale only axis,
             width  = 2.4*\unit,
             height = 1*\unit,
             xmin = -0.5, xmax=7.5,
             xtick = {0.03, -0.01, 3.77, 3.393, 6.72, 6.048},
             xticklabels = {\red{$\quad\quad\vert\tilde{\lam}_1\vert\phantom{\lam}$},
                            \blue{$\vert\lam_1\vert\quad\quad$}, 
                            \red{$\quad\quad\vert\tilde{\lam}_i\vert\phantom{\lam}$}, 
                            \blue{$\vert\lam_i\vert$},
                            \red{$\quad\quad\vert\tilde{\lam}_{N}\vert\phantom{\lam}$},
                            \blue{$\vert\lam_N\vert$}},
             ymin = -0, ymax = 1.15,
             ytick = {-1},
             typeset ticklabels with strut,
             enlarge x limits=false]

\addplot+[samples at = {0.03, 0.91, 1.57, 
                        2.63, 3.77, 4.51, 
                        5.60, 6.72}, 
          color = my_red, 
          ycomb, 
          mark=otimes*, 
          mark options={my_red}]
         {\frequencyresponse};

\addplot+[samples at = {-0.01, 0.819, 1.413, 
                        2.367, 3.393, 4.059, 
                        5.04, 6.048}, 
          color =my_blue, 
          ycomb, 
          mark=oplus*, 
          mark options={my_blue}]
         {\frequencyresponse};

\addplot[ domain=-0.5:7.5, 
          samples = 80, 
          color = black,
          line width = 1.2]
         {\frequencyresponse};

\end{axis}
\end{tikzpicture}

   \caption{Filter properties and stability for algebraic operators considering algebras with a single generator. (Top) We depict a Lipschitz filter where it is possible to see that an arbitrary degree of selectivity can be achieved in any part of the spectrum. (bottom) We depict a Lipschitz integral filter where we can see how the magnitude of the filters tends to a constant value as the size of $\vert\lambda_{i}\vert$ grows. As a consequence there is no discriminability in one portion of the spectrum.}
 \label{fig_graph_dilation}
\end{figure}
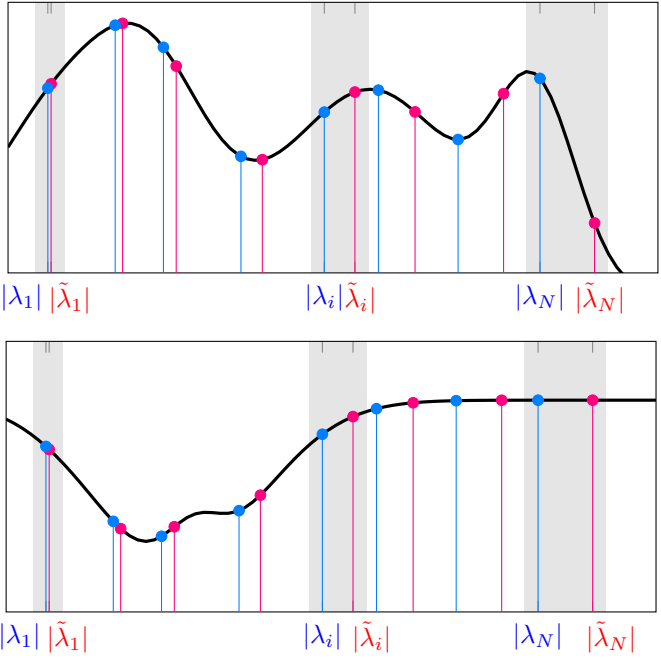
%
The mathematical form of the notion of stability introduced in definition~(\ref{def:stabilityoperators1}), eqn.~(\ref{eq:stabilityoperators1}) is uncannily similar to the expressions associated to the stability conditions stated in~\cite{mallat_ginvscatt,bruna_iscn} when the perturbation operator $\tau$  considered was affecting directly the domain of the signals.  This is consistent with the fact that  the size of the perturbation on the operators is the size of an induced diffeomorphism $\mathbf{T}$ acting on $\text{End}(\mathcal{M})$. Measuring the size of perturbations in this way, although less intuitive, provides an alternative way to handle and interpret perturbations on irregular domains. 

The nature and severity of the perturbations, imposes restrictions on the behavior of the filters needed to guarantee stability. The more complex and severe the perturbation is the more conditions  on the filters are necessary to guarantee stability. This in particular has implications regarding to the selectivity of the filters in some specific frequency bands. The trade-off between stability and selectivity in the filters of the AlgNN can be measured
by the norm of the Fr\'echet derivative of the filters $\Vert D_{p\vert\bbS}(\bbS)\Vert$. Those filters with slow variation and low selectivity will be associated with a low value of $\Vert D_{p\vert\bbS}(\bbS)\Vert$ while a filter that high variation will lead to large values of $\Vert D_{p\vert\bbS}(\bbS)\Vert$. This is also reflected in
 the size of the upper bounds in Theorems~\ref{theorem:HvsFrechet} up to~\ref{theorem:stabilityAlgNN1}. In particular, the size of $L_{0}$ and $L_{1}$ associated to the boundedness of the derivatives of the elements in $\mathcal{A}_{L0}$ and $\mathcal{A}_{L1}$. The smaller the value of $L_{0}, L_{1}$ the more stable the operators but the less selectivity we have. In Fig.~(\ref{fig_graph_dilation}) the properties in frequency of Lipschtiz and Lipschitz integral filters are depicted, where it is possible to see how the selectivity on portions of the spectrum is affected by properties that at the same time provide stability conditions for the perturbation models considered.

It is important to remark that the function $\sigma_{\ell}=P_{\ell}\circ\eta_{\ell}$ composed by the projection operator $P_{\ell}$ and the nonlinearity function $\eta_{\ell}$ relocates information from one layer to the other performing a mapping between different portions of the spectrum associated to each of the spaces $\mathcal{M}_{\ell}$.  As $\eta_{\ell}$ maps elements of $\mathcal{M}_{\ell}$ onto itself,  we can see in light of the decomposition of $\mathcal{M}_{\ell}$ in terms of irreducible representations that 
$\eta_{\ell}$ is nothing but a relocator of information from one portion of the spectrum to the other. Additionally, the simplicity of $\eta_{\ell}$ provides  a rich variety of choices that can be explored in future research. 

The notion of differentiability between metric spaces or Banach spaces can be considered also using the notion of \textit{G\^ateaux derivative} which is considered a \text{weak} notion of differentiability. Although Gateaux differentiability is in general different from Fr\'echet  differentiability, it is possible to show that when $\text{dim}\left(\text{End}(\mathcal{M})\right)<\infty$ both notions are equivalent for Lipschtiz functions, but substantial differences may exist if $\text{dim}\left(\text{End}(\mathcal{M})\right)=\infty$ even if the functions are Lipschitz~\cite{benyamini2000geometric,lindenstrauss2012frechet}.

%
%
%
%
\section{Conclusions}\label{sec:conclusions}

We considered algebraic neural networks (AlgNN) with commutative algebras as a tool to unify convolutional architectures like CNNs and GNNs, synthesizing the algebraic structure by exploiting results from the representation theory of algebras and algebraic signal processing. Within this framework, we showed that AlgNNs can, in general,  be stable to different types of perturbations, and the conditions under which the AlgNN operators are stable are determined by subsets of the algebra. We pointed out that the perturbations of the domain of the signals can be equivalently modeled as a perturbation of the representation or the signal model, and the degree of this perturbation can be measured by means of the Fr\'echet derivative of two functions, the image of the homomorhisms in $\text{End}(\mathcal{M})$ and the perturbation model $\mathbf{T}(\mathbf{S})$.  The perturbation model considered provides enough expressive power to represent a wide variety of perturbations affecting the domain of the signals or the operator themselves directly. In particular, when considering the algebraic model for GNNs, the absolute and the relative perturbation models can be considered particular cases of the perturbation model used in this work.


An interesting and relevant future research direction is to analyze stability of operators in signal models with non commutative algebras. This is important since we do not have shift invariance, and consequently there is the question of how this affects the stability properties and the constants in a stability bound if it exists. Another essential question to solve is how the ASP theory can be extended to consider stability of convolutional operators in signal models where the algebra is not of polynomial type. This has implications when considering convolutions with functions in $L_{2}(\mbR^n)$, where the algebra is $L_1(\mbR^{n})$ and the notion of generator set proposed in~\cite{algSP0} is insufficient/inadequate to capture the whole structure of the algebra.

\appendices

\section{Perturbation Model}
\label{appendix_upperboundPr}


\begin{theorem}\label{prop_commutation_factor} 
Let $\mathbf{T}_{r}, \bbT_{cr}$ as specified in eqn.~(\ref{eq:PrvsTr}) for the perturbation model with $\Vert \bbP_{r}\Vert_{F}\leq \delta\Vert\bbT_{r}\Vert$. Let $\bbe_{i}$ a orthonormal basis, $(\lambda_i,\bbv_{i})$ the eigenpairs of $\bbS$ and $(\mu_i, \bbu_{i})$ the eigenpairs of $\bbT_{r}$. Then

\begin{equation}
\delta\leq\sqrt{N}\hat{\delta}    
\end{equation}
where
\begin{equation}
 \hat{\delta}=\left(
          \Vert\bbT_{\bbu}-\bbT_{\bbv}\Vert
          +1\right)^{2}
          -1   
\end{equation}
and
\begin{equation}
\bbT_{\bbv} =
          \left(
                \sum_{i}\bbe_i \langle \bbv_i, \cdot\rangle
          \right),
          \quad
\bbT_{\bbu} =
          \left(
                \sum_{i}\bbe_i \langle \bbu_i, \cdot\rangle
          \right).          
\end{equation}
The terms $\langle \bbu_i,\cdot\rangle$ and $\langle \bbu_i,\cdot\rangle$ indicate the inner product operators with $\bbu_i$ and $\bbv_i$ respectively.
\end{theorem}

\begin{proof}

For our analysis we consider the following operators

\begin{equation}\label{eq:app_aux0}
\bbT_{\bbv^{\ast}} =
          \left(
                \sum_{i}\bbv_i \langle  \cdot, \bbe_i\rangle
          \right),
          \quad
\bbT_{\bbu^{\ast}} =
          \left(
                \sum_{i}\bbu_i \langle  \cdot, \bbe_i\rangle
          \right),          
\end{equation}
%
%
%
%
%
%
\begin{equation}\label{eq:app_aux2}
\bbT_{\boldsymbol{\mu}} =
          \left(
                \sum_{i}\mu_{i}\bbe_i \langle \bbe_i, \cdot\rangle
          \right),          
\end{equation}

and we remark that

\begin{equation}\label{eq:app_aux3}
\bbT_{r}=\bbT_{\bbu^{\ast}}\bbT_{\boldsymbol{\mu}}\bbT_{\bbu},
\quad
\bbT_{cr}=\bbT_{\bbv^{\ast}}\bbT_{\boldsymbol{\mu}}\bbT_{\bbv}
\end{equation}
with
\begin{equation}\label{app_auxaux1}
\Vert\bbT_{\bbu^{\ast}}\Vert=\Vert\bbT_{\bbu}\Vert=\Vert\bbT_{\bbv^{\ast}}\Vert=\Vert\bbT_{\bbv}\Vert=1    
\end{equation}
and
\begin{equation}\label{app_auxaux2}
\Vert\bbT_{\boldsymbol{\mu}}\Vert=\Vert\bbT_r\Vert.
\end{equation}
Now, we start taking into account that $\bbT_{r}$ can be rewritten as

\begin{multline}\label{eq:app_aux4}
\bbT_{r} =
          \bbT_{\bbv^{\ast}}\bbT_{\boldsymbol{\mu}}\bbT_{\bbv}
          +
          (\bbT_{\bbu^{*}}-\bbT_{\bbv^{*}})\bbT_{\boldsymbol{\mu}}
          (\bbT_{\bbu}-\bbT_{\bbv})
          \\
          +
          \bbT_{\bbv^{*}}\bbT_{\boldsymbol{\mu}}
          (\bbT_{\bbu}-\bbT_{\bbv})
          +
          (\bbT_{\bbu^{*}}-\bbT_{\bbv^{*}})\bbT_{\boldsymbol{\mu}}
          \bbT_{\bbv}.
\end{multline}

Then, taking into account that $\bbT_{r}=\bbT_{cr}+\bbP_{r}$ we have that

\begin{multline}\label{eq:app_aux5}
\bbP_{r}=  
(\bbT_{\bbu^{*}}-\bbT_{\bbv^{*}})\bbT_{\boldsymbol{\mu}}
          (\bbT_{\bbu}-\bbT_{\bbv})
          \\
          +
          \bbT_{\bbv^{*}}\bbT_{\boldsymbol{\mu}}
          (\bbT_{\bbu}-\bbT_{\bbv})
          +
          (\bbT_{\bbu^{*}}-\bbT_{\bbv^{*}})\bbT_{\boldsymbol{\mu}}
          \bbT_{\bbv}.
\end{multline}

Computing the norm on both sides of eqn.~(\ref{eq:app_aux5}) and applying the triangular inequality and the operator norm property it follows that

\begin{multline}\label{eq:app_aux6}
\Vert \bbP_{r}\Vert \leq   
\Vert\bbT_{\bbu^{*}}-\bbT_{\bbv^{*}}\Vert\Vert\bbT_{\boldsymbol{\mu}}\Vert
          \Vert\bbT_{\bbu}-\bbT_{\bbv}\Vert
          \\
          +
          \Vert\bbT_{\bbv^{*}}\Vert\Vert\bbT_{\boldsymbol{\mu}}\Vert
          \Vert\bbT_{\bbu}-\bbT_{\bbv}\Vert
          +
          \Vert\bbT_{\bbu^{*}}-\bbT_{\bbv^{*}}\Vert\Vert\bbT_{\boldsymbol{\mu}}\Vert
          \Vert\bbT_{\bbv}\Vert.
\end{multline}

Taking into account the expressions in eqn.~(\ref{app_auxaux1}), eqn.~(\ref{app_auxaux2}), and the fact that $\Vert\bbT_{\bbu^{*}}-\bbT_{\bbv^{*}}\Vert = \Vert\bbT_{\bbu}-\bbT_{\bbv}\Vert$, the eqn.~(\ref{eq:app_aux6}) turns into

\begin{equation}\label{eq:app_aux7}
\Vert \bbP_{r}\Vert \leq   
          \Vert\bbT_{\bbu}-\bbT_{\bbv}\Vert^{2}\Vert\bbT_{r}\Vert
          +
          2\Vert\bbT_{r}\Vert
          \Vert\bbT_{\bbu}-\bbT_{\bbv}\Vert
\end{equation}

which finally can be written as

\begin{equation}\label{eq:app_aux8}
\Vert \bbP_{r}\Vert \leq  
          \Vert\bbT_{r}\Vert
          \left(
          \left(
          \Vert\bbT_{\bbu}-\bbT_{\bbv}\Vert
          +1\right)^{2}
          -1
          \right).
\end{equation}

Now, from the relationship between the Frobenius norm and the $\ell_{2}$-norm we know that

\begin{equation}\label{eq:app_aux9}
 \frac{ \Vert \bbP_{r} \Vert_{F} }{\sqrt{N}} 
                    \leq
                    \Vert\bbP_{r}\Vert
\end{equation}

Combining eqn.~(\ref{eq:app_aux9}) and eqn.~(\ref{eq:app_aux8}) it follows that

\begin{equation}\label{eq:app_aux10}
\Vert \bbP_{r}\Vert_{F} \leq  
          \sqrt{N}
          \left(
          \left(
          \Vert\bbT_{\bbu}-\bbT_{\bbv}\Vert
          +1\right)^{2}
          -1
          \right)\Vert\bbT_{r}\Vert.
\end{equation}

\end{proof}


Notice that the term $\Vert\bbT_{\bbu}-\bbT_{\bbv}\Vert$ is a measure of the difference between the eigenvectors of $\bbS$ and the eigenvectors of $\bbT_{r}$.


\section{Frechet Derivative $D_{p\vert\mathbf{S}_{i}}(\mathbf{S})$}
\label{appendix:howtofindDH}
First, notice that $p(\mathbf{S})=\sum_{k_{1},\ldots,k_{m}=0}^{\infty}h_{k_{1}\ldots k_{m}}\mathbf{S}_{1}^{k_{1}}\ldots\mathbf{S}_{m}^{k_{m}}
=
\sum_{k_{i}=0}^{\infty}\mathbf{S}_{i}^{k_{i}}\mathbf{A}_{k_{i}}$, where $\mathbf{A}_{k_{i}}=\sum_{\substack{\{k_{j}\}=0\\ j\neq i}}^{\infty}h_{k_{1},\ldots,k_{m}}\prod_{\substack{j=1\\j\neq i}}^{m}\mathbf{S}_{j}^{k_{j}}.$
Then, it follows that
\begin{multline}
p(\mathbf{S}+\boldsymbol{\xi})-p(\mathbf{S})=
\sum_{k_{i}=0}^{\infty}\left(\mathbf{S}_{i}+\boldsymbol{\xi}_{i}\right)^{k_{i}}\mathbf{A}_{k_{i}}-
\sum_{k_{i}=0}^{\infty}\mathbf{S}_{i}^{k_{i}}\mathbf{A}_{k_{i}}
\label{eq:howtofingDH1}
\end{multline}
for $\boldsymbol{\xi}=(\mathbf{0},\ldots,\boldsymbol{\xi}_{i},\ldots,\mathbf{0})$. Considering the expansion $(\mathbf{S}_{i}+\boldsymbol{\xi}_{i})^{k_{i}}=\mathbf{S}^{k_{i}}_{i}+\boldsymbol{\xi}_{i}^{k_{i}}+\sum_{r=1}^{k-1}\boldsymbol{\pi}_{r,k_{i}-r}(\mathbf{S}_{i},\boldsymbol{\xi}_{i})$ for $k_{i}\geq 2$, eqn.~(\ref{eq:howtofingDH1}) takes the form
\begin{multline}
p(\mathbf{S}+\boldsymbol{\xi})-p(\mathbf{S})
                                                                         =
\\
\sum_{k_{i}=1}^{\infty}\sum_{r=1}^{k_{i}-1}\boldsymbol{\pi}_{r,k_{i}-r}\left(\boldsymbol{\xi}_{i},\mathbf{S}_{i}\right)\mathbf{A}_{k_{i}}
                                                                 +
\sum_{k_{i}=1}^{\infty}\boldsymbol{\xi}_{i}^{k_{i}}\mathbf{A}_{k_{i}}.
\label{eq:howtofingDH2}
\end{multline}
Separating the linear terms on $\boldsymbol{\xi}_{i}$ eqn.~(\ref{eq:howtofingDH2}) leads to
\begin{multline}
p(\mathbf{S}+\boldsymbol{\xi})-p(\mathbf{S})
                                                                          =
\sum_{k_{i}=1}^{\infty}\boldsymbol{\pi}_{1,k_{i}-1}\left(\boldsymbol{\xi}_{i},\mathbf{S}_{i}\right)\mathbf{A}_{k_{i}}
\\
+
\sum_{k_{i}=2}^{\infty}\sum_{r=2}^{k_{i}-1}\boldsymbol{\pi}_{r,k_{i}-r}\left(\boldsymbol{\xi}_{i},\mathbf{S}_{i}\right)\mathbf{A}_{k_{i}}
+
\sum_{k_{i}=2}^{\infty}\boldsymbol{\xi}^{k_{i}}\mathbf{A}_{k_{i}}.
\end{multline}
Therefore, taking into account the definition of Fr\'echet derivative (see Section~\ref{theorem:HvsFrechet}) it follows that
\begin{equation}
D_{p\vert\mathbf{S}_{i}}(\mathbf{S})\left\lbrace\boldsymbol{\xi}_{i}\right\rbrace
                                                                                                                                 =
\sum_{k_{i}=1}^{\infty}\boldsymbol{\pi}_{1,k_{i}-1}\left(\boldsymbol{\xi}_{i},\mathbf{S}_{i}\right)\mathbf{A}_{k_{i}}
\end{equation}
%
%
%


\section{Proof of Theorems: Extended Version}\label{sec:proofofTheoremsEXT}

In this appendix we provide a more detailed version of the proofs stated in Section~\ref{sec:proofofTheorems} made in a concise way due to the IEEE publication page limit. In particular, we show more detailed proofs of Theorem~\ref{theorem:uppboundDH} and Theorem~\ref{theorem:uppboundDHmultg}.

\subsection{Proof of Theorem~\ref{theorem:uppboundDH} and Theorem~\ref{theorem:uppboundDHmultg}}


\subsubsection{Proof of Theorem~\ref{theorem:uppboundDH}}

\begin{proof}

	Taking into account the definition of the Fr\'echet derivative of $p$ on $\mathbf{S}$ (see Appendix~\ref{appendix:howtofindDH}) we have
	\begin{equation}
	\left\Vert D_{p\vert\mathbf{S}}(\mathbf{S})\left\lbrace\mathbf{T}(\mathbf{S})\right\rbrace\right\Vert
	=
	\left\Vert\sum_{k=1}^{\infty}h_{k}\boldsymbol{\pi}_{1,k-1}\left(\mathbf{T}(\mathbf{S}),\mathbf{S}\right)
	\right\Vert, 
	\end{equation}
	and re-organizating terms we have
	\begin{equation}
	\left\Vert D_{p\vert\mathbf{S}}(\mathbf{S})\left\lbrace\mathbf{T}(\mathbf{S})\right\rbrace\right\Vert=\left\Vert\sum_{\ell=1}^{\infty}\mathbf{S}^{\ell-1}\mathbf{T}(\mathbf{S})\sum_{k=\ell}^{\infty}h_{k}\mathbf{S}^{k-\ell}\right\Vert.
	\end{equation}
	Taking into account eqn.~(\ref{eq:PrvsTr}), it follows that
	\begin{multline}
	\left\Vert D_{p\vert\mathbf{S}}(\mathbf{S})\left\lbrace\mathbf{T}(\mathbf{S})\right\rbrace\right\Vert=
	\\
	\left\Vert\sum_{\ell=1}^{\infty}\left(\mathbf{T}_{0c}\mathbf{S}^{\ell-1}+\mathbf{S}^{\ell-1}\mathbf{P}_{0}\right)\sum_{k=\ell}^{\infty}h_{k}\mathbf{S}^{k-\ell}\right.
	\\
	\left.+\sum_{\ell=1}^{\infty}\left(\mathbf{T}_{1c}\mathbf{S}^{\ell}
	+\mathbf{S}^{\ell-1}\mathbf{P}_{1}\mathbf{S}\right)
	\sum_{k=\ell}^{\infty}h_{k}\mathbf{S}^{k-\ell}\right\Vert.
	\label{eq:auxDHT1a}
	\end{multline}
	Applying the triangle inequality and distribuiting the sum we have
	\begin{multline}\label{eq:DpDTboundeqn1a}
	\left\Vert D_{p\vert\mathbf{S}}(\mathbf{S})\left\lbrace\mathbf{T}(\mathbf{S})\right\rbrace\right\Vert
	\leq
	\left\Vert\mathbf{T}_{0c}\sum_{\ell=1}^{\infty}\sum_{k=\ell}^{\infty}\mathbf{S}^{k-1}h_{k}\right\Vert
	\\
	+\left\Vert D_{p\vert\mathbf{S}}(\mathbf{S})\left\lbrace\mathbf{P}_{0}\right\rbrace\right\Vert
	+\left\Vert\mathbf{T}_{1c}\sum_{\ell=1}^{\infty}\sum_{k=\ell}^{\infty}\mathbf{S}^{k}h_{k}\right\Vert
	\\
	+\left\Vert D_{p\vert\mathbf{S}}(\mathbf{S})\left\lbrace\mathbf{P}_{1}\mathbf{S}\right\rbrace\right\Vert
	\end{multline}
	Now, we analyze term by term in eqn.~(\ref{eq:DpDTboundeqn1a}). For the first term we take into account that
	\begin{equation}
	\sum_{\ell=1}^{\infty}\sum_{k=\ell}^{\infty}\mathbf{S}^{k-1}h_{k}=\sum_{k=1}^{\infty}k h_{k}\mathbf{S}^{k-1}    
	\end{equation}
	and we apply the product norm property to obtain

	\begin{equation}\label{eq_thm_uppboundDH0}
	\left\Vert\mathbf{T}_{0c}\sum_{\ell=1}^{\infty}\sum_{k=\ell}^{\infty}\mathbf{S}^{k-1}h_{k}\right\Vert
	\\
	\leq
	\left\Vert
	            \mathbf{T}_{0c}
	\right\Vert
	\left\Vert
	           \sum_{k=1}^{\infty}k h_{k}\mathbf{S}^{k-1}
	\right\Vert
	.
	\end{equation}
	By means of the spectral theorem and the functional calculus of compact normal operators~\cite{conway1994course,aliprantis2002invitation} we have that the eigenvalues of $ \sum_{k=0}^{\infty}kh_{k}\bbS^{k-1}$ are given by $\sum_{k=0}^{\infty}k h_{k}\lambda_{i}^{k-1}=p^{'}(\lambda_{i})$. Therefore
	\begin{equation}
	\left\Vert
 	          \sum_{k=1}^{\infty}k h_{k}\mathbf{S}^{k-1}
	\right\Vert
	\\
	=
	\max_{i}
	\left\vert
	           \sum_{k=1}^{\infty}k h_{k}\lambda_{i}^{k-1}
	\right\vert
	.
	\end{equation}
     Since $p(\lambda)$ belongs to $\ccalA_{L_0}$ we have $\vert p^{'}(\lambda_{i}) \vert \leq L_0$. Then, taking into account that $\Vert \bbT_{0c}\Vert = \Vert \bbT_0\Vert$ eqn.~\eqref{eq_thm_uppboundDH0} leads to
	\begin{equation}
	\left\Vert
	        \mathbf{T}_{0c}\sum_{\ell=1}^{\infty}\sum_{k=\ell}^{\infty}\mathbf{S}^{k-1}h_{k}
	\right\Vert
	\\
	\leq
    L_{0}\Vert\mathbf{T}_{0}\Vert.
	\end{equation}

	For the second term in eqn.~(\ref{eq:DpDTboundeqn1a}) we start pointing out that
	the operator $ 
	D_{p\vert\mathbf{S}}(\mathbf{S})\left\lbrace\mathbf{P}_{0}\right\rbrace
	$ acting on $\bbP_{0}$ is  an operator in the space of Endomorphisms of $\text{End}({\ccalM})$, i.e. $D_{p\vert\mathbf{S}}(\mathbf{S})\left\lbrace\cdot\right\rbrace\in\text{End}(\text{End}(\ccalM))$. An eigen-pair $(\zeta,\bbV)$ of $D_{p\vert\mathbf{S}}(\mathbf{S})\left\lbrace\cdot\right\rbrace$ is composed of a scalar $\zeta$ (the eigenvalue) and a nonzero operator $\bbV\in\text{End}(\ccalM)$ (the eigenvector) with $D_{p\vert\mathbf{S}}(\mathbf{S})\left\lbrace\bbV\right\rbrace =\zeta\bbV$. If we consider the operator 
    $D_{p\vert\mathbf{S}}(\mathbf{S})\left\lbrace\cdot\right\rbrace$ inside the class of Hilbert-Schmidt operators\footnote{An operator is said to be Hilbert-Schmidt if its Frobenius or Hilbert-Schmidt norm is finite.}, which is \textbf{isometric-isomorphic} to $\text{End}(\ccalM)^{\ast} \otimes \text{End}(\ccalM)$~\cite{conway1994course}, there is a unique linear operator $\overline{D}_{p\vert\mathbf{S}}(\mathbf{S})
    \left\lbrace 
    \cdot
    \right\rbrace \in \text{End}(\ccalM)^{\ast} \otimes \text{End}(\ccalM)$ with $\Vert \overline{D}_{p\vert\mathbf{S}}(\mathbf{S})
    \left\lbrace 
    \cdot
    \right\rbrace\Vert = \Vert D_{p\vert\mathbf{S}}(\mathbf{S})\left\lbrace\cdot \right\rbrace \Vert $. This operator is given by~\cite{higham2008functions}
	\begin{equation}
	\overline{D}_{p\vert\mathbf{S}}(\mathbf{S})
	                             \left\lbrace 
	                                           \cdot
	                             \right\rbrace
	             =  
	             \left(
	 \sum_{k=1}^{\infty}h_{k} 
	                      \sum_{j=1}^{k}\left( \bbS^{k-j} \right)^{\ast} \otimes \bbS^{j-1}
	             \right)   
	             \left\lbrace 
	                        \cdot
	             \right\rbrace 
	             ,                 
	\end{equation}
	and it acts on elements of a space $\ccalW\cong \text{End}(\ccalM)$. Notice that an eigenvector of the operator $ \left( \bbS^{k-j} \right)^{\ast} \otimes \bbS^{j-1}$ is indeed given by $\bbv_{r}^{\ast}\otimes\bbv_s$, where $\bbv_{r}^{\ast}$ is an eigenvector of $\left( \bbS^{k-j} \right)^{\ast}$ and $\bbv_s$ is an eigenvector of $\bbS^{j-1}$. With this at hand, we point out that by means of the spectral theorem, we also have that
	\begin{equation}
	        \left( \bbS^{k-j} \right)^{\ast} \otimes \bbS^{j-1}
	        =
	        \sum_{r,s=1}^{\infty}\lambda_{r}^{k-j}\lambda_{s}^{j-1}
	                   \left( 
	                           \bbT_{\bbv_r^{\ast}}\otimes\bbT_{\bbv_s}
	                   \right)
	                   ,
	\end{equation}
     where $\bbT_{\bbv_r^{\ast}}$ and $\bbT_{\bbv_s}$ are the projection operators associated to the eigenvectors of $\bbS^{\ast}$ and $\bbS$ respectively. Then, it follows
      \begin{equation}
      \overline{D}_{p\vert\mathbf{S}}(\mathbf{S})
      \left\lbrace 
      \cdot
      \right\rbrace
      =  
      \left(
             \sum_{k=1}^{\infty}h_{k} 
             \sum_{j=1}^{k}
                   \sum_{r,s=1}^{\infty}\lambda_{r}^{k-j}\lambda_{s}^{j-1}
                          \left( 
                                \bbT_{\bbv_r^{\ast}}\otimes\bbT_{\bbv_s}
                          \right)
      \right)   
      \left\lbrace 
      \cdot
      \right\rbrace 
      ,                 
      \end{equation}
     and consequently the eigenvalues of $\overline{D}_{p\vert\mathbf{S}}(\mathbf{S})
     \left\lbrace 
     \cdot
     \right\rbrace$ are given by
     \begin{equation}
                    \zeta_{r,s}
                              =
                    \sum_{k=1}^{\infty}
                    h_{k}
                    \sum_{j=1}^{k}\lambda_{r}^{k-j}\lambda_{s}^{j-1}
                    =\left\lbrace
                    \begin{array}{ccc}
                    \frac{p(\lambda_{r})-p(\lambda_{s})}{\lambda_{r}-\lambda_{s}} & \text{if} & \lambda_{r}\neq\lambda_{s}\\
                    p^{'}(\lambda_{r}) & \text{if} & \lambda_{r}=\lambda_{s}
                    .
                    \end{array}
                    \right. 
     \end{equation}
     Since $p(\lambda)$ is $L_0$-Lipschitz it follows that $     \left\vert 
     \zeta_{r,s}
     \right\vert
     \leq 
     L_{0}$ and therefore
	\begin{equation}
		\Vert 
		D_{p\vert\mathbf{S}}(\mathbf{S})
		\left\lbrace
		\cdot
		\right\rbrace 
		\Vert
	=	
	\Vert 
	           \overline{D}_{p\vert\mathbf{S}}(\mathbf{S})
	                              \left\lbrace 
	                                           \cdot
	                              \right\rbrace
	\Vert 
	\leq 
	L_0
	.
	\end{equation}
Now, taking into account that~\cite{conway1994course}  $\Vert \bbA \Vert \leq \Vert \bbA\Vert_{F}$ for any bounded operator $\bbA$ we have
\begin{equation}
\Vert 
		D_{p\vert\mathbf{S}}(\mathbf{S})
		         \left\lbrace
		                \bbP_0
		         \right\rbrace 
\Vert
		\leq
\Vert 
       D_{p\vert\mathbf{S}}(\mathbf{S})
                  \left\lbrace
                             \bbP_0
                  \right\rbrace 
\Vert_{F}	
,	
\end{equation}	
and taking into account that~\cite{conway1994course} (p. 267)
\begin{equation}
\Vert 
D_{p\vert\mathbf{S}}(\mathbf{S})
\left\lbrace
\bbP_0
\right\rbrace 
\Vert_{F}
\leq
\Vert 
D_{p\vert\mathbf{S}}(\mathbf{S})
\Vert
\Vert
\bbP_0
\Vert_{F}	
,
\end{equation}	
we have
	\begin{equation}
	\left\Vert 
	          D_{p\vert\mathbf{S}}(\mathbf{S})\left\lbrace\mathbf{P}_{0}\right\rbrace
	\right\Vert
	\leq 
	L_{0}
	     \Vert
	             \mathbf{P}_{0}
	     \Vert_{F}  
	     ,  
	\end{equation}
	and with the fact that $\Vert\mathbf{P}_{0}\Vert_{F}\leq\delta\Vert\mathbf{T}_{0}\Vert$, it follows
	\begin{equation}
	\left\Vert D_{p\vert\mathbf{S}}(\mathbf{S})\left\lbrace\mathbf{P}_{0}\right\rbrace\right\Vert \leq L_{0}\delta\Vert\mathbf{T}_{0}\Vert.    
	\end{equation}

	For the third term in eqn.~(\ref{eq:DpDTboundeqn1a}), we take into account that 

	\begin{equation}
	\sum_{\ell=1}^{\infty}\sum_{k=\ell}^{\infty}\mathbf{S}^{k} h_{k}=\sum_{k=1}^{\infty}k h_{k}\mathbf{S}^{k},    
	\end{equation}
	and we apply the norm product property  to obtain
	\begin{equation}
	\left\Vert
	         \mathbf{T}_{1c}\sum_{\ell=1}^{\infty}\sum_{k=\ell}^{\infty}\mathbf{S}^{k}h_{k}
	\right\Vert
	\\
	\leq
	\left\Vert
	       \mathbf{T}_{1c}
	\right\Vert
	\left\Vert
	         \sum_{k=1}^{\infty}k h_{k}\mathbf{S}^{k}
	\right\Vert
	.
	\end{equation}
	As a consequence of the spectral theorem, the eigenvalues of $\sum_{k=1}^{\infty}k h_{k}\mathbf{S}^{k}$ are given by $\sum_{k=1}^{\infty}kh_{k}\lambda_{i}^{k}=\lambda_i p^{'}(\lambda_i)$ and therefore
	\begin{equation}
	    	\left\Vert
	    	\sum_{k=1}^{\infty}k h_{k}\mathbf{S}^{k}
	    	\right\Vert
	    	=
	    	\max_{i}
	    	\left\vert
	    	       \sum_{k=1}^{\infty}kh_{k}\lambda_{i}^{k}
	    	\right\vert
	    	.
	\end{equation}
	Since $p(\lambda)$ belongs to $\mathcal{A}_{L_{1}}$ we have $\lambda_i p^{'}(\lambda_i)\leq L_1$. Then, taking into account that $\Vert \bbT_{1c}\Vert = \Vert \bbT_{1}\Vert$, it follows that 
	\begin{equation}
	\left\Vert\mathbf{T}_{1c}\sum_{\ell=1}^{\infty}\sum_{k=\ell}^{\infty}\mathbf{S}^{k}h_{k}\right\Vert
	\leq 
	L_{1}
	\Vert
	      \mathbf{T}_{1}
    \Vert.
	\end{equation}

	Finally, for the fourth term we use the notation $\tilde{D}(\mathbf{S})\left\lbrace\mathbf{P}_{1}
	\right\rbrace
	=D_{p\vert\mathbf{S}}(\mathbf{S})\left\lbrace\mathbf{P}_{1}\mathbf{S}\right\rbrace$. We start pointing out that $\tilde{D}(\mathbf{S})\left\lbrace\mathbf{P}_{1}
	\right\rbrace$ is an operator in $\text{End}(\text{End}(\ccalM))$. Considering $\tilde{D}(\mathbf{S})\left\lbrace\cdot
	\right\rbrace$ as a Hilbert-Schmidt operator (finite Frobenius norm), we have that there is a unique linear operator $\overline{\tilde{D}}(\mathbf{S})\left\lbrace\cdot
	\right\rbrace \in \text{End}(\ccalM)^{\ast}\otimes\text{End}(\ccalM)$ with $\Vert \overline{\tilde{D}}(\mathbf{S})\left\lbrace\cdot
	\right\rbrace\Vert = \Vert\tilde{D}(\mathbf{S})\left\lbrace\cdot
	\right\rbrace\Vert$~\cite{conway1994course}. This operator is given by~\cite{higham2008functions}
	\begin{equation}
	\overline{\tilde{D}}_{p\vert\mathbf{S}}(\mathbf{S})
	\left\lbrace 
	\cdot
	\right\rbrace
	=  
	\left(
	\sum_{k=1}^{\infty}h_{k} 
	\sum_{j=1}^{k}\left( \bbS^{k-j+1} \right)^{\ast} \otimes \bbS^{j-1}
	\right)   
	\left\lbrace 
	\cdot
	\right\rbrace 
	,                 
	\end{equation}
   and it acts on a space $\ccalW \cong \text{End}(\ccalM)$. The eigenvectors of the operator $ \left( \bbS^{k-j} \right)^{\ast} \otimes \bbS^{j-1}$ are given by $\bbv_{r}^{\ast}\otimes\bbv_s$, where $\bbv_{r}^{\ast}$ is an eigenvector of $\left( \bbS^{k-j+1} \right)^{\ast}$ and $\bbv_s$ is an eigenvector of $\bbS^{j-1}$. Now, using the spectral theorem we also have
	\begin{equation}
	\left( \bbS^{k-j+1} \right)^{\ast} \otimes \bbS^{j-1}
	=
	\sum_{r,s=1}^{\infty}\lambda_{r}^{k-j+1}\lambda_{s}^{j-1}
	\left( 
	\bbT_{\bbv_r^{\ast}}\otimes\bbT_{\bbv_s}
	\right)
	,
	\end{equation}
	where $\bbT_{\bbv_r^{\ast}}$ and $\bbT_{\bbv_s}$ are the projection operators associated to the eigenvectors of $\bbS^{\ast}$ and $\bbS$ respectively. Then, it follows
	\begin{equation}
	\overline{\tilde{D}}_{p\vert\mathbf{S}}(\mathbf{S})
	\left\lbrace 
	\cdot
	\right\rbrace
	=  
	\left(
	\sum_{k=1}^{\infty}h_{k} 
	\sum_{j=1}^{k}
	\sum_{r,s=1}^{\infty}\lambda_{r}^{k-j+1}\lambda_{s}^{j-1}
	\left( 
	\bbT_{\bbv_r^{\ast}}\otimes\bbT_{\bbv_s}
	\right)
	\right)   
	\left\lbrace 
	\cdot
	\right\rbrace 
	,                 
	\end{equation}
	and consequently the eigenvalues of $\overline{\tilde{D}}_{p\vert\mathbf{S}}(\mathbf{S})
	\left\lbrace 
	\cdot
	\right\rbrace$ are given by
	\begin{equation}
	\zeta_{rs}
	=
	\sum_{k=1}^{\infty}
	h_{k}
	\sum_{j=1}^{k}\lambda_{r}^{k-j+1}\lambda_{s}^{j-1}
	=
	\left\lbrace
	\begin{array}{ccc}
	\frac{p(\lambda_{r})-p(\lambda_{s})}{\lambda_{r}-\lambda_{s}}\lambda_{r} & \text{if} & \lambda_{r}\neq\lambda_{s}\\
	\lambda_{r}p^{'}(\lambda_{r}) & \text{if} & \lambda_{r}=\lambda_{s}
	\end{array}
	\right. 
	.
	\end{equation}
	Since $p(\lambda)$ belongs to $\ccalA_{L_1}$ it follows that $\vert \zeta_{r,s}\vert\leq L_1$ and therefore
		\begin{equation}
		\left\Vert 
		\tilde{D}_{p\vert\mathbf{S}}(\mathbf{S})
		\left\lbrace
		\cdot
		\right\rbrace 
		\right\Vert
		=	
		\left\Vert 
		\overline{\tilde{D}}_{p\vert\mathbf{S}}(\mathbf{S})
		\left\lbrace 
		\cdot
		\right\rbrace
		\right\Vert 
		\leq 
		L_1
		.
		\end{equation}

Since~\cite{conway1994course} $\Vert \bbA \Vert \leq \Vert \bbA\Vert_{F}$ for any bounded operator $\bbA$ we have
\begin{equation}
\Vert 
\tilde{D}_{p\vert\mathbf{S}}(\mathbf{S})
\left\lbrace
\bbP_1
\right\rbrace 
\Vert
\leq
\Vert 
\tilde{D}_{p\vert\mathbf{S}}(\mathbf{S})
\left\lbrace
\bbP_1
\right\rbrace 
\Vert_{F}	
.	
\end{equation}	
Taking into account that~\cite{conway1994course} (p. 267)
\begin{equation}
\Vert 
\tilde{D}_{p\vert\mathbf{S}}(\mathbf{S})
\left\lbrace
\bbP_1
\right\rbrace 
\Vert_{F}
\leq
\Vert 
\tilde{D}_{p\vert\mathbf{S}}(\mathbf{S})
\Vert
\Vert
\bbP_1
\Vert_{F}	
,
\end{equation}	
we have
\begin{equation}
\left\Vert 
\tilde{D}_{p\vert\mathbf{S}}(\mathbf{S})\left\lbrace\mathbf{P}_{1}\right\rbrace
\right\Vert
\leq 
L_{1}
\Vert
\mathbf{P}_{1}
\Vert_{F}  
,  
\end{equation}
and with the fact that $\Vert\mathbf{P}_{1}\Vert_{F}\leq\delta\Vert\mathbf{T}_{1}\Vert$, it follows
\begin{equation}
\left\Vert D_{p\vert\mathbf{S}}(\mathbf{S})\left\lbrace\mathbf{P}_{1}\bbS\right\rbrace\right\Vert \leq L_{1}\delta\Vert\mathbf{T}_{1}\Vert.    
\end{equation}
	Putting all these results together into eqn.~(\ref{eq:DpDTboundeqn1a}) we reach
	\begin{equation}
	\left\Vert D_{p\vert\mathbf{S}}(\mathbf{S})\left\lbrace\mathbf{T}(\mathbf{S})\right\rbrace\right\Vert
	\leq
	(1+\delta)L_{0}\Vert\mathbf{T}_{0}\Vert+(1+\delta)L_{1}\Vert\mathbf{T}_{1}\Vert.
	\end{equation}
	Fignally, taking into account that
	\begin{equation}
	\left\Vert 
	               \bbT_{0}
	\right\Vert
	            \leq
	            \sup_{\mathbf{S}\in\mathcal{S}}\Vert\mathbf{T}(\mathbf{S})\Vert
	            ,
	            \quad
	\left\Vert
	                \bbT_1
	\right\Vert    
	               \leq     
	               \sup_{\mathbf{S}\in\mathcal{S}}\Vert D_{\mathbf{T}}(\mathbf{S}),    
	\end{equation}
	it follows that
	\begin{equation}
	\left\Vert D_{p\vert\mathbf{S}}(\mathbf{S})\left\lbrace\mathbf{T}(\mathbf{S})\right\rbrace\right\Vert
	\leq
	(1+\delta)\left(L_{0}\sup_{\mathbf{S}\in\mathcal{S}}\Vert\mathbf{T}(\mathbf{S})\Vert+L_{1}\sup_{\mathbf{S}\in\mathcal{S}}\Vert D_{\mathbf{T}}(\mathbf{S})\Vert\right)
	.
	\end{equation}

\end{proof}


\subsubsection{Proof of Theorem~\ref{theorem:uppboundDHmultg}}

\begin{proof}

Taking into account the definition of the Fr\'echet derivative of $p$ on $\mathbf{S}_{i}$ (see Appendix~\ref{appendix:howtofindDH}) we have
\begin{equation*}
\left\Vert D_{p\vert\mathbf{S}_{i}}(\mathbf{S})\left\lbrace\mathbf{T}(\mathbf{S}_{i})\right\rbrace\right\Vert
                  =
                     \left\Vert\sum_{k_{i}=1}^{\infty}\mathbf{A}_{k_{i}}\boldsymbol{\pi}_{1,k_{i}-1}\left(\mathbf{T}(\mathbf{S}_{i}),\mathbf{S}_{i}\right)
                     \right\Vert, 
\end{equation*}
and re-organizating terms we have
\begin{equation}
\left\Vert D_{p\vert\mathbf{S}_{i}}(\mathbf{S})\left\lbrace\mathbf{T}(\mathbf{S}_{i})\right\rbrace\right\Vert=\left\Vert\sum_{\ell=1}^{\infty}\mathbf{S}_{i}^{\ell-1}\mathbf{T}(\mathbf{S}_{i})\sum_{k_{i}=\ell}^{\infty}\mathbf{A}_{k_{i}}\mathbf{S}_{i}^{k_{i}-\ell}\right\Vert.
\end{equation}
Taking into account eqn.~(\ref{eq:PrvsTr}), it follows that
\begin{multline}
\left\Vert D_{p\vert\mathbf{S}_{i}}(\mathbf{S})\left\lbrace\mathbf{T}(\mathbf{S}_{i})\right\rbrace\right\Vert=
\\
\left\Vert\sum_{\ell=1}^{\infty}\left(\mathbf{T}_{0c,i}\mathbf{S}_{i}^{\ell-1}+\mathbf{S}^{\ell-1}_{i}\mathbf{P}_{0,i}\right)\sum_{k_{i}=\ell}^{\infty}\mathbf{A}_{k_{i}}\mathbf{S}_{i}^{k_{i}-\ell}\right.
\\
\left.+\sum_{\ell=1}^{\infty}\left(\mathbf{T}_{1c,i}\mathbf{S}_{i}^{\ell}
+\mathbf{S}_{i}^{\ell-1}\mathbf{P}_{1,i}\mathbf{S}_{i}\right)
\sum_{k=\ell}^{\infty}\mathbf{A}_{k_{i}}\mathbf{S}^{k_{i}-\ell}\right\Vert.
\label{eq:auxDHT1}
\end{multline}
Applying the triangle inequality and distribuiting the sum we have
\begin{multline}\label{eq:DpDTboundeqn1}
\left\Vert D_{p\vert\mathbf{S}_{i}}(\mathbf{S})\left\lbrace\mathbf{T}(\mathbf{S}_{i})\right\rbrace\right\Vert
\leq
\left\Vert\mathbf{T}_{0c,i}\sum_{\ell=1}^{\infty}\sum_{k_{i}=\ell}^{\infty}\mathbf{S}_{i}^{k_{i}-1}\mathbf{A}_{k_{i}}\right\Vert
\\
+
\left\Vert D_{p\vert\mathbf{S}_{i}}(\mathbf{S})\left\lbrace\mathbf{P}_{0,i}\right\rbrace\right\Vert
+
\left\Vert\mathbf{T}_{1c,i}\sum_{\ell=1}^{\infty}\sum_{k_{i}=\ell}^{\infty}\mathbf{S}^{k_{i}}\mathbf{A}_{k_{i}}\right\Vert
\\
+
\left\Vert D_{p\vert\mathbf{S}_{i}}(\mathbf{S})\left\lbrace\mathbf{P}_{1,i}\mathbf{S}_{i}\right\rbrace\right\Vert
\end{multline}
Now, we analyze term by term in eqn.~(\ref{eq:DpDTboundeqn1}). For the first term we take into account that
\begin{equation}
\sum_{\ell=1}^{\infty}\sum_{k_{i}=\ell}^{\infty}\mathbf{S}_{i}^{k_{i}-1}\mathbf{A}_{k_{i}}
=
\sum_{k_{i}=1}^{\infty}k_{i}\mathbf{A}_{k_{i}}\mathbf{S}_{i}^{k_{i}-1}    
\end{equation}
and we apply the product norm property to obtain

\begin{equation}
\left\Vert\mathbf{T}_{0c,i}\sum_{\ell=1}^{\infty}\sum_{k_{i}=\ell}^{\infty}\mathbf{S}_{i}^{k_{i}-1}\mathbf{A}_{k_{i}}\right\Vert
\\
\leq
\left\Vert
     \mathbf{T}_{0c,i}
\right\Vert
     \left\Vert
         \sum_{k_{i}=1}^{\infty}k_{i}\mathbf{A}_{k_{i}}\mathbf{S}_{i}^{k_{i}-1}
     \right\Vert 
.     
\end{equation}
Since all the shift operators $\bbS_i$ commute, by means of the spectral theorem we have that the $j$th eigenvalue of $\sum_{k_{i}=1}^{\infty}k_{i}\mathbf{A}_{k_{i}}\mathbf{S}_{i}^{k_{i}-1}$ is given by 
\begin{equation}
\sum_{k_{1},\ldots,k_{m}=0}^{\infty} k_i h_{k_{1}\ldots k_{m}}\lambda_{1,j}^{k_{1}}\ldots \lambda_{i,j}^{k_i -1}\ldots\lambda_{m,j}^{k_{m}}
 =
 \left.
 \frac{\partial p}{\partial\lambda_i}
 \right\vert_{\boldsymbol{\lambda}_j}
 ,
\end{equation}
where $\boldsymbol{\lambda}_j = \left( \lambda_{1,j} , \ldots, \lambda_{m,j}\right)$. Therefore 
\begin{multline}
  \left\Vert
  \sum_{k_{i}=1}^{\infty}k_{i}\mathbf{A}_{k_{i}}\mathbf{S}_{i}^{k_{i}-1}
  \right\Vert 
  \\
  =
  \max_{j}
  \left\vert
          \sum_{k_{1},\ldots,k_{m}=0}^{\infty} k_i h_{k_{1}\ldots k_{m}}\lambda_{1,j}^{k_{1}}\ldots \lambda_{i,j}^{k_i -1}\ldots\lambda_{m,j}^{k_{m}}
  \right\vert 
  .
\end{multline}
Since $p$ is $L_0$-Lipschitz we have that $\vert \partial p/\partial\lambda_i \vert \leq L_0$. Then, taking into account that $\Vert \bbT_{0c,i}\Vert = \Vert \bbT_{0,i}\Vert$ we have
\begin{equation}
\left\Vert\mathbf{T}_{0c,i}\sum_{\ell=1}^{\infty}\sum_{k_{i}=\ell}^{\infty}\mathbf{S}_{i}^{k_{i}-1}\mathbf{A}_{k_{i}}\right\Vert
\\
\leq
L_0
\Vert \bbT_{0,i}\Vert
.     
\end{equation}

	For the second term in eqn.~(\ref{eq:DpDTboundeqn1}) we start pointing out that the operator $ 
	 D_{p\vert\mathbf{S}_{i}}(\mathbf{S})\left\lbrace\mathbf{P}_{0,i}\right\rbrace
	$ acting on $\bbP_{0,i}$ is  an operator in the space of Endomorphisms of $\text{End}({\ccalM})$, i.e. $D_{p\vert\mathbf{S}_{i}}(\mathbf{S})\left\lbrace\cdot\right\rbrace\in\text{End}(\text{End}(\ccalM))$. Then, if we consider the operator 
	$D_{p\vert\mathbf{S}_{i}}(\mathbf{S})\left\lbrace\cdot\right\rbrace$ inside the class of Hilbert-Schmidt operators, which is \textbf{isometric-isomorphic} to $\text{End}(\ccalM)^{\ast} \otimes \text{End}(\ccalM)$~\cite{conway1994course}, there is a unique linear operator $\overline{D}_{p\vert\mathbf{S}_{i}}(\mathbf{S})
	\left\lbrace 
	\cdot
	\right\rbrace \in \text{End}(\ccalM)^{\ast} \otimes \text{End}(\ccalM)$ with $\Vert \overline{D}_{p\vert\mathbf{S}_i}(\mathbf{S})
	\left\lbrace 
	\cdot
	\right\rbrace\Vert = \Vert D_{p\vert\mathbf{S}_i}(\mathbf{S})\left\lbrace\cdot \right\rbrace \Vert $. This operator is given by~\cite{higham2008functions}
	\begin{equation}
	\overline{D}_{p\vert\mathbf{S}_i}(\mathbf{S})
	\left\lbrace 
	\cdot
	\right\rbrace
	=  
	\left(
	\sum_{k_i =1}^{\infty}
	\sum_{j=1}^{k_i}\left( \bbA_{k_i} \bbS^{k_i-j}_i \right)^{\ast} \otimes \bbS^{j-1}_i
	\right)   
	\left\lbrace 
	\cdot
	\right\rbrace 
	,                 
	\end{equation}
	and it acts on elements of a space $\ccalW\cong \text{End}(\ccalM)$. Notice that an eigenvector of the operator $ \left( \bbA_{k_i}\bbS^{k-j}_i \right)^{\ast} \otimes \bbS^{j-1}_i$ is given by $\bbv_{r}^{\ast}\otimes\bbv_s$, where $\bbv_{r}^{\ast}$ is an eigenvector of $\left( \bbA_{k_i}\bbS^{k_i -j}_i \right)^{\ast}$ and $\bbv_s$ is an eigenvector of $\bbS^{j-1}_i$. Taking into account the spectral theorem we have
	%
	%
	%
	\begin{equation}
	\overline{D}_{p\vert\mathbf{S}_i}(\mathbf{S})
	\left\lbrace 
	\cdot
	\right\rbrace
	=  
	\left(
	\sum_{k_i =1}^{\infty} 
	\sum_{j=1}^{k_i}
	\sum_{r,s=1}^{\infty}A_{k_i}(r)\lambda_{r,i}^{k_i-j}\lambda_{s,i}^{j-1}
	\left( 
	\bbT_{\bbv_r^{\ast}}\otimes\bbT_{\bbv_s}
	\right)
	\right)   
	\left\lbrace 
	\cdot
	\right\rbrace 
	,                 
	\end{equation}
	where $\bbT_{\bbv_r^{\ast}}$ and $\bbT_{\bbv_s}$ are the projection operators associated to the eigenvectors of $\bbS^{\ast}_i$ and $\bbS_i$ respectively, and where
\begin{equation}
A_{k_{i}}(r)
=
\sum_{\substack{\{k_{\ell}\}=0\\ \ell\neq i}}^{\infty}h_{k_{1},\ldots,k_{m}}\prod_{\substack{\ell=1\\ \ell\neq i}}^{m}
\lambda_{r,\ell}^{k_\ell}
\end{equation} 
	Then, the eigenvalues of $\overline{D}_{p\vert\mathbf{S}_i}(\mathbf{S})
	\left\lbrace 
	\cdot
	\right\rbrace$ are given by
	\begin{multline}
	\zeta_{r,s}
	=
	\sum_{k_i =1}^{\infty}
	A_{k_i}(r)
	\sum_{j=1}^{k_i}\lambda_{r,i}^{k_i-j}\lambda_{s,i}^{j-1}
	\\
	=
	\left\lbrace
	\begin{array}{ccc}
	\frac{p(\lambda_{r,i})-p(\lambda_{s,i})}{\lambda_{r,i}-\lambda_{s,i}} & \text{if} & \lambda_{r,i}\neq\lambda_{s,i}\\
	p^{'}(\lambda_{r,i}) & \text{if} & \lambda_{r,i}=\lambda_{s,i}
	,
	\end{array}
	\right. 
	\end{multline}
	where to simplify the notation we used $p(\lambda_{r,i})$ to denote the function  $ p(\lambda_{r,1},\ldots,\lambda_{r,m}) $ evaluating the $i$th position in $\lambda_{r,i}$ and $p(\lambda_{s,i})$ to denote the evaluation of $p$ in the $i$th position in $\lambda_{s,i}$. The term $p^{'}(\lambda_{r,i}) $ indicates the derivative of $p$ with respect to the variable in the $i$th position of the argument.
	
	Since $p$ is $L_1$-integral Lipschitz it follows that $     \left\vert 
	\zeta_{r,s}
	\right\vert
	\leq 
	L_{1}$ and therefore
	\begin{equation}
	    \Vert \overline{\tilde{D}}_{p\vert\mathbf{S}_i}(\mathbf{S})
	    \left\lbrace 
	    \cdot
	    \right\rbrace\Vert 
	    =
	     \Vert 
	             \tilde{D}_{p\vert\mathbf{S}_i}(\mathbf{S})\left\lbrace\cdot \right\rbrace 
	     \Vert
	             \leq
	             L_1
	             .
	\end{equation}

	Now, taking into account that~\cite{conway1994course}  $\Vert \bbA \Vert \leq \Vert \bbA\Vert_{F}$ for any bounded operator $\bbA$ we have
	\begin{equation}
	\Vert 
	\tilde{D}_{p\vert\mathbf{S}_i}(\mathbf{S})
	\left\lbrace
	\bbP_{0,i} \bbS_{i}
	\right\rbrace 
	\Vert
	\leq
	\Vert 
	D_{p\vert\mathbf{S}_i}(\mathbf{S})
	\left\lbrace
	\bbP_{0,i} \bbS_i
	\right\rbrace 
	\Vert_{F}	
	,	
	\end{equation}	
	and taking into account that~\cite{conway1994course} (p. 267)
	\begin{equation}
	\Vert 
	D_{p\vert\mathbf{S}_i}(\mathbf{S})
	\left\lbrace
	\bbP_{0,i}
	\right\rbrace 
	\Vert_{F}
	\leq
	\Vert 
	D_{p\vert\mathbf{S}_i}(\mathbf{S})
	\Vert
	\Vert
	\bbP_{0,i}
	\Vert_{F}	
	,
	\end{equation}	
	we have
	\begin{equation}
	\Vert 
	D_{p\vert\mathbf{S}_i}(\mathbf{S})
	\left\lbrace
	\bbP_{0,i}
	\right\rbrace 
	\Vert
	\leq
	L_0
	\Vert
	\bbP_{0,i}
	\Vert_{F}	
	.
	\end{equation}	
	Finally, since $\Vert\mathbf{P}_{0,i}\Vert_{F}\leq\delta\Vert\mathbf{T}_{0,i}\Vert$, it follows that
	\begin{equation}
	\left\Vert D_{p\vert\mathbf{S}_{i}}(\mathbf{S})\left\lbrace\mathbf{P}_{0,i}\right\rbrace\right\Vert \leq L_{0}\delta\Vert\mathbf{T}_{0,i}\Vert.    
	\end{equation}

For the third term in eqn.~(\ref{eq:DpDTboundeqn1}), we start taking into account that 
\begin{equation}
\sum_{\ell=1}^{\infty}\sum_{k_{i}=\ell}^{\infty}\mathbf{S}_{i}^{k_{i}}\mathbf{A}_{k_{i}}
=
\sum_{k_{i}=1}^{\infty}k_{i}\mathbf{A}_{k_{i}}\mathbf{S}_{i}^{k_{i}}
.   
\end{equation}
We apply the norm product property to obtain
\begin{equation}
\left\Vert\mathbf{T}_{1c,i}\sum_{\ell=1}^{\infty}\sum_{k_{i}=\ell}^{\infty}\mathbf{S}_{i}^{k_{i}}\mathbf{A}_{k_{i}}\right\Vert
\leq
\left\Vert\mathbf{T}_{1c,i}\right\Vert\left\Vert\sum_{k_{i}=1}^{\infty}k_{i}\mathbf{A}_{k_{i}}\mathbf{S}_{i}^{k_{i}}\right\Vert
.
\end{equation}
Now, taking into account that the eigenvalues of $\sum_{k_{i}=1}^{\infty}k_{i}\mathbf{A}_{k_{i}}\mathbf{S}_{i}^{k_{i}}$ are given by

\begin{equation}
\sum_{k_{1},\ldots,k_{m}=0}^{\infty} k_i h_{k_{1}\ldots k_{m}}\lambda_{1,j}^{k_{1}}\ldots \lambda_{i,j}^{k_i }\ldots\lambda_{m,j}^{k_{m}}
=
\left.
\lambda_i
\frac{\partial p}{\partial\lambda_i}
\right\vert_{\boldsymbol{\lambda}_j}
,
\end{equation}
where $\boldsymbol{\lambda}_j = \left( \lambda_{1,j} , \ldots, \lambda_{m,j}\right)$, we have
\begin{equation}
\left\Vert
         \sum_{k_{i}=1}^{\infty}k_{i}\mathbf{A}_{k_{i}}\mathbf{S}_{i}^{k_{i}}
\right\Vert
       =
       \max_{j}
\left\vert 
         \sum_{k_{1},\ldots,k_{m}=0}^{\infty} k_i h_{k_{1}\ldots k_{m}}\lambda_{1,j}^{k_{1}}\ldots \lambda_{i,j}^{k_i }\ldots\lambda_{m,j}^{k_{m}}
\right\vert     
.  
\end{equation}
Since the filters belong to $\mathcal{A}_{L_{1}}$, it follows that
\begin{equation}
\left\Vert
\sum_{k_{i}=1}^{\infty}k_{i}\mathbf{A}_{k_{i}}\mathbf{S}_{i}^{k_{i}}
\right\Vert
\leq 
L_1
.
\end{equation}
Then, with $\Vert \bbT_{1c,i}\Vert = \Vert \bbT_{1,i}\Vert$ we have
\begin{equation}
\left\Vert\mathbf{T}_{1c,i}\sum_{\ell=1}^{\infty}\sum_{k_{i}=\ell}^{\infty}\mathbf{S}_{i}^{k_{i}}\mathbf{A}_{k_{i}}\right\Vert
\leq
L_1
\left\Vert\mathbf{T}_{1,i}\right\Vert
.
\end{equation}

Finally, for the fourth term we use the notation $\tilde{D}(\mathbf{S})\left\lbrace\mathbf{P}_{1,i}
\right\rbrace
=D_{p\vert\mathbf{S}_{i}}(\mathbf{S})\left\lbrace\mathbf{P}_{1,i}\mathbf{S}_{i}\right\rbrace$. We start pointing out that $\tilde{D}(\mathbf{S})\left\lbrace\mathbf{P}_{1,i}
\right\rbrace$ is an operator in $\text{End}(\text{End}(\ccalM))$. Considering $\tilde{D}(\mathbf{S})\left\lbrace\cdot
\right\rbrace$ as a Hilbert-Schmidt operator (finite Frobenius norm), we have that there is a unique linear operator $\overline{\tilde{D}}(\mathbf{S})\left\lbrace\cdot
\right\rbrace \in \text{End}(\ccalM)^{\ast}\otimes\text{End}(\ccalM)$ with $\Vert \overline{\tilde{D}}(\mathbf{S})\left\lbrace\cdot
\right\rbrace\Vert = \Vert\tilde{D}(\mathbf{S})\left\lbrace\cdot
\right\rbrace\Vert$~\cite{conway1994course}. This operator is given by~\cite{higham2008functions}
\begin{equation}
\overline{\tilde{D}}_{p\vert\mathbf{S}_i}(\mathbf{S})
\left\lbrace 
\cdot
\right\rbrace
=  
\left(
\sum_{k_i =1}^{\infty}
\sum_{j=1}^{k_i}\left(\bbA_{k_i} \bbS^{k_i -j+1}_{i} \right)^{\ast} \otimes \bbS^{j-1}_{i}
\right)   
\left\lbrace 
\cdot
\right\rbrace 
,                 
\end{equation}
and it acts on a space $\ccalW \cong \text{End}(\ccalM)$. The eigenvectors of the operator $ \left( \bbA_{k_i}\bbS^{k_i -j}_{i} \right)^{\ast} \otimes \bbS^{j-1}_{i}$ are given by $\bbv_{r}^{\ast}\otimes\bbv_s$, where $\bbv_{r}^{\ast}$ is an eigenvector of $\left( \bbA_{k_i}\bbS^{k_i-j+1}_{i} \right)^{\ast}$ and $\bbv_s$ is an eigenvector of $\bbS^{j-1}_{i}$. Now, using the spectral theorem we also have
\begin{equation}
\overline{\tilde{D}}_{p\vert\mathbf{S}}(\mathbf{S})
\left\lbrace 
\cdot
\right\rbrace
=  
\left(
\sum_{k_i=1}^{\infty}
\sum_{j=1}^{k_i}
\sum_{r,s=1}^{\infty}
A_{k_i}(r)
\lambda_{r,i}^{k_i-j+1}\lambda_{s,i}^{j-1}
\left( 
\bbT_{\bbv_r^{\ast}}\otimes\bbT_{\bbv_s}
\right)
\right)   
\left\lbrace 
\cdot
\right\rbrace 
,                 
\end{equation}
where $\bbT_{\bbv_r^{\ast}}$ and $\bbT_{\bbv_s}$ are the projection operators associated to the eigenvectors $\bbv_r^{\ast}$ and $\bbv_s$ respectively, and where
	\begin{equation}
	A_{k_{i}}(r)
	=
	\sum_{\substack{\{k_{\ell}\}=0\\ \ell\neq i}}^{\infty}h_{k_{1},\ldots,k_{m}}\prod_{\substack{\ell=1\\ \ell\neq i}}^{m}
	\lambda_{r,\ell}^{k_\ell}
	.
	\end{equation} 
Then, the eigenvalues of $\overline{\tilde{D}}_{p\vert\mathbf{S}_i}(\mathbf{S})
\left\lbrace 
\cdot
\right\rbrace$ are given by
\begin{multline}
\zeta_{r,s}
=
\sum_{k_i =1}^{\infty}
A_{k_i}(r)
\sum_{j=1}^{k_i}\lambda_{r,i}^{k_i-j+1}\lambda_{s,i}^{j-1}
\\
=
\left\lbrace
\begin{array}{ccc}
\frac{p(\lambda_{r,i})-p(\lambda_{s,i})}{\lambda_{r,i}-\lambda_{s,i}}\lambda_{r,i} & \text{if} & \lambda_{r,i}\neq\lambda_{s,i}\\
\lambda_{r,i}p^{'}(\lambda_{r,i}) & \text{if} & \lambda_{r,i}=\lambda_{s,i}
,
\end{array}
\right. 
\end{multline}
where to simplify the notation we used $p(\lambda_{r,i})$ to denote the function  $ p(\lambda_{r,1},\ldots,\lambda_{r,m}) $ evaluating the $i$th position in $\lambda_{r,i}$ and $p(\lambda_{s,i})$ to denote the evaluation of $p$ in the $i$th position in $\lambda_{s,i}$. Additionally, $p^{'}(\lambda_{r,i}) $ indicates the derivative of $p$ with respect to the variable in the $i$th position of the argument.

%
%
%
Since $p$ belongs to $\ccalA_{L_1}$ it follows that $\vert \zeta_{r,s}\vert\leq L_1$ and therefore
\begin{equation}
\left\Vert 
\tilde{D}_{p\vert\mathbf{S}_i}(\mathbf{S})
\left\lbrace
\cdot
\right\rbrace 
\right\Vert
=	
\left\Vert 
\overline{\tilde{D}}_{p\vert\mathbf{S}_i}(\mathbf{S})
\left\lbrace 
\cdot
\right\rbrace
\right\Vert 
\leq 
L_1
.
\end{equation}
Since~\cite{conway1994course} $\Vert \bbA \Vert \leq \Vert \bbA\Vert_{F}$ for any bounded operator $\bbA$ we have
\begin{equation}
\Vert 
\tilde{D}_{p\vert\mathbf{S}_i}(\mathbf{S})
\left\lbrace
\bbP_{1,i}
\right\rbrace 
\Vert
\leq
\Vert 
\tilde{D}_{p\vert\mathbf{S}_i}(\mathbf{S})
\left\lbrace
\bbP_{1,i}
\right\rbrace 
\Vert_{F}	
.	
\end{equation}	
Taking into account that~\cite{conway1994course} (p. 267)
\begin{equation}
\Vert 
\tilde{D}_{p\vert\mathbf{S}_i}(\mathbf{S})
\left\lbrace
\bbP_{1,i}
\right\rbrace 
\Vert_{F}
\leq
\Vert 
\tilde{D}_{p\vert\mathbf{S}_i}(\mathbf{S})
\Vert
\Vert
\bbP_{1,i}
\Vert_{F}	
,
\end{equation}	
we have
\begin{equation}
\left\Vert 
\tilde{D}_{p\vert\mathbf{S}_i}(\mathbf{S})\left\lbrace\mathbf{P}_{1,i}\right\rbrace
\right\Vert
\leq 
L_{1}
\Vert
\mathbf{P}_{1,i}
\Vert_{F}  
,  
\end{equation}
and with the fact that $\Vert\mathbf{P}_{1,i}\Vert_{F}\leq\delta\Vert\mathbf{T}_{1,i}\Vert$, it follows that
\begin{equation}
\left\Vert D_{p\vert\mathbf{S}_i}(\mathbf{S})\left\lbrace\mathbf{P}_{1,i}\bbS_i \right\rbrace\right\Vert \leq L_{1}\delta\Vert\mathbf{T}_{1,i}\Vert.    
\end{equation}

Putting all these results together into eqn.~(\ref{eq:DpDTboundeqn1}) we reach
\begin{equation}
\left\Vert D_{p\vert\mathbf{S}_{i}}(\mathbf{S})\left\lbrace\mathbf{T}(\mathbf{S}_{i})\right\rbrace\right\Vert
\leq
(1+\delta)L_{0}\Vert\mathbf{T}_{0,i}\Vert+(1+\delta)L_{1}\Vert\mathbf{T}_{1,i}\Vert
,
\end{equation}
and taking into account that
\begin{equation}
\Vert\mathbf{T}_{0,i}\Vert
\leq
\sup_{\mathbf{S}_{i}\in\mathcal{S}}\Vert\mathbf{T}(\mathbf{S}_{i})\Vert
,
\quad
\Vert\mathbf{T}_{1,i}\Vert
\leq
\sup_{\mathbf{S}_{i}\in\mathcal{S}}\Vert D_{\mathbf{T}}(\mathbf{S}_{i})\Vert
\end{equation}
we finally have that
\begin{multline}
\left\Vert D_{p\vert\mathbf{S}_{i}}(\mathbf{S})\left\lbrace\mathbf{T}(\mathbf{S}_{i})\right\rbrace\right\Vert
\leq
\\
(1+\delta)\left(L_{0}\sup_{\mathbf{S}_{i}\in\mathcal{S}}\Vert\mathbf{T}(\mathbf{S}_{i})\Vert
+
L_{1}\sup_{\mathbf{S}_{i}\in\mathcal{S}}\Vert D_{\mathbf{T}}(\mathbf{S}_{i})\Vert\right)
\end{multline}

\end{proof}

\bibliography{bibliography}

\begin{thebibliography}{10}

\bibitem{mallat_ginvscatt}
St\'ephane Mallat.
\newblock Group invariant scattering.
\newblock {\em Communications on Pure and Applied Mathematics},
  65(10):1331--1398, 2012.

\bibitem{bruna_iscn}
J.~{Bruna} and S.~{Mallat}.
\newblock Invariant scattering convolution networks.
\newblock {\em IEEE Transactions on Pattern Analysis and Machine Intelligence},
  35(8):1872--1886, Aug 2013.

\bibitem{bruna_groupinvrepconvnn}
Joan {Bruna Estrach}, Arthur Szlam, and Yann LeCun.
\newblock Learning stable group invariant representations with convolutional
  networks.
\newblock 1 2013.
\newblock 1st International Conference on Learning Representations, ICLR 2013 ;
  Conference date: 02-05-2013 Through 04-05-2013.

\bibitem{bietti2017_stabilitycnn}
Alberto Bietti and Julien Mairal.
\newblock Invariance and stability of deep convolutional representations.
\newblock In {\em Proceedings of the 31st International Conference on Neural
  Information Processing Systems}, NIPS’17, page 6211–6221, Red Hook, NY,
  USA, 2017. Curran Associates Inc.

\bibitem{zou_stability}
Dongmian Zou and Gilad Lerman.
\newblock Graph convolutional neural networks via scattering.
\newblock {\em Applied and Computational Harmonic Analysis}, 1 2019.

\bibitem{gamabruna_diffscattongraphs}
Fernando Gama, Joan {Bruna Estrach}, and Alejandro Ribeiro.
\newblock Diffusion scattering transforms on graphs.
\newblock 1 2019.
\newblock 7th International Conference on Learning Representations, ICLR 2019 ;
  Conference date: 06-05-2019 Through 09-05-2019.

\bibitem{fern2019stability}
F.~{Gama}, J.~{Bruna}, and A.~{Ribeiro}.
\newblock Stability properties of graph neural networks.
\newblock {\em IEEE Transactions on Signal Processing}, 68:5680--5695, 2020.

\bibitem{algSP0}
Markus P{\"u}schel and Jos{\'e} M.~F. Moura.
\newblock Algebraic signal processing theory.
\newblock {\em ArXiv}, abs/cs/0612077, 2006.

\bibitem{segarragf}
Santiago Segarra, Antonio~G. Marques, and Alejandro Ribeiro.
\newblock Optimal graph-filter design and applications to distributed linear
  network operators.
\newblock {\em IEEE Transactions on Signal Processing}, 65(15):4117--4131,
  2017.

\bibitem{oppenheim2013discrete}
A.V. Oppenheim and R.W. Schafer.
\newblock {\em Discrete-Time Signal Processing: Pearson New International
  Edition PDF eBook}.
\newblock Pearson Education, 2013.

\bibitem{algSP1}
Markus P{\"u}schel and Jos{\'e} M.~F. Moura.
\newblock Algebraic signal processing theory: Foundation and 1-d time.
\newblock {\em IEEE Transactions on Signal Processing}, 56(8):3572--3585, Aug
  2008.

\bibitem{algSP2}
Markus P{\"u}schel and Jos{\'e} M.~F. Moura.
\newblock Algebraic signal processing theory: 1-d space.
\newblock {\em IEEE Transactions on Signal Processing}, 56(8):3586--3599, Aug
  2008.

\bibitem{algSP3}
J.~{Kovacevic} and M.~{P{\"u}schel}.
\newblock Algebraic signal processing theory: Sampling for infinite and finite
  1-d space.
\newblock {\em IEEE Transactions on Signal Processing}, 58(1):242--257, Jan
  2010.

\bibitem{gsp_graphfilters}
A.~{Sandryhaila} and J.~M.~F. {Moura}.
\newblock Discrete signal processing on graphs: Graph filters.
\newblock In {\em 2013 IEEE International Conference on Acoustics, Speech and
  Signal Processing}, pages 6163--6166, 2013.

\bibitem{ortega_gsp}
A.~{Ortega}, P.~{Frossard}, J.~{Kovačević}, J.~M.~F. {Moura}, and
  P.~{Vandergheynst}.
\newblock Graph signal processing: Overview, challenges, and applications.
\newblock {\em Proceedings of the IEEE}, 106(5):808--828, May 2018.

\bibitem{steinbergrepg}
B.~Steinberg.
\newblock {\em Representation Theory of Finite Groups: An Introductory
  Approach}.
\newblock Universitext. Springer New York, 2011.

\bibitem{terrasFG}
A.~Terras.
\newblock {\em Fourier Analysis on Finite Groups and Applications}.
\newblock London Mathematical Society Student Texts. Cambridge University
  Press, 1999.

\bibitem{fulton1991representation}
W.~Fulton, W.F.J. Harris, and J.~Harris.
\newblock {\em Representation Theory: A First Course}.
\newblock Graduate Texts in Mathematics. Springer New York, 1991.

\bibitem{lovaz2012large}
L.~Lov{\'a}sz.
\newblock {\em Large Networks and Graph Limits}.
\newblock American Mathematical Society colloquium publications. American
  Mathematical Society, 2012.

\bibitem{segarragraphon}
M.~{Avella-Medina}, F.~{Parise}, M.~T. {Schaub}, and S.~{Segarra}.
\newblock Centrality measures for graphons: Accounting for uncertainty in
  networks.
\newblock {\em IEEE Transactions on Network Science and Engineering},
  7(1):520--537, 2020.

\bibitem{ruiz2020graphon}
Luana Ruiz, Luiz F.~O. Chamon, and Alejandro Ribeiro.
\newblock Graphon signal processing, 2020.

\bibitem{alej2020graphon}
A.~{Parada-Mayorga}, L.~{Ruiz}, and A.~{Ribeiro}.
\newblock Graphon pooling in graph neural networks.
\newblock In {\em 2020 28th European Signal Processing Conference (EUSIPCO)},
  pages 860--864, 2021.

\bibitem{gamagnns}
F.~{Gama}, A.~G. {Marques}, G.~{Leus}, and A.~{Ribeiro}.
\newblock Convolutional neural network architectures for signals supported on
  graphs.
\newblock {\em IEEE Transactions on Signal Processing}, 67(4):1034--1049, 2019.

\bibitem{deeplearning_book}
I.~Goodfellow, Y.~Bengio, and A.~Courville.
\newblock {\em Deep Learning}.
\newblock Adaptive computation and machine learning. MIT Press, 2016.

\bibitem{Gama2019StabilityOG}
F.~Gama, Joan Bruna, and A.~Ribeiro.
\newblock Stability of graph scattering transforms.
\newblock In {\em NeurIPS}, 2019.

\bibitem{folland2016course}
G.B. Folland.
\newblock {\em A Course in Abstract Harmonic Analysis}.
\newblock Textbooks in Mathematics. CRC Press, 2016.

\bibitem{deitmar2014principles}
A.~Deitmar and S.~Echterhoff.
\newblock {\em Principles of Harmonic Analysis}.
\newblock Universitext. Springer International Publishing, 2014.

\bibitem{repthysmbook}
P.I. Etingof, O.~Golberg, S.~Hensel, T.~Liu, A.~Schwendner, D.V. ~, and
  E.~Yudovina.
\newblock {\em Introduction to Representation Theory}.
\newblock Student mathematical library. American Mathematical Society, 2011.

\bibitem{barot2014introduction}
M.~Barot.
\newblock {\em Introduction to the Representation Theory of Algebras}.
\newblock Springer International Publishing, 2014.

\bibitem{repthybigbook}
M.~Lorenz.
\newblock {\em A Tour of Representation Theory}.
\newblock Graduate studies in mathematics. American Mathematical Society, 2018.

\bibitem{conway1994course}
J.B. Conway.
\newblock {\em A Course in Functional Analysis}.
\newblock Graduate Texts in Mathematics. Springer New York, 1994.

\bibitem{aliprantis2002invitation}
C.D. Aliprantis.
\newblock {\em An Invitation to Operator Theory}.
\newblock Graduate studies in mathematics. American Mathematical Society, 2002.

\bibitem{benyamini2000geometric}
Y.~Benyamini and J.~Lindenstrauss.
\newblock {\em Geometric Nonlinear Functional Analysis}.
\newblock Number v. 48, no. 1 in American Mathematical Society colloquium
  publications. American Mathematical Society, 2000.

\bibitem{lindenstrauss2012frechet}
J.~Lindenstrauss, D.~Preiss, and J.~Ti{\v{s}}er.
\newblock {\em Frechet Differentiability of Lipschitz Functions and Porous Sets
  in Banach Spaces}.
\newblock Annals of Mathematics Studies. Princeton University Press, 2012.

\bibitem{berger1977nonlinearity}
M.S. Berger.
\newblock {\em Nonlinearity and Functional Analysis: Lectures on Nonlinear
  Problems in Mathematical Analysis}.
\newblock Pure and Applied Mathematics. Elsevier Science, 1977.

\bibitem{higham2008functions}
N.J. Higham.
\newblock {\em Functions of Matrices: Theory and Computation}.
\newblock Other Titles in Applied Mathematics. Society for Industrial and
  Applied Mathematics (SIAM, 3600 Market Street, Floor 6, Philadelphia, PA
  19104), 2008.

\end{thebibliography}
\bibliographystyle{unsrt}
%
%
\ifCLASSOPTIONcaptionsoff
  \newpage
\fi

\end{document}